\newtheorem{lemma}{Lemma}[section]
\newtheorem{definition}{Definition}[section]
\newenvironment{proof}{Proof:}{\hfill $\Box$}
\newtheorem{proposition}{Proposition}[section]
\newcommand{\entails}{\rightharpoonup}
\newcommand{\confidence}{\operatorname{conf}}
\newcommand{\support}{\operatorname{supp}}
\newcommand{\interest}{\operatorname{sign}}
\newcommand{\argmax}{\on{arg\,max}}
\renewcommand{\epsilon}{\varepsilon}
\newcommand{\with}{\, | \,}
\newcommand{\given}{\, | \,}
\newcommand{\sothat}{\, : \,}
\newcommand{\on}{\operatorname}
\newcommand{\Prob}{\mathbf{P}}
\newcommand{\cpi}{\boldsymbol{\pi}}
\newcommand{\crho}{\boldsymbol{\rho}}
\newcommand{\ct}{\boldsymbol{t}}
\newcommand{\bbS}{\overline{\mathbb{S}}_K}
\newcommand{\concat}{|}
\begin{document}
\title{\Large\bf MINING RANK DATA}

\author{Sascha Henzgen and Eyke H\"ullermeier\\
Heinx Nixdorf Institute\\
Paderborn University, Germany}

\maketitle

\begin{abstract}
The problem of frequent pattern mining has been studied quite extensively for various types of data, including sets, sequences, and graphs. Somewhat surprisingly, another important type of data, namely rank data, has received very little attention in data mining so far.  In this paper, we therefore addresses the problem of mining rank data, that is, data in the form of rankings (total orders) of an underlying set of items. More specifically, two types of patterns are considered, namely frequent rankings and dependencies between such rankings in the form of association rules. Algorithms for mining frequent rankings and  frequent closed rankings are proposed and tested experimentally, using both synthetic and real data. 
\end{abstract}


\section{Introduction}

The major goal of data mining methods is to find potentially interesting patterns in (typically very large) datasets. The meaning of ``interesting'' may depend on the application and the purpose a pattern is used for. Quite often, interestingness is connected to the \emph{frequency} of occurrence: A pattern is considered interesting if its number of occurrences in the data strongly deviates from what one would expect on average. When being observed much more often, ones speaks of a \emph{frequent pattern}, and the problem of discovering such patterns is called \emph{frequent pattern mining} \cite{han2007,fpm_2014}. The other extreme is outliers and exceptional patterns, which deviate from the norm and occur rarely in the data; finding such patterns is called \emph{exception mining} \cite{suzuki}. 

Needless to say, the type of patterns considered, of measures used to assess their interestingness, and of algorithms used to extract those patterns being highly rated in terms of these measures, strongly depends on the nature of the data. It makes a big difference, for example, whether the data is binary, categorical, or numerical, and whether a single observation is described in terms of a \emph{subset}, like in itemset mining \cite{han2007}, or as a \emph{sequence}, like in sequential pattern mining \cite{Mabroukeh2010}.

In this paper, we consider the problem of mining \emph{rank data}, that is, data that comes in the form of \emph{rankings} of an underlying set of items. This idea is largely motivated by the ubiquity of such data in many domains, such as information retrival, biology, psychology, and economics, as well as the recent emergence of \emph{preference learning} as a novel branch of machine learning \cite{mpub218}. While methods for problems such as ``learning to rank'' have been studied quite intensely in this field, rank data has hardly been considered from a data mining perspective so far. The motivation for mining rank data is manifold:
\begin{itemize}
\item Preferential patterns in the form of regularities, such as ``Chinese hotel guests mostly prefer room category \texttt{premier} to \texttt{studio} to \texttt{deluxe}'', and dependencies, such as ``Customers who prefer a mobile contract with flat rate to a contract without flat rate typically also prefer data volume 10 GB to 2 GB'' can be directly useful in many practical applications. Such applications include, for example, product recommendation, decision aiding, and constructive elicitation \cite{drag_cp17}, where an optimal combinatorial solution (e.g., a contract for a mobile phone) is constructed in a stepwise manner by fixing one decision (e.g., the rate) after the other (e.g., the data volume). In particular, returning a ranking of items or alternatives is often more useful than returning an unordered set, which does not prioritize between the candidates.
\item On a more methodological level, preferential patterns could be used to enhance or generalize other concepts for capturing dependence in statistics and data analysis. For example, just like standard association rules can be considered as a generalization of binary correlation (to directed relationships between multiple items), rules extracted from rank data, such as ``If $A$ precedes $B$ precedes $C$, then $D$ is likely to precede $E$, are directly connected to statistical measures of rank correlation. In contrast to the latter, they are not required to be globally valid (i.e., take all items into consideration). Instead, they may only capture local dependencies between subsets of items, which might be overlooked on a global scale.
\item Likewise, the mining of rank data has many potential applications in machine learning, especially in preference learning. For example, the approach of \emph{associative classification} \cite{thabtah} makes use of standard association rule mining for constructing multiclass classifiers. Roughly speaking, the idea is to find local patterns in the form of high-quality (frequent and confident) rules establishing associations between attribute values and class labels, and to combine a suitable selection of these rules into a global classifier. In very much the same way, preferential patterns in the form of rankings could be used to tackle the problem of \emph{label ranking}---an extension of multiclass classification, in which predictions in the form of rankings of all class labels are sought \cite{desa_ma11}.  
\end{itemize}

To illustrate what we understand by rank data in this paper, consider a version of the well-known SUSHI benchmark, in which 5000 customers rank 10 different types of sushi from most preferred to least preferred.\footnote{http://kamishima.new/sushi/} This data could be represented in the form of a matrix as follows:
\begin{verbatim}
            5  7  3  8  4 10  2  1  6  9
            6 10  1  4  8  7  2  3  5  9
            2  7  3  1  6  9  5  8  4 10
            .  .  .  .  .  .  .  .  .  .
\end{verbatim}
In this matrix, the value in row $i$ and column $j$ corresponds to the position of the $j^{th}$ sushi in the ranking of the $i^{th}$ customer. For example, the first customer likes the eighth sushi the most, the seventh sushi the second best, and so on. 

The above data consists of \emph{complete} rankings, i.e., each observation is a ranking of the complete set of items (the 10 types of sushi). There are many applications in which rank data is less complete, especially if the underlying set of items is large. For example, an incomplete version of the SUSHI data may look like this:
\begin{verbatim}
            2  ?  ?  3  ?  ?  ?  1  ?  ?
            ?  ?  1  3  ?  5  2  ?  4  ?
            1  4  ?  ?  ?  ?  3  5  2  ?
            .  .  .  .  .  .  .  .  .  .
\end{verbatim}
Here, the first customer provides a ranking of only three of the ten sushis, in which the eighth sushi is on position 1, the first on position 2 and the fourth on position 3, but without revealing any preferences on the remaining seven sushis; note that these positions are not considered as absolute (top-3) but relative, i.e., the eighth sushi is not necessarily the best among all ten. 

The problem of mining rank data was introduced in our previous work \cite{Henzgen2014}, albeit for the more restrictive case of complete rankings. Besides, a first algorithm for mining rank patterns in the form of what we call \emph{frequent subrankings} was introduced. The current paper considers the more general problem of mining incomplete rankings, presents an improved algorithm for mining frequent subrankings, as well as an algorithm for mining closed rankings. 



The paper is organized as follows. In the next section, we explain more formally what we mean by rank data and rank patterns, respectively. Then, following a review of related work in Section 3, we reconsider the problem of mining frequent rankings in Section \ref{sec:freq}, where we propose a new algorithm for this task that is more efficient than the one of \cite{Henzgen2014}. In Section \ref{sec:closed}, we introduce the notion of a \emph{closed} ranking and develop an algorithm for mining closed rankings.  Experiments with synthetic and real data, which are mainly meant to analyze the efficiency of our algorithms, are presented in Section \ref{sec:results}, prior to concluding the paper in Section \ref{sec:conclusion}.

\section{Rank data and rank patterns}

Let $\mathbb{O} = \{ o_1, \ldots , o_K \}$ be a set of items or objects. A ranking of these items is a total order that is represented by a permutation
$$
\cpi:\, [K] \longrightarrow [K] \enspace ,
$$
that is, a bijection on $[K] = \{ 1, \ldots , K \}$, where $\cpi(i)$ denotes the position of item $o_i$. Thus, the permutation $\cpi$ represents the order relation
$$
o_{\crho(1)} \succ o_{\crho(2)} \succ \cdots \succ o_{\crho(K)}
\enspace ,
$$
where $\crho = \cpi^{-1}$ is the inverse of $\cpi$, i.e., $\crho(j) = \cpi^{-1}(j)$ is the index of the item on position $j$. Often, $\cpi$ is called a ranking and $\crho$ an order. We denote the set of all rankings of $\mathbb{O}$ (permutations of $[K]$) by $\mathbb{S}_K$.   

A concrete ranking such as $\cpi = [3,1,4,2]$ will be written with brackets, and an order $\crho = (o_2 \succ o_4 \succ o_1 \succ o_3)$ or simply $\crho = (2,4,1,3)$ in parentheses. For the sake of clarity, objects $o_1, o_2, o_3$, etc.\ will often be denoted by letters $a, b, c$, etc.; for example, the previous order would then be $\crho = (b, d, a, c)$. Moreover, we shall often use the term ``ranking'' to refer to both rankings and orders---this should be uncritical due to the one-to-one correspondence between both concepts and, morever, the use of different symbls $\cpi$ and $\crho$. 

\subsection{Complete and incomplete rankings}

For a subset $O \subset \mathbb{O}$, we call a ranking $\pi$ of $O$ an incomplete ranking of $\mathbb{O}$ or a \emph{subranking}. For objects $o_i \in O$, $\pi(i)$ is the position of $o_i$ in the ranking, whereas $\pi(j) = 0$ for $o_j \notin O$. For instance, $\pi = [0,3,0,0,1,2]$ encodes a ranking of the set $\{o_2,o_5,o_6\}$, the corresponding incomplete order or  \emph{suborder} of which is given by $\rho = (5,6,2)$.
In the following, we will write complete rankings $\cpi$ and orders $\crho$ in bold font (as we already did above), whereas rankings $\pi$ and orders $\rho$ written in normal font are (possibly though not necessarily) incomplete. 

The number of items included in a subranking $\pi$ is denoted $|\pi|$; if  $|\pi|=k$, then we shall also speak of a $k$-ranking, and the same notation is used for orders. The set of all $k$-rankings is called $\mathbb{S}_{K,k}$, and the set of all complete and incomplete rankings will be referred to as 
$$
\overline{\mathbb{S}}_K = \bigcup_{k=2}^K \mathbb{S}_{K,k} \, .
$$
Again, for the sake of simplicity, we shall subsequently use the term ``ranking'' in a general way, so that it may refer to both complete and incomplete rankings.

\subsection{Restrictions and extensions}

We denote by $O(\pi)$ the set of items ranked by a subranking $\pi$, i.e., $O(\pi) = \{ i \in [K] \with \pi(i) > 0 \}$. The other way around, if $O' \subset O(\pi)$, then $(\pi|O')$ denotes the \emph{restriction} of the ranking $\pi$ to the set of objects $O'$, i.e., 
$$
(\pi|O')(j) = \left\{
\begin{array}{cl}
\#\{ o_i \in O' \with \pi(i) \leq \pi(j) \} &  \text{ if } o_j \in O' \\
0 &  \text{ if } o_j \notin O'
\end{array} \right. \, .
$$
The notations $O(\rho)$ and $(\rho | O')$ for orders will be used analogously. 

If $\pi$ is a subranking of $O = O(\pi)$, then $\cpi \in \mathbb{S}_K$ is a (linear) extension of $\pi$ if $(\cpi|O) = \pi$; in this case, the items in $O$ are put in the same order by $\cpi$ and $\pi$, i.e., the former is consistent with the latter. We shall symbolize this consistency by writing $\pi \subset \cpi$ and denote by $E(\pi) \subset \mathbb{S}_K$ the set of linear extensions of $\pi$. 

For (incomplete) rankings $\pi$ and $\pi'$, we define $\pi \subset \pi'$ if $E(\pi') \subset E(\pi)$; if $\pi \subset \pi'$, then $\pi$ is a restriction or subranking of $\pi'$, and $\pi'$ is an extension or superranking of $\pi$. Both relations are reflexive and include equality as a special case. In order to express that $\pi$ is a proper subranking of $\pi'$ and $\pi'$ a proper superranking of $\pi$, we shall write $\pi \subsetneq \pi'$; thus, $\pi \subsetneq \pi'$ iff $\pi \subset \pi'$ and $\pi' \not\subset \pi$.

\subsection{Frequent rankings}

Assume data to be given in the form of a set 
\begin{equation}\label{eq:data}
\mathbb{D} = \{ \pi_1 , \pi_2, \ldots , \pi_N \}
\end{equation}
of (possibly incomplete) rankings $\pi_i$ over the set of items $\mathbb{O}$. The tuple $(\mathbb{O}, \mathbb{D})$ plays the role of our ``database'', and each ranking $\pi_i$ is a ``transaction'' in this database. Returning to our example above, $\mathbb{O} = \{ o_1, \ldots , o_{10}\}$ could be the 10 types of sushi, and $\pi_i$ the ranking of some of these sushis by the $i$th customer.

Now, we are ready to define the notion of \emph{support} for a ranking $\pi \in \overline{\mathbb{S}}_K$. In analogy to the well-known problem of itemset mining, this is the relative frequency of observations in the data in which $\pi$ occurs as a subranking:
\begin{equation}\label{eq:supp}
\support(\pi) = \frac{1}{N} \cdot \#   \big\{ \pi_i \in \mathbb{D} \with \pi \subset \pi_i \big\} 
\end{equation}
A \emph{frequent subranking} is a subranking $\pi$ such that 
$$
\support(\pi) \geq \delta \enspace ,
$$
where $\delta$ is a user-defined support threshold. A frequent ranking $\pi$ is \emph{maximal} if all its (proper) superrankings $\pi'$ are non-frequent, i.e., 
$$
\pi \text{ maximal } \quad\equiv\quad  \support(\pi) \geq \delta   \wedge  \, \forall \pi' \sothat ( \pi \subsetneq \pi')  \Rightarrow  ( \support(\pi') < \delta)  \, .
 $$
 Moreover, a frequent ranking $\pi$ is \emph{closed} if all its superrankings $\pi'$ have a lower support, i.e.,
\begin{equation}\label{eq:closed}
\pi \text{ closed } \quad\equiv\quad  \support(\pi) \geq \delta   \wedge  \, \forall \pi' \sothat ( \pi \subsetneq \pi')  \Rightarrow  \big( \support(\pi') < \support(\pi)  \big)  \, .
\end{equation}
Obviously, the notions of support, maximality, and closedness, which have been introduced here for rankings, can be used for orders in exactly the same way. 

\subsection{Association rules}

Association rules are well-known in data mining and have first been considered in the context of \emph{itemset mining}. Here, an association rule is a pattern of the form $I \entails J$, where $I$ and $J$ are itemsets. The intended meaning of such a rule is that a transaction containing $I$ is likely to contain $J$, too. In market-basket analysis, where a transaction is a purchase and items are associated with products, the association $\{ \mathtt{paper}, \mathtt{envelopes} \} \entails \{ \mathtt{stamps} \}$ suggests that a purchase containing paper and envelopes is likely to contain stamps as well.

Rules of that kind can also be considered in the context of rank data. Here, we look at associations of the form 
\begin{equation}\label{eq:ar}
\pi_A \entails \pi_B \enspace ,
\end{equation}
where $\pi_A, \pi_B \in \overline{\mathbb{S}}_K$ are rankings i.e., subrankings of $\mathbb{O}$.

For example, the rule $[3,1,0,0,2] \, \entails \, [0,0,2,1,0]$, or equivalently $(b, e, a) \, \entails \, (d , c)$ in terms of corresponding orders,  suggests that if $b$ ranks higher than $e$, which in turn ranks higher than $a$, then $d$ tends to rank higher than $c$. Note that this rule does not make any claims about the order relation between items in the antecedent and the consequent part. For example, $d$ could rank lower but also higher than $b$. In general, the (complete) rankings $\cpi$ that are consistent with a rule (\ref{eq:ar}) is given by $E(\pi_A) \cap E(\pi_B)$.

\subsubsection{Quality measures}
In itemset mining, the confidence measure
$$
\confidence(I \entails J) = \frac{\support(I \cup J)}{\support(I)}
$$
that is commonly used to evaluate association rules $I \entails J$ can be seen as an estimation of the conditional probability 
$$
\Prob(J \given I) = \frac{\Prob(I \text{ and } J)}{\Prob(I)} \enspace ,
$$ 
i.e., the probability to observe itemset $J$ given the occurrence of itemset $I$. Correspondingly, we define the confidence of an association $\pi_A \entails \pi_B$ as
\begin{align}\label{eq:arconf}
\confidence(\pi_A \entails \pi_B) & = \frac{\#\{ \pi_i \in \mathbb{D} \with \pi_A , \pi_B \subset \pi_i \}}{ \#\{ \pi_i \in \mathbb{D} \with \pi_A  \subset \pi_i \}} = \frac{\support(\pi_A \oplus \pi_B)}{\support(\pi_A)} \, , 
\end{align}
where
\begin{align}\label{eq:union}
\pi_A \oplus \pi_B & 
= \Big\{ \pi \,\big\vert\,  O(\pi) = O(\pi_A) \cup O(\pi_B), \, (\pi | O(\pi_A))= \pi_A, \, (\pi | O(\pi_B))= \pi_B \Big\}\\
& = \big( E(\pi_A) \cap E(\pi_B) \,\big\vert\, O(\pi_A) \cup O(\pi_B)  \big) \nonumber
\end{align}
and
\begin{equation}
\support( \pi_A \oplus \pi_B)   = \# \big\{ \pi_i \in \mathbb{D} \with \, \exists \, \pi \in \pi_A \oplus \pi_B \sothat \pi \subset \pi_i \big\} 
\enspace .
\end{equation}
According to (\ref{eq:union}), $\pi_A \oplus \pi_B$ is the set of all consistent combinations of $\pi_A$ and $\pi_B$. Note that, if both $\pi_A \subset \pi$ and $\pi_B \subset \pi$, then at least one of these combinations must indeed occur in $\pi$; this is why the second equality in (\ref{eq:arconf}) holds. 

As an important difference between mining itemsets and mining rank data, note that the class of patterns is closed under  conjunction in the former but not in the latter case: Requiring the simultaneous occurrence of itemset $I$ \emph{and} itemset $J$ is equivalent to requiring the occurrence of their union $I \cup J$, which is again an itemset. As opposed to this, the conjunction (\ref{eq:union}) of two rankings $\pi_A$ and $\pi_B$ is not again a ranking, but a \emph{set} of rankings, namely all consistent combinations of $\pi_A$ and $\pi_B$.
As we shall see later on, this has an implication on an algorithmic level.

Finally, and again in analogy with itemset mining, we can define a measure of \emph{interest} or \emph{significance} of an association as follows:
\begin{equation}\label{eq:arint}
\interest(\pi_A \entails \pi_B) = \confidence(\pi_A \entails \pi_B) - \support(\pi_B)
\end{equation}
Just like for the measure of support, one is then interested in reaching  certain thresholds, i.e., in finding association rules $\pi_A \entails \pi_B$ that are highly supported, confident, and/or significant.

\subsubsection{Simplifying association rules} 

In itemset mining, the antecedent $I$ and consequent $J$ of an association rules $I \entails J$ must be disjoint $(I \cap J = \emptyset)$ to avoid trivial dependencies. In fact, assuming an item $a$ in the rule antecedent trivially implies its occurrence in all transactions to which this rule is applicable. In our case, this is not completely true, since a subranking is modeling relationships between items instead of properties of single items. For example, a rule such as $(a , b) \entails (a , c)$ is not at all trivial, although the item $a$ occurs on both sides. On the other hand, $(a,b,c) \entails (b,e)$ is equivalent to $(a,b,c) \entails (a,b,e)$ in the sense that both rules share the same set of positive examples, namely $(a,b,c,e)$ and $(a,b,e,c)$. Hence, the item $a$ can be removed from the consequent part without changing the meaning of the association rule. Or, stated differently, $a$ is a redundant consequence in the longer rule $(a,b,c) \entails (a,b,e)$. 
In contrast to this example, $a$ is not redundant in the rule $(a,b,c) \entails (a,e,b)$, because the shorter rule $(a,b,c) \entails (e,b)$ adds $(e,a,b,c)$ as a consistent ranking. 
Needless to say, our interest is to remove redundancy from association rules, making their consequent parts as simple and as short as possible, though without loosing equivalence to the original rule. In the following, we always assume an association rule $\pi_A \entails \pi_B$ to be consistent in the sense that $\pi_A \oplus \pi_B \neq \emptyset$, and non-trivial in the sense that $O(\pi_B) \not\subseteq O(\pi_A)$.


\begin{definition}
An item $i$ is called redundant in an association rule $\pi_A \entails \pi_B$ if $i \in O(\pi_B)$ and
$\pi_A \oplus \pi_B = \pi_A \oplus \pi_B'$, where $\pi_B' = \pi_B|(O(\pi_B)\setminus\{i\})$. An association rule $\pi_A \entails \pi_B$ is called redundant-free if not including any redundant item.
\label{def:redu}
\end{definition}

\begin{lemma}
An item $i \in O(\pi_B)$ is redundant in a rule $\pi_A \entails \pi_B$ if and only if the following holds: $i \in O(\pi_A)$, and in the consequent part $\pi_B$, $i$ is not directly adjacent to (i.e., immediately preceded or followed by) an item $j \in O(\pi_B) \setminus O(\pi_A)$.
\label{lem:conRedundancy}
\end{lemma}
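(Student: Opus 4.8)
The plan is to translate the set equation $\pi_A \oplus \pi_B = \pi_A \oplus \pi_B'$ into a statement about a partial order and its linear extensions, and then to characterize the dropped constraints by reachability in the associated directed graph. Throughout I write $A = O(\pi_A)$, $B = O(\pi_B)$, $S = A \cup B$, and I use $\succ$ for ``precedes / ranked higher'' as in the paper.

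First I would record two easy reductions. Since $\pi_B' = \pi_B|(B\setminus\{i\})$ imposes exactly the constraints of $\pi_B$ except those relating $i$ to the remaining items of $B$, every combination consistent with $\pi_A$ and $\pi_B$ is also consistent with $\pi_A$ and $\pi_B'$; hence $\pi_A \oplus \pi_B \subseteq \pi_A \oplus \pi_B'$ holds unconditionally, and $i$ is redundant iff the reverse inclusion holds as well. Now the elements of $\pi_A \oplus \pi_B$ are rankings of $S$, while those of $\pi_A \oplus \pi_B'$ are rankings of $A \cup (B\setminus\{i\})$, so equality forces these ground sets to coincide, which (as $i \in B$) happens precisely when $i \in A$. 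This already yields the first half of the claimed condition, so from now on I assume $i \in A$ and the common ground set $S$.

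Next I would view $\pi_A \oplus \pi_B$ as the set of linear extensions over $S$ of the partial order $P$, the transitive closure of the union of the total orders $\pi_A$ (on $A$) and $\pi_B$ (on $B$); consistency $(\pi_A \oplus \pi_B \neq \emptyset)$ guarantees that $P$ is acyclic and that $\pi_A$ and $\pi_B$ agree on every pair in $A \cap B$ (a disagreement would create a $2$-cycle). Passing to $\pi_B'$ deletes only the constraints between $i$ and the other elements of $B$; for $j \in A \cap B$ such a constraint is still enforced by $\pi_A$, so the whole question reduces to whether, for every $j \in B \setminus A$, the $\pi_B$-constraint between $i$ and $j$ is already implied by $P' = \pi_A \cup \pi_B'$. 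The key step is a reachability criterion: for $j \in B\setminus A$ lying after $i$ in $\pi_B$, the relation $i \succ j$ is implied by $P'$ if and only if some $a \in A \cap B$ lies strictly between $i$ and $j$ in $\pi_B$ (and symmetrically for $j$ before $i$). The ``if'' direction is transitivity through such an $a$, using $a \in B\setminus\{i\}$ to inherit $a \succ j$ from $\pi_B'$ and $i \succ a$ from $\pi_A$ via the agreement on $A \cap B$. The ``only if'' direction uses that $i$ has no outgoing $\pi_B'$-edge, so any $i$-to-$j$ path must leave $i$ along a $\pi_A$-edge and can enter the $\pi_B'$-chain leading to $j$ only through some common item $a \in A \cap B$ with $i \succ_{\pi_A} a$ (hence $i \succ_{\pi_B} a$) and $a \succ_{\pi_B} j$, i.e.\ an $a$ strictly between $i$ and $j$.

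Finally I would assemble the equivalence. If $i$ is not adjacent in $\pi_B$ to any member of $B\setminus A$, then for each $j \in B\setminus A$ the immediate $\pi_B$-neighbour of $i$ on the side of $j$ lies in $A \cap B$ and, because that neighbour cannot itself equal $j$, sits strictly between $i$ and $j$; thus every dropped constraint is implied and $i$ is redundant. Conversely, if $i$ is adjacent in $\pi_B$ to some $j \in B\setminus A$, then no item of $A \cap B$ can lie strictly between the adjacent pair, so $i$ versus $j$ is not implied by $P'$; I would certify non-redundancy by observing that $i$ and $j$ are in fact incomparable in $P'$ (were $j \succ i$ forced, the same path would survive in the finer acyclic order $P$ and contradict $i \succ j$ there), so some linear extension of $P'$ places $j$ before $i$, lies in $\pi_A \oplus \pi_B'$ but not in $\pi_A \oplus \pi_B$. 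The main obstacle is the reachability criterion, specifically arguing cleanly that a path into $j \in B\setminus A$ cannot bypass a bridging item of $A \cap B$; this is exactly where consistency and the agreement of $\pi_A$ and $\pi_B$ on $A \cap B$ do the real work.
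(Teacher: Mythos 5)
Your proposal is correct, but it follows a genuinely different route from the paper's proof. The paper stays elementary and constructive: for sufficiency it notes that when both immediate neighbours $a, b$ of $i$ in $\pi_B$ lie in $O(\pi_A)$, the antecedent already forces $a \succ i \succ b$, so $i$'s position is pinned down and deleting it from the consequent admits no new consistent ranking; for necessity it builds an explicit witness, taking a ranking in $\pi_A \oplus \pi_B$ in which $i$ and the adjacent item $j \notin O(\pi_A)$ are neighbours and swapping them, which preserves consistency with $\pi_A$ and $\pi_B'$ (as $j$ is unconstrained by the antecedent) but violates $\pi_B$. You instead work order-theoretically: $\pi_A \oplus \pi_B$ becomes the set of linear extensions of $P$, the transitive closure of $\pi_A \cup \pi_B$; redundancy becomes equality of transitive closures; the heart of the argument is a reachability criterion identifying exactly which dropped constraints are implied by $P'$; and the witness for non-redundancy comes from ordering the incomparable pair $\{i,j\}$ the wrong way in some linear extension (Szpilrajn) rather than from an explicit swap. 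Your route buys rigour precisely where the paper is tersest: the paper never spells out why constraints between $i$ and \emph{non-adjacent} items of $O(\pi_B) \setminus O(\pi_A)$ are recovered after deletion (transitivity through the adjacent double-item), which your criterion handles uniformly, and it isolates cleanly where consistency and the agreement of $\pi_A$ and $\pi_B$ on common items enter. The cost is machinery plus one step you should tighten: in the ``only if'' half of the reachability claim, the path from the bridging item $a$ to $j$ may itself alternate between $\pi_A$- and $\pi_B'$-edges, so $a \succ_{\pi_B} j$ does not follow from a single $\pi_B'$-chain; it requires noting that $P' \subseteq P$ and that $P$, being acyclic, restricts to $\pi_B$ on $O(\pi_B)$. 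You flag this as the main obstacle, and that is indeed where the work lies; the paper's adjacent-swap argument sidesteps it entirely.
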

\begin{proof}
First, note that removing an item from the consequent part of an association rule $\pi_A \entails \pi_B$ can only increase (but never decrease) the set of rankings $\pi_A \oplus \pi_B$ consistent with that rule. Moreover, it is obvious that an item $i \not\in O(\pi_A) \cap O(\pi_B)$, which only occurs in the consequent but not in the antecedent part, cannot be redundant (as its removal would even change the set of items). Thus, to remove redundancy, one can focus on items $i \in O(\pi_A) \cap O(\pi_B)$.

Consider such an item, and suppose the condition of the lemma to hold. Thus, we have a rule of the form $(\ldots, i , \ldots ) \entails (\ldots, a, i, b , \ldots )$ or $(\ldots, i , \ldots ) \entails (i, b , \ldots )$ or $(\ldots, i , \ldots ) \entails (\ldots, a, i)$, where both $a$ and $b$ also occur in the antecedent part. Then, the orders $a \succ i$ and $i \succ b$ are already implied by the rule antecedent, so that $i$ can be removed from the consequent part without adding any additional consistent ranking. 

Now, suppose the condition of the lemma does not hold. Thus, we have a rule $\pi_A \entails \pi_B$ of the form $(\ldots, i , \ldots ) \entails (\ldots, i, j , \ldots )$ or $(\ldots, i , \ldots ) \entails (\ldots, j, i, \ldots )$, where $j$ does not occur in the antecedent part. Removing $i$ as a consequent yields a rule $\pi_A \entails \pi_B'$ of the form $(\ldots, i , \ldots ) \entails (\ldots, j , \ldots )$. Consider the first case, where $\pi_B = (\ldots, i, j , \ldots )$. Since $j$ does not occur in $\pi_A$, and hence its position is only constrained by the rule consequent, there is a ranking $\pi \in \pi_A \oplus \pi_B$ in which $i$ and $j$ are directly neighbored, i.e., a ranking of the form $\pi = (\ldots , i,j, \ldots)$. The ranking $\pi'$ obtained from $\pi$ by swapping the positions of $i$ and $j$, i.e., $\pi' = (\ldots , j, i, \ldots)$, is consistent with $\pi_A \entails \pi_B'$ but not with $\pi_A \entails \pi_B$. Thus, $i$ cannot be removed from the consequent part. A similar argument applies to the second case, where $\pi_B = (\ldots,  j , i, \ldots )$.
\end{proof}

The above lemma suggests an easy way of simplifying association rules, so as to finally produce a non-redundant representation: As long as there are ``double-items'' $i$, i.e., items that occur both in the antecedent and consequent part, which are only neighbored by other double-items in the consequent, eliminate these items from the consequent. The only remaining question concerns the uniqueness of the rule eventually produced by a sequence of such simplifications. The answer to this question is affirmative, because the elimination of a redundant item does not influence the redundancy status of any other item: Since removing $i$ from $(\ldots, a, i, b , \ldots )$ yields $(\ldots, a, b , \ldots )$, only the status of $a$ and $b$ could be influenced. However, the right neighbor of $a$ is still a double-item, as before, and the left neighbor of $b$ is still a double-item, too. Thus, the order in which redundant items are eliminated does not matter, and the simplified rule eventually produced in unique.

\section{Related work}

The mining of rank data has connections to other frequent pattern mining problems, notably itemset mining and sequence mining. In this section, we provide a brief overview of approaches in these fields and position the mining of rank data as in-between itemset and sequence mining.

\subsection{Itemset mining}

\begin{table}
	\centering
	\caption{Database consisting of four itemsets.}
		\begin{tabular}{|c|c|c|}\hline
			itemset\_id & itemset & (ordered) frequent items\\\hline
			10 & $\{a,e,f,g\}$ &	$e$, $f$, $a$, $g$	\\\hline
			20 & $\{a,e,g,h \}$ 		& $e$, $a$, $g$, $h$			\\\hline
			30 & $\{c,d,e,f\}$ 		& $e$, $f$				\\\hline
			40 & $\{f,h\}$ 				& $f$, $h$  			\\\hline
		\end{tabular}
		\label{tab:iDB}
\end{table}

Given a set of objects/items $\mathbb{O} = \{o_1, o_2, \ldots ,o_K \}$, an itemset is a subset $I \subseteq \mathbb{O}$. An itemset database $\mathbb{D}$ is a collection of itemsets, also called transactions, each of which has a unique identifier (see Table \ref{tab:iDB} for an example). The support of an itemset $J$ is the number of itemsets $I \in \mathbb{D}$ such that $J \subset I$, and the itemset is frequent if its support exceeds a user-defined threshold. 

One of the earliest algorithms for mining frequent itemsets is Apriori introduced by Agrawal \cite{Agrawal1994} in 1994.  Apriori finds frequent itemsets in a level-wise manner, starting with singletons, then itemsets of size 2, size 3, and so forth. In each iteration, candidates of size $k+1$ are constructed from frequent itsemsets of size $k$; then, their frequency is counted by making one pass over the database. The efficiency of Apriori is mainly due to an effective pruning strategy: Since frequency is monotone decreasing on any chain of sets, each superset of a non-frequent itemset is necessarily non-frequent, too; or, stated differently, an itemset is disqualified as a candidate unless all its subsets have been confirmed to be frequent.

A major drawback of Apriori-like algorithms is the large number of database scans that are required to count frequencies. In \cite{Han2000}, Han et al.\ propose a technique called FP-growth. To avoid multiple scans, the database is first transformed into a compact representation called FP-tree, for which only two scans are needed. The first scan is used to find all frequent items, and to transform every transaction into an ordered list of frequent items. This list is used to build the FP-tree (see Figure \ref{fig:FPgrowth}). Except the root node, which is an empty node, every node contains a name (of an object) and a count. The actual mining process is now done on the FP-tree instead of the original database. Roughly, the idea is to build an item-conditional FP-tree for every item in the header table, and to repeat the procedure recursively. For example, the conditional pattern base of ``$g$'' is $\{(e\!:\!1,a\!:\!1),(e\!:\!1,f\!:\!1,a\!:\!1)\}$, and the conditional FP-tree only consist of the path $(e\!:\!2,a\!:\!2)$ with header table $(e, a)$ (Figure \ref{fig:FPgrowth}).

\begin{figure}
	\centering
		\includegraphics[width = 0.6\textwidth]{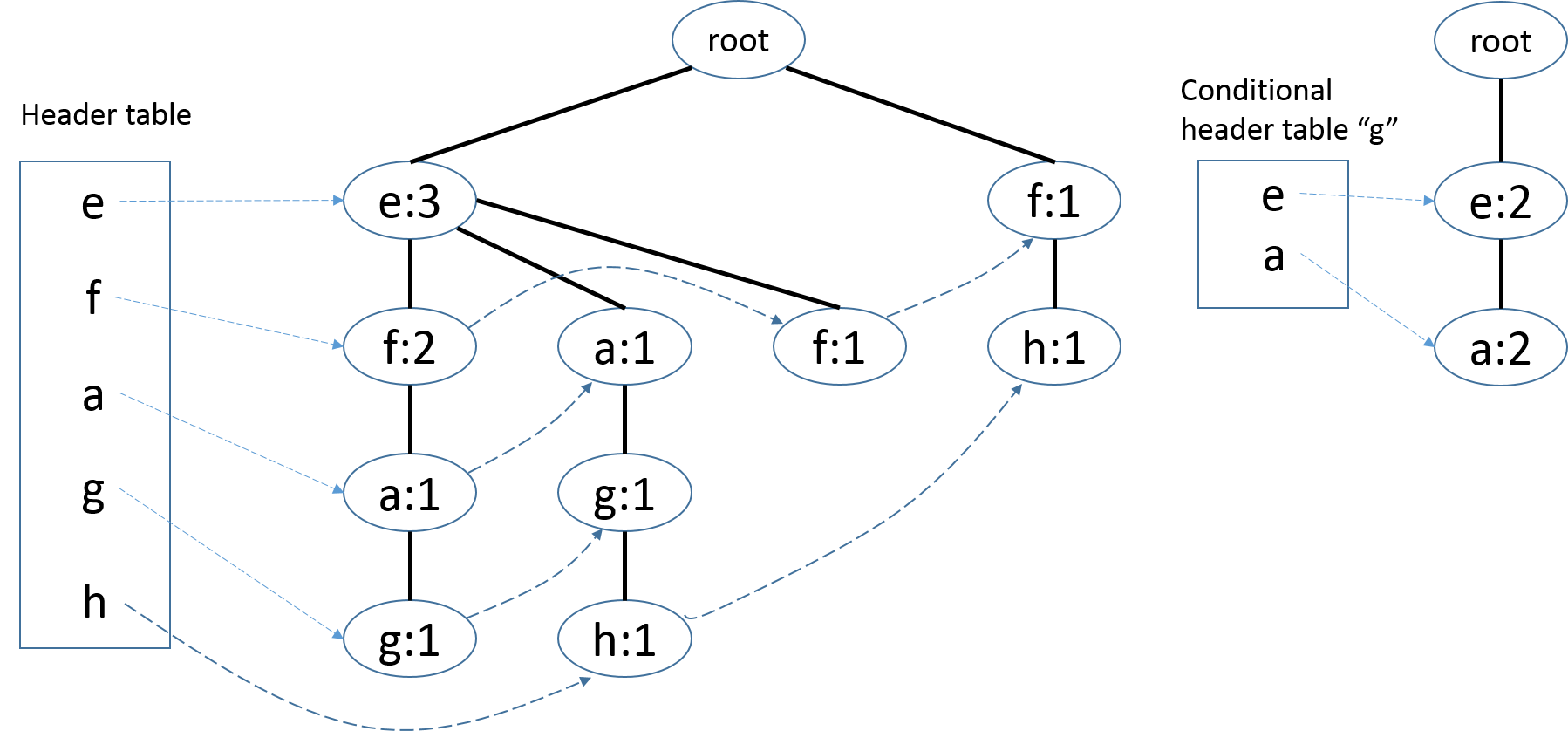}
		\caption{FP-tree (left) and "g"-conditional FP-tree (right)}
	\label{fig:FPgrowth}
\end{figure}

In the same year, Zaki \cite{Zaki2000} proposed different algorithms that are all based on two main ideas. The first is to transform the horizontal database, in which a set of items is given for every transaction, into a vertical database, in which a set of transactions is given for every item, namely those transactions in which the item is contained.  Let $L(J)$ be the set of all transactions containing itemset $J$. Using lattice theory \cite{Davey2002}, Zaki could show that $|L(J)| = \cap_{o_i \in J} L(o_i)$. Thus, the frequency of an itemset can be determined without scanning the entire database. The second idea is to divide the space of all itemsets, which form a Boolean lattice, into equivalence classes to reduce the amount of main memory needed during the mining process. Each equivalence class is again a Boolean lattice that can be mined separately.  The algorithms differ in the way they traverse the lattice. Eclat, for instance, uses a bottom-up search and recursive decomposition of the equivalence classes.

A-Close by Pasquier et al.\ \cite{Pasquier1999} is an algorithm to mine closed itemsets. It is based on the idea of mining generators and uses an Apriori-like strategy to produce them in a level-wise manner. A generator $G$ of a closed itemset $C$ is an itemset with $h(G) = C$, where $h: \, 2^{\mathbb{O}} \rightarrow 2^{\mathbb{O}}$ is the Galois connection.
First, all generators of length 1 are found. In the $i$-th iteration, all $(i-1)$-generators with the same $(i-2)$-prefix are combined to produce level $i$-generator candidates. These candidates are pruned in three steps:
(\emph{i}) All candidates with at least one $(i-1)$-subset that is not an $(i-1)$-generator are removed.
(\emph{ii}) If a candidate is not frequent, it is pruned.
(\emph{iii}) All candidates $G$ with an $(i-1)$-subset $J$ such that $h(G) = h(J)$ are pruned. 
When all generators are mined, all closed itemsets are generated by applying the Galois operation.

Pei et al.\ \cite{Pei2000a} extend FP-growth to mine closed itemsets and call the new algorithm CLOSET. Their approach is essentially based on two observations. First, if $C$ is a frequent closed itemset, then there is no item appearing in every transaction in the $C$-conditional database. Second, if an itemset $J$ is the set of items appearing in every transaction in the $C$-conditional database, then $C \cup J$ is a frequent closed itemset unless it is a subset of another closed itemset with the same support. In addition, CLOSET uses several optimization strategies to efficiently identify frequent closed itemsets and to reduce the search space.

CHARM by Zaki and Hsiao \cite{Zaki2002} exploits the Galois connection to mine closed itemsets. The algorithm operates on a prefix-tree representation of the itemset space $2^{\mathbb{O}}$. For this purpose, an order is defined on the items (for example, a lexicographic order). Every node in the tree is annotated with the set of transactions $t(J)$ containing the corresponding itemset $J$. CHARM starts the mining process with the smallest (in the sense of the specified order) item and tries to combine it with every other item. Let $I$ and $J$ be two itemsets to be combined, and $t(I) \cap t(J) \geq \delta$. Four cases are distinguished:
(\emph{i}) If $t(I) = t(J)$, every occurrence of $I$ can be replaced by $I \cup J$, and the $J$-branch can be pruned.
(\emph{ii}) If $t(I) \subset t(J)$, then $I$ can be replaced by $I \cup J$.
(\emph{iii}) If $t(I) \supset t(J)$, then $J$ can be replaced by $I \cup J$.
(\emph{iv}) If $t(I) \neq t(J)$, then $J \cup I$ is added to the tree, but neither $I$ nor $J$ can be pruned.
The process guarantees that all closed itemsets remain in the end.

\subsection{Sequence mining}

Let $\mathbb{O} = \{o_1, \ldots , o_K\}$ be a set of items. A sequence $s = \langle I_1, I_2, \ldots , I_n \rangle$ is an ordered set of itemsets $I_k \subseteq \mathbb{O}$.  A sequence database is a collection of such sequences, each of which has a unique identifier; see Table \ref{tab:FreeSpan} for an example. Given two sequences $s  = \langle I_1, I_2, \ldots , I_n \rangle$ and $t  = \langle J_1, J_2, \ldots , J_m \rangle$, $s$ is called a subsequence of $t$ if $n \leq m$ and there are indices $i_1 < i_2 < \ldots <i_n $ such that $I_k \subseteq J_{i_k}$ for every $k \in [n]$. A sequence $s$ is frequent if it is a subsequence of at least $\delta$ sequences in the database, where $\delta$ is a (user-defined) support threshold.
Additionally, given a sequence $s$ and an item $y$, we call $s \circ_i y = \langle(I_1)\ldots (I_n) \cup y\rangle$ an itemset-extension and $s \circ_s y = \langle(I_1)\ldots (I_n) (y)\rangle$ a sequence-extension. 

Agrawal and Srikant introduced the idea of sequence mining in 1995 \cite{Agrawal1995}. They present three mining algorithms: AprioriAll, AprioriSome, and DynamicSome. All three algorithms are adaptations of Apriori and exploit monotonicity for pruning, but also inherit the main disadvantage of this algorithm, namely the possibly large number of database scans that are  needed for counting the frequency of candidates.

One year later, Srikant and Agrawal published another Apriori-based algorithm called GSP (Generalized Sequential Patterns) \cite{Srikant1996}. GSP joins the set of frequent $(k-1)$-sequences $L_{k-1}$ with itself to generate candidates for the set $L_k$ of frequent $k$-sequences. More concretely, two sequences $s, t \in L_{k-1}$ are joined if they can be equalized by dropping one item of the first element in $s$ and one item of the last element in $t$. For every candidate thus produced, it is checked whether all contiguous $(k-1)$-subsequences are frequent (a contiguous subsequence of $s  = \langle I_1, I_2, \ldots , I_n \rangle$ is of the form $ \langle I_i, I_{i+1}, \ldots , I_{i+j} \rangle$ for some $i$ and $j$ such that $1 \leq i \leq i+j \leq n$). Yet another Apriori-based algorithm is PSP by Masseglia et al.\ \cite{masseglia1998, masseglia2000}. While being similar to GSP,  it makes use of a prefix-tree.

Zaki \cite{Zaki2001} introduces the algorithm SPADE, which can be seen as an adaptation of Eclat \cite{Zaki2000} to sequence mining. Like Eclat, SPADE represents the database in terms of a vertical transaction id-list and traverses a sequence lattice to find all frequent sequences. Given a $k$-sequence $s$, SPADE joins the id-lists of the two $(k-1)$-subsequences of $s$ with their shared $(k-2)$-prefix. If the cardinality of this join is greater or equal to the support threshold, certain lattice-theoretical properties imply that $s$ is frequent, too.  An algorithm similar to SPADE is SPAM by Ayres et al.\ \cite{Ayres2002}. Instead of regular and temporal joins, SPAM uses bitwise operations.

The algorithms so far can be seen as Apriori-like generete-and-test approaches. Another sort of algorithm tries to grow frequent patterns more directly without a candidate generation step. Early representatives of such pattern-growth algorithms are FreeSpan \cite{Han2000a} and PrefixSpan  \cite{Pei2001, Pei2004}, both using projected databeses, as well as WAP-mine \cite{Pei2000} and FS-Miner \cite{El-Sayed2003} based on tree-projection. In the following, we briefly outline the basic ideas of FreeSpan and PrefixSpan.

\begin{table}
	\centering
	\caption{Examplary sequence database. For simplicity, comma-separation between items is omitted, itemsets are written in brackets, and brackets are omitted for singletons.}
		\begin{tabular}{|c|c|}\hline
			Sequence\_id & Sequence 								\\\hline
			10 & $\langle a(abc)(ac)d(cf) \rangle$ 	\\\hline
			20 & $\langle (ad)c(bc)(ae)\rangle$ 		\\\hline
			30 & $\langle (ef)(ab)(df)cb\rangle$ 		\\\hline
			40 & $\langle eg(af)cbc\rangle$ 				\\\hline
		\end{tabular}
		\label{tab:FreeSpan}
\end{table}

Let $\mathbb{D}$ be the database given in Table \ref{tab:FreeSpan}. An element of $\mathbb{D}$ is a tuple $(id,s)$, where $s$ is a sequence and $id$ its unique identifier. FreeSpan tries to reduce the effort of database scans by dividing the database into smaller pieces by so-called database projections. In a first database scan, a list of all frequent items, called \textit{f\_list}, is build in decreasing order of frequency. For $\mathbb{D}$ in our example, $f\_list = [ a\!:\!4, b\!:\!4, c\!:\!4, d\!:\!3, e\!:\!3, f\!:\!3 ]$. Given $f\_list = [ x_1,x_2,\ldots,x_n ]$, the search space $\mathcal{S}$ is divided into $n$ disjoint subspaces $S_{\langle x_k\rangle} = \{ s \in \mathcal{S} \, | \, x_k \subset s, \forall x_l \in O(s): l < k\}$. For every subspace $\mathcal{S}_{\langle x_k\rangle}$, a level-2 projected database $\mathbb{D}_{\langle x_k\rangle} = \{ s \in \mathbb{D} \, | \, \langle x_k \rangle \subset s \}$ is created, where infrequent items and items succeeding $\langle x_k \rangle$ are ignored. Each level-1 projected database is scaned to find the length-2 frequent sequences contained in it. In our example, $\mathbb{D}_{\langle a\rangle} = \{ \langle aaa\rangle, \langle aa\rangle, \langle a\rangle, \langle a\rangle \}$ with $\langle aa\rangle : 2$ as the only length-2 sequence. To get length-3 frequent sequences, a level-2 projected database is build for every frequent lenght-2 sequence.  For example, $\mathbb{D}_{\langle aa\rangle} = \{ \langle aaa\rangle, \langle aa\rangle \}$.
Since there is no length-3 sequence with support threshold $\delta$, the mining process will stop in this branch.
If we follow the $\langle c\rangle$-branch, we first get $\mathbb{D}_{\langle c\rangle} = \{\langle a(abc)(ac)c\rangle,\langle ac(bc)a\rangle),\langle (ab)cb \rangle,\langle acbc\rangle \}$ and a scan results in the set $\{\langle ac\rangle \!:\!4,\langle cc\rangle\!:\!3, \langle bc\rangle\!:\!3,\langle cb\rangle\!:\!3,\langle (bc)\rangle\!:\!2,\langle ca\rangle\!:\!2\}$ of length-2 sequences. To get length-3 frequent sequences, a level-2 projected database is again  build for every frequent length-2 sequence.  In our example, $\mathbb{D}_{\langle ac\rangle} = \{ \langle a(abc)(ac)c\rangle,\langle ac(bc)a\rangle,\langle (ab)cb \rangle,\langle acbc\rangle \}$ with $\{\langle acb\rangle\!:\!3,\langle acc\rangle\!:\! 3,\langle (ab)c\rangle\!:\!2,\langle aca\rangle\!:\!2\}$ the set of frequent lenght-3 sequences. This procedure is repeated until the set of frequent sequences is empty for every branch.

PrefixSpan adopts the idea of FreeSpan but changes the way of database projection. The first difference is that PrefixSpan assumes an order of the items (e.g., a lexicographic order). Similarly to FreeSpan, a frequent item list $f\_list =[ x_1,x_2,\ldots,x_n ]$ is build first, and the search space $\mathcal{S}$ is divided into disjoint subspaces $S_{\langle x_k\rangle} = \{s \in \mathcal{S} \, | \, \langle x_k\rangle\ \text{is prefix of}\ s\}$. A sequence $t = \langle J_1 , J_2 ,  \ldots , J_m\rangle$ is \textit{prefix} of a sequence $s = \langle I_1 , I_2, \ldots , I_n\rangle$ if $m \leq n$, $J_k = I_k$ for $k\leq m-1$, $J_m \subseteq I_m$, and  $o_j < o_i$ for all $o_i \in I_m\setminus J_m$ and $o_j \in J_m$. The projected database corresponding  to $\mathcal{S}_{\langle x_k\rangle}$  is the set of all maximal subsequences in $\mathbb{D}$ whose prefix is $\langle x_k \rangle$. In addition, only the suffixes of the subsequences are considered. Given a sequence $s = \langle I_1 , \ldots , I_n \rangle$ and a prefix $\alpha = \langle J_1 ,\ldots , J_m \rangle$, a suffix is the sequence $\beta = \langle (I_m\setminus J_m) , I_{m+1} , \ldots ,  I_n \rangle$. In our example (Table \ref{tab:FreeSpan}), the projected database for $\langle a \rangle$ is $\mathbb{D}_{\langle a\rangle} = \{ \langle (abc)(ac)d(cf) \rangle, \langle (\_d)c(bc)(ae)\rangle, \langle (\_b)(df)cb\rangle, \langle(\_f)cbc \rangle \}$.
A scan of $\mathbb{D}_{\langle a\rangle}$ results in the frequent length-2 sequences with $\langle a\rangle$ as prefix: $\langle aa \rangle$, $\langle ab \rangle$, $\langle (ab) \rangle$, $\langle ac\rangle$, $\langle ad \rangle$, $\langle af \rangle$. These sequences are now used to divide the search space $\mathcal{S}_{\langle a\rangle}$ further into disjoint spaces $\mathcal{S}_{\langle aa\rangle}, \ldots , \mathcal{S}_{\langle af\rangle}$.
In our example, the database obtained for $\mathcal{S}_{\langle aa\rangle}$ is $\mathbb{D}_{\langle aa \rangle} = \{ \langle (\_bc)(ac)d(fc)\rangle, \langle (\_e)\rangle \}$.  It is easy to see that a scan of $\mathbb{D}_{\langle aa \rangle}$ results in no frequent length-3 sequence with $\langle aa\rangle$ as prefix, whence the algorithm stops in this branch. Finally, PrefixSpan terminates if no branch is processed any more.

Yan et al.\ \cite{Yan2003} propose CloSpan, an algorithm for mining closed sequences. A sequence $s$ is closed if it cannot be extended to a supersequence without decreasing support. Basically, CloSpan works like PrefixSpan but includes an early pruning step. Let $I(\mathbb{D}) = \sum_{s_i \in \mathbb{D}} l(s_i)$ be the size of $\mathbb{D}$, with $l(s_i)$ the number of items (sum of the cardinalities of the itemsets) in $s_i$. Yan et al.\ prove that given two sequences $s$ and $t$ such that $s \subset t$, the condition $\mathbb{D}_s = \mathbb{D}_{t}$ is equivalent to $I(\mathbb{D}_s) = I(\mathbb{D}_{t})$. In other words, if a sequence $s$ has a subsequence or supersequence $t$ such that $I(\mathbb{D}_s) =I(\mathbb{D}_t)$, and $t$ has already been explored, then there is no need to additionally explore $s$ since $\mathbb{D}_s = \mathbb{D}_t$. As the output of CloSpan may still contain non-closed sequences, a post-processing step is needed in which all non-closed sequences are eliminated.

COBRA by Huang et al.\ \cite{Huang2006} uses several pruning techniques while traversing the prefix-tree. The authors show that a closed sequence has only closed itemsets as elements.  Exploiting this observation, a sequence is extended with a closed itemset instead of a locally frequent item, thereby reducing the depth of the prefix-tree. In a second pruning step, called LayerPruning, a branch $\langle s \circ_s p_2 \rangle$, with $p_2$ a closed itemset, is pruned if there exists a supersequence $\langle s \circ_s p_1 \circ_s p_2 \rangle$ with $\mathbb{D}_{\langle s \circ_s p_2 \rangle} = \mathbb{D}_{\langle s \circ_s p_1 \circ_s p_2 \rangle}$.
Since these two steps are not sufficient to eliminate all non-closed sequences, a third pruning step called ExpPruning is applied: given two sequences $s$ and $t$, the former is removed if $s \subset t$ and both have the same support.

BIDE by Wang et al.\ \cite{Wang2007} also traverses the prefix-tree in a depth-first manner and prunes a branch if an item $o_i$ can be found such that, for the corresponding prefix $s = \langle I_1 , I_2, \ldots , I_n\rangle$, the sequence $\langle I_1, \ldots  , I_k \circ_i o_i,  I_{k+1} , \ldots  , I_n\rangle$ or $\langle I_1, \ldots  , I_k \circ_s o_i,  I_{k+1} , \ldots  , I_n\rangle$ is a subsequence in every sequence in $\mathbb{D}_{\langle s \rangle}$ for some fixed $k$.

A relatively new algorithm called ClaSP \cite{Gomariz2013} combines the vertical database representation of SPADE and the pruning technique of CloSpan. Experiments show that ClaSP outperforms both algorithms, SPADE and CloSpan.

One of the newest approaches is CSpan by Raju and Varma \cite{Raju2015}. It is based on PrefixSpan, too, and uses a technique for early detection of closed sequential patterns. For every projected database $\mathbb{D}_{\langle s\rangle}$, CSpan checks if there exists a frequent item $y$ that occurs in every sequence in $\mathbb{D}_{\langle s \rangle}$ as the same extension of $s$, thus either an itemset-extension (i.e., $s \circ_i y = \langle(I_1)\ldots (I_n) \cup y\rangle$) or a sequence-extension (i.e., $s \circ_s y = \langle(I_1)\ldots (I_n) (y)\rangle$). If no such item $y$ exists, $s$ must be closed and is added to the set of closed sequences. Independently of checking the occurrence, the algorithm recursively continues with the projected databases $\mathbb{D}_{\langle s \circ_i p\rangle}$ and $\mathbb{D}_{\langle s \circ_s p\rangle}$ for every frequent item $p$ in $\mathbb{D}_{\langle s\rangle}$. Although CSpan is not using a technique for pruning branches like CloSpan, it shows a better performance than CloSpan and ClaSP.

For a more detailed overview and taxonomy covering many sequence mining algorithms, the interested reader is referred to Mabroukeh and Ezeife \cite{Mabroukeh2010}.

\subsection{Itemset and sequence mining versus mining rank data}
\label{sec:IandSvsR}

The connection between mining rank data and itemset mining has already been touched upon several times. 
Indeed, noting that a ranking can be represented (in a unique way) in terms of a \emph{set} of pairwise preferences, one may wonder whether the former cannot simply be reduced to the latter. To this end, a new item $o_{i,j}$ is introduced for each pair of items $o_i , o_j \in \mathbb{O}$, and a subranking $\pi$ is represented by the set of items
$$
\{ o_{i,j} \with o_i , o_j \in O(\pi), \, \pi(i) < \pi(j) \} \enspace .
$$ 
This reduction has a number of disadvantages, however. First, the number of items is increased by a quadratic factor, although the information contained in these items is largely redundant. In fact, due to the transitivity of rankings, the newly created items exhibit (logical) dependencies that need to be taken care of by any mining algorithm. For example, not every itemset corresponds to a valid ranking, only those that are transitively closed. This already follows from the observation that the number of potential itemsets, which is given by $2^{K(K-1)}$, is much larger than the number of rankings, which is
\begin{equation}\label{eq:tnor}
\sum_{k=2}^K  {K \choose k }  k!  = \sum_{k=2}^K \frac{K!}{(K-k)!}  = K!  \sum_{k=0}^{K-2} \frac{1}{k!} \, .
\end{equation}
Correspondingly, algorithms like Apriori would need $\mathcal{O}(k^2)$ iterations to find a frequent ranking of length $k$.

Apart from that, there are some important differences between the two settings, for example regarding the number of possible patterns. In itemset mining, there are $2^K$ different subsets of $K$ items, which is much smaller than the $K!$ number of rankings of these items. However, the $K$ we assume for rank data (at least if the rankings are supposed to be complete) is much smaller than the $K$ in itemset mining, which is typically very large. Besides, the itemsets observed in a transaction database are normally quite small and contain only a tiny fraction of all items. In fact, assuming an upper bound $b$ on the size of an itemset, the number of itemsets is of the order $O(K^b)$ and grows much slower in $K$ than exponential.

The fact that a reduction of mining rank data to the problem of itemset mining can only be done at the cost of ``blowing up'' the representation is due the low expressivity of itemsets, whence the order of the elements in a ranking needs to be encoded in a large set of (meta-)items. The framework of sequence mining, on the other hand, is not less but even more expressive than our framework for mining rank data. Indeed, a ranking can be seen as a specific type of sequence, namely a sequence with only singleton-itemsets and without multiple occurrences of items. Thus, one may wonder whether rank data could not simply be mined using standard tools for sequence mining.  

One obvious reason not to do so is efficiency. It is clear that, being designed for solving more general problems, sequence mining algorithms do not exploit the specific properties of rank data. Therefore, when being applied to instances of this subclass of problems, they are likely to be less efficient than specifialized approaches.  
For example, as will become clearer in the next two sections, our algorithms for mining rank data heavily exploit the properties of irreflexivity, asymmetry, and transitivity, which are key characteristics of rankings. While these properties are implicit in both the representations we use and the algorithms operating on these representations, they need to be enforced in an artificial way in the case of itemset mining (by representing a ranking in terms of all pairwise preferences). Likewise, such properties are not used in sequence mining, simply because they do not hold; for example, the sequence $\langle a , b, a \rangle$ contains both $\langle a , b \rangle$  and $\langle b, a \rangle$ as subsequences, and both might be frequent in $\mathbb{D}$.  

Another reason is similar to the one already mentioned in the case of itemset mining and concerns properties of the data commonly encountered in practical applications. In fact, just like itemset mining, sequence mining typically assumes a setting with many items and sparse itemsets.  As an example, consider the FreeSpan algorithm, on which many other sequence miners are built. The technique of database projections used by this algorithm is indeed effective under the assumption of sequences of sparse itemsets but would hardly make sense for rankings in which most of the items occur most of the time.

\subsection{Other techniques}

Another pattern mining technique is \emph{ranked tiling} \cite{LeVan2014}. Given a database with rank data, the goal is to find large areas with high ranks, where an area or tile is a kind of bicluster identified by a subset of rows $R \subseteq \mathcal{R}$ and columns $C \subseteq \mathcal{C}$. Le Van et al.\ \cite{LeVan2014} only consider complete or tied rankings, but not incomplete ones.  
The first problem addressed in the paper is called \textit{maximal ranked tile mining}:
Find an as large as possible tile with a sufficiently high average rank:
\begin{equation}
(R^*,C^*) = \argmax_{R,C}\sum_{r\in R, c\in C}(\mathcal{M}_{r,c} - \theta) \, ,
\end{equation}
where $\mathcal{M}$ the rank data represented in the form of a matrix and $\theta$ is a threshold. 
The second problem, called \textit{ranked tiling}, is an extension, in which a collection of possibly more than one tile is sought. The goal, then, is to find a compromise between the size of the tiles (which should again be large) and their overlap (which should be small).  
The authors use constraint programming techniques and greedy heuristics to (approximately) solve both problems.
Although their approach is dealing with rank data, just like ours, the concept of a tile is obviously different from the concept of a ranking. In other words, while both approaches share the type of data, they differ with regard to the type of pattern mined in the data, and hence the type of algorithmic techniques used.

Rankings are also used by Calders et al.\ \cite{Calders2006}, although  more indirectly and again for a different purpose. They are interested in pattern mining on numerical data and make use of rank correlation measures in order to determine correlations between attributes in the database: For each attribute, the transactions are ranked according to their value on that attribute, and pairs of attributes are then compared in terms of the corresponding rankings. Thus, the rankings considered here are ``vertical'' (over the transactions), in contrast to our approach, where they are ``horizontal'' (over attributes/objects). Moreover, Calders et al.\ are not interested in discovering patterns in the form of rankings.

\section{Mining frequent rank patterns}
\label{sec:freq}

The algorithm for mining frequent rankings proposed by Henzgen and H\"ullermeier \cite{Henzgen2014} was mainly motivated by the well-known Apriori algorithm for mining frequent itemsets \cite{Agrawal1994}. It exploits a monotonicity property similar to the one that holds for itemsets, namely that the support of a ranking is upper-bounded by the support of all of its subrankings:
\begin{equation}\label{eq:mono}
(\pi \subset \pi') \quad \Rightarrow \quad \big( \support(\pi') \leq \support(\pi) \big) \enspace .
\end{equation}
This property allows for mimicking the basic Apriori procedure: Starting with frequent rankings of length 2, one iteratively constructs candidate rankings of length $k+1$ from frequent subrankings of length $k$ (pruning those that contain a non-frequent subranking and hence cannot be frequent themselves), and computes the support of these candidates by making one pass over the data. This latter step, computing the support of all candidate rankings, turns out to be very time-consuming in general.

In the following, we therefore introduce a new algorithm for mining frequent rankings, called \textit{Tail Extension Subranking Mining Algorithm} (TESMA). This algorithm will improve running time by several orders of magnitude at the expense of a small increase in space complexity. The basic idea is to span a prefix tree and traverse this tree in a depth-first manner. In addition, we are using a vertical database to avoid repeated database scans. Thus, for every ranking, we (implicitly) store the set of all transactions in which this ranking is contained. Formally, given a ranking $\pi$, we define the \emph{g-closure} of $\pi$ as follows:
\begin{equation}\label{eq:gclosure}
g(\pi) = \big\{  \pi_i \in \mathbb{D}  \with \pi \subset \pi_i  \big\}
\end{equation}
Obviously, given two rankings $\pi_A$ and $\pi_B$,
$$
\support( \pi_A \oplus \pi_B) = \# \big(  g(\pi_A) \cap g(\pi_B)  \big) \, .
$$
Recall that, according to (\ref{eq:union}), $\pi_A \oplus \pi_B$ is a set of rankings. Therefore, $\pi_i \in \mathbb{D}$ is an element of $g(\pi_A) \cap g(\pi_B)$ if it contains \emph{any} element of this set. As a consequence, even if $\pi_A \oplus \pi_B$ is frequent, this does not imply that the single elements of $\pi_A \oplus \pi_B$ are frequent, too.

However, in the special case where $\pi_A \oplus \pi_B$ consists of a single ranking, i.e.,$ \pi_A \oplus \pi_B = \{ \pi \}$, we obviously have 
$$
\support(\pi) = \support( \pi_A \oplus \pi_B) = \# \big(  g(\pi_A) \cap g(\pi_B)  \big)  \, .
$$
This is an important observation, which allows for the successive construction of (potentially) frequent subrankings without the need to pass through the whole data every time for counting the frequency. 

More specifically, our algorithm makes use of a combination that we call \emph{tail extension}. Consider a subranking $\pi$ with associated order $\rho = \pi^{-1}$, and let $last(\pi) = last(\rho) = \rho(|O(\pi)|)$ be the object ranked on the last position. A tail extension of $\rho$ is then given by $\rho' = \rho \oplus (last(\pi) \succ o)$, where $o \in \mathbb{O} \setminus O(\pi)$.\footnote{More correctly, since $\oplus$ yields a set of rankings, we should say that $\rho'$ is the unique element in $\rho \oplus (last(\rho) \succ o)$.}  As a shorthand notation, we write $\rho \concat o$ for the extension of $\rho$ by $o$.

\subsection{Representing transaction sets}

The basic element of TESMA is the storage and comparison of sets of transactions (g-closures).
To guarantee a fast implementation of operations like intersection, union, and equality, we represent a subset of transactions $T \subseteq \mathbb{D}$ as a bit vector $\ct$ of length $N= |\mathbb{D}|$. An entry $t_i$ of this vector is set to $1$ if $\pi_i \in T$, otherwise $t_i = 0$. Then, the intersection of sets of transactions can be computed by a bitwise AND, the union by a bitwise OR, and the equality by an XOR followed by testing whether the resulting vector is equal to the zero-vector.

The principle of the algorithm is quite simple. We implicitly represent the space of patterns (search space) as a set of prefix trees (Figure \ref{fig:prefixTree}) and traverse these trees in a depth-first manner. Every node, except the root nodes, represents an object, and every path, starting with a root node, an order. The algorithm starts by finding all frequent 2-rankings and the corresponding set of transactions (Algorithm \ref{algo:HHM}).  For every frequent 2-ranking, which builds the root of a tree, a depth-first search is used to construct longer rankings (Algorithm \ref{algo:DeepTESMA}). A ranking $\rho$ is extended by one of its children $o$, if $(o_l,o)$ is frequent with $o_l = last(\rho)$."

\begin{figure}
	\centering
		\includegraphics[width = 0.7\textwidth]{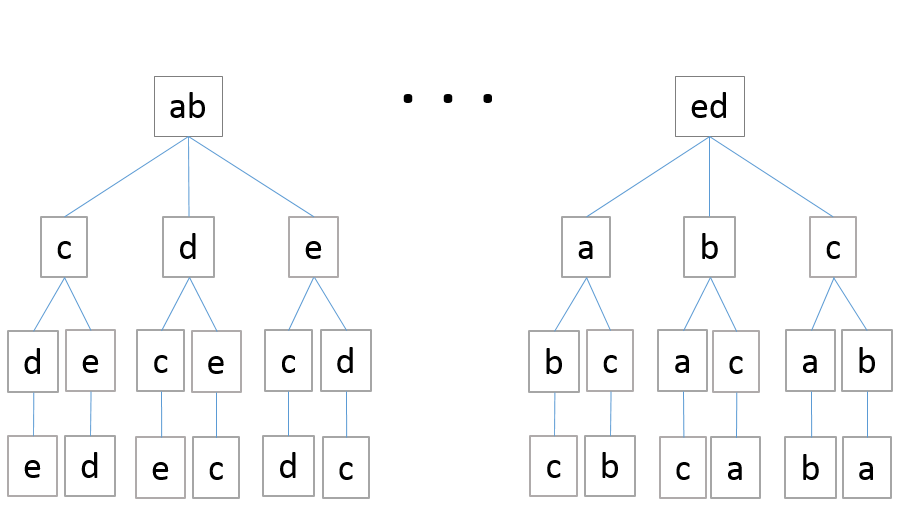}
		\caption{Search space represented as a set of prefix trees.}
		\label{fig:prefixTree}
\end{figure}

Let $o$ be one of these children. At this point, we already know the g-closure $g(\rho)$, which is stored in the bitvector $\ct_{\rho}$. We also know $g(o_l \concat o)$, which is stored in $\ct_{o_l \concat o}$. Therefore, we can easily compute $g(\rho \concat o)$ by an AND operation:
$$
\ct_{\rho  \concat o} = \ct_{\rho} \text{ AND } \ct_{o_l \concat o} 
$$
If $\#g(\rho  \concat o) \geq \delta$, then $\rho  \concat o$ is a frequent ranking that will be stored. The process then continues with one of the children of $o$. 
If $\#g(\rho   \concat o) < \delta$, we try to extend $\rho$ by another child of $o_l$.
The advantage of depth-first search is that, in addition to the transaction bitvectors of all 2-rankings, only the bitvectors of the rankings along the current path need to be stored, of which there are at most $N$.
For comparison, in breath-first search, the bitvectors of all nodes on the frontier would be stored, which are as many as $\mathcal{O}(N!)$.

\subsection{Storing 2-rankings and g-closure}

In both algorithms, TESMA and GPMiner, repeated access to frequent 2-rankings and their corresponding g-closures is required. Thus, the initial step of TESMA is to build a hash map that enables a quick access to the g-closures of frequent 2-rankings.


\begin{algorithm}
\SetAlgoNoLine
		$OOT \leftarrow \emptyset$\tcp*[r]{Outer hash-map (Figure \ref{fig:2rankHash})}
		\For{$i=1 \to |O|$}{
			$OT_i \leftarrow \emptyset$\tcp*[r]{Inner hash-map}
			$OOT.put(i,OT_i)$
			\For{$j=1 \to |O|$}{
				\If{$isFrequent((i,j))$}{
					$OT_i.put(j,g((i,j)))$\;
				}
			}
		}
		return $OOT$\;
	\caption{HHM}\label{algo:HHM}
\end{algorithm}

\begin{table}
	\centering
	\caption{Example of a hash map constructed for a set of transactions (rankings) and an absolute frequency of $2$. The Bit vectors have to be read from right to left.}
		\begin{tabular}{|c|c|}\hline
		\multicolumn{2}{|c|}{$\rho_1 = (a, b, e, c, d)$  $\rho_2 = (a, d, b, c, e)$}\\
		\multicolumn{2}{|c|}{$\rho_3 = (c, a, b, e, d)$  $\rho_4 = (b, a, d, c, e)$}\\\hline
		$a$ & \multicolumn{1}{c|}{$(b, [0111])$\ $(c,[1011])$\ $(d,[1111])$\ $(e,[1111])$}\\\hline
		$b$ & \multicolumn{1}{c|}{$(a,[1000])$\ $(c,[1011])$\ $(d,[1101])$\ $(e,[1111])$}\\\hline
		$c$ & \multicolumn{1}{c|}{$(d,[0101])$\ $(e,[1110])$}\\\hline
		$d$ & \multicolumn{1}{c|}{$(e,[1010])$}\\\hline
		$e$ & \multicolumn{1}{c|}{$(d,[0101])$}\\\hline
		\end{tabular}	
	\label{tab:exDataSet}
\end{table}

The algorithm HHM builds a hash map $OOT$, in which a \texttt{(key,value)} entry is stored for every object $o_i \in \mathbb{O}$, with $o_i$ as key and another hash map $OT_i$ as value. If an object pair $(o_i,o_j)$ is found to be frequent, the entry ($o_j,g(o_j)$) is added to the hash map $OT_i$ (Figure \ref{fig:2rankHash}). Thus, an access time of $\mathcal{O}(1)$ is achieved. An example is given in Table \ref{tab:exDataSet}.

\begin{figure}
	\centering
		\includegraphics[width = 0.5\textwidth]{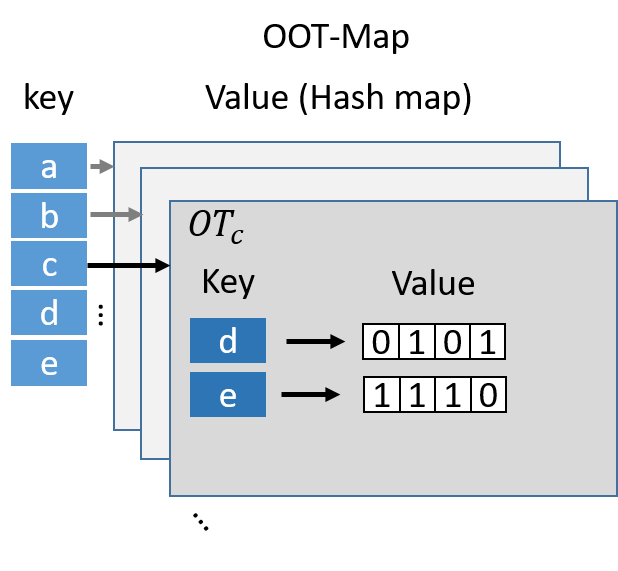}
		\caption{OOT is a hash map with objects $o_i$ as keys and (inner) hash maps as values. The keys of the inner hash map are again objects $o_j$, and the value is the vector of transactions in which the corresponding  2-ranking $(o_i,o_j)$ occurs.}
		\label{fig:2rankHash}
\end{figure}

\subsection{TESMA}

After the hash map $OOT$ is build, all frequent 2-rankings of the $OOT$ hash map are appended to the list of frequent patterns, serving as seeds for the recursion (Algorithm \ref{algo:TESMA}). Every frequent 2-ranking is a root of a prefix tree. To traverse this trees in a depth-first manner, DeepTESMA (Algorithm \ref{algo:DeepTESMA}) is recursively called.

This algorithm accepts as input a ranking $\rho$ (with last element $o_l = last(\rho)$), its transaction list $\ct_{\rho}$, and a depth counter. Then, DeepTESMA is recursively called for every $\rho \concat o$, where  $o \in keys(OT_l)$. A more detailed description is given in Algorithm \ref{algo:TESMA}.


\begin{algorithm}[t]
\SetAlgoNoLine
		\KwIn{$t$, data}
			$OOT \leftarrow HHM(data)$\label{alg:ln:a1l1}\tcp*[r]{(Hash Map) Global variable}
			$L \leftarrow \emptyset$\tcp*[r]{List (Global variable)}
			\For{$i \rightarrow |O|$}{
				$l_i \leftarrow \emptyset$\tcp*[r]{List}
				$L(i) \leftarrow \{l_{i}\}$\;
			}
			\For{$(o_i,OT_{o_i}) \in OOT$}{
				\For{$(o,t_{o_i \concat o}) \in OT_{o_i}$}{
					$l_1 \leftarrow l_{1} \cup \{(o_i \concat o)\}$\;
					$DeepTESMA((o_i \concat o), 2,t_{o_i \concat o})$\;
				}
			}
	\caption{TESMA}\label{algo:TESMA}
\end{algorithm}

		%
				%

\begin{algorithm}[t]
\SetAlgoNoLine
		\KwIn{$\rho \concat o_l,t_{\rho \concat o_l},k$}
			\For{$(o,t_{o_l \concat o}) \in OT_{o_l}$\label{alg:ln:a3l1}}{
				$t_{\rho \concat o_l\concat o} \leftarrow t_{\rho,\concat o_l}\cap t_{o_l \concat o}$\;
				\If{$|t_{\rho\concat o_l\concat o}| \geq \delta$}{
					$l_k \leftarrow l_k \cup \{\rho\concat o_l \concat o\}$\;
					$DeepTESMA(\rho \concat o_l \concat o, t_{\rho\concat o_l\concat o},k+1)$\;
				}
			}
	\caption{DeepTESMA}\label{algo:DeepTESMA}
\end{algorithm}

\subsection{Analysis of the algorithm}

TESMA is correct in the following sense: all orders found by the algorithm are frequent, and all frequent orders are found by the algorithm. Since this is quite easy to see, we refrain from proving the result in a formal way. Essentially, it follows from the combination of three observations. First, the computations of frequencies (support) of orders based on bit-vectors is correct. Second, depth-first search delivers a complete enumeration of the entire search space $\overline{\mathbb{S}}_K$. Third, our pruning procedure is sound, i.e., when a subtree is pruned by TESMA, it will definitely not contain any frequent order; obviously, this follows from the monotonicity property (\ref{eq:mono}). 

In order to analyze the time complexity of TESMA, recall that we denote by $K = |\mathbb{O}|$ the number of objects and by $N = | \mathbb{D} |$ the number of transactions in the database; moreover, let $L$ be the number of frequent rankings contained in $\mathbb{D}$. The first step in TESMA is to calculate the HHM (line \ref{alg:ln:a1l1}), which obviously needs $K^2 N$ operations. The rest of the algorithm is a depth-first search in the prefix-trees. Every discovered path starting at the root corresponds to a frequent ranking, which can be extended by adding a new object at the end. 
To expand a path, $K-1$ objects (successor nodes) need to be tested in the worst case   
(Algorithm \ref{algo:DeepTESMA}, line \ref{alg:ln:a3l1}). Hence, the depth-first search requires $L(K-1)$ operations. Overall, we have a running time of $\mathcal{O}(L(K-1) + K^2N)$.

As for the space complexity, the big advantage of TESMA over previous algorithms is that it avoids scanning the entire database for every candidate ranking. This of course comes with a disadvantage, namely the need to store transaction bit-vectors, each of which has a size of $\mathcal{O}(N)$. In the worst case, the HHM stores every possible 2-ranking and its transaction bit-vector, which requires $\mathcal{O}(K^2 N )$ space. In addition, we have to keep the transaction bit-vector of suborders on the current search path in main memory, the maximal length of which is  $K$.
Therefore, the overall space complexity of TESMA is $\mathcal{O}(K^2 N + KN ) = \mathcal{O}(K^2 N)$, plus the space needed to store the frequent rankings.

\section{Mining closed rank patterns}
\label{sec:closed}

In pattern mining, one is typically more interested in large patterns than in small ones. However, as shown by monotonicity properties such as (\ref{eq:mono}), the criterion of size is normally in conflict with the criterion of frequency. Nevertheless, given two patterns with the same frequency, it is natural to prefer the larger to the smaller one. This leads to the notion of \emph{closedness} as defined for rankings in (\ref{eq:mono}), and the idea to focus on closed patterns, i.e., patterns that cannot be extended without lowering support.

\subsection{Closures and generators of rankings}

In itemset mining, the closure of itemsets is obtained by means of the Galois operator \cite{Pasquier1999}. 
Let $g: \, 2^{\mathbb{O}} \longrightarrow 2^{\mathbb{D}}$ be the function that maps an itemset $O \subset \mathbb{O}$ to the set of all transactions $D \subset \mathbb{D}$ containing $O$.
Furthermore, let $f: \, 2^{\mathbb{D}} \longrightarrow 2^{\mathbb{O}}$ be the function that maps a set of transactions $D$ to their largest common itemset $\cap_{O_i  \in D} O_i$.
Then, $h: \, 2^{\mathbb{O}} \longrightarrow 2^{\mathbb{O}}$ defined as $h = f \circ g$ is the Galios operator for itemsets,
$h(O)$ is a closed itemset for any $O \subset \mathbb{O}$, and $h(h(O)) = h(O)$.

For reasons of efficiency, algorithms such as Aclose \cite{Pasquier1999} do not mine all frequent patterns but only their smallest generators. An itemset $O$ is a generator of another itemset $O'$ if the latter is the closure of the former.
As will be seen, transferring this idea from the case of itemsets to the case of rankings is not straightforward, for example because the Galois operator cannot be applied as easily anymore. In any case, we first of all have to clarify what we mean by the closure of a ranking. 

\begin{definition}
Given two subrankings $\pi_A$ and $\pi_B$, we call
\[
\pi_A \cap \pi_B \, = \, \big\{  \pi \in \overline{\mathbb{S}}_K \,\big\vert\, O(\pi) \subset O(\pi_A) \cap O(\pi_B), \, (\pi_A|O(\pi)) = \pi, (\pi_B|O(\pi)) = \pi \big\}
\]
their intersection, and 
\[
\pi_A \cap^* \pi_B \, = \, \big\{ \pi \in \overline{\mathbb{S}}_K \,\big\vert\,  
 \pi \in \pi_A \cap \pi_B, \, \nexists \pi' \in \pi_A \cap \pi_B: \, \pi \subsetneq \pi' \big\}
\]
their maximal intersection. 
\end{definition}
Obviously, the (maximal) intersection of rankings is a commutative and associative operation that can be extended to more than two rankings in a canonical way. 

\begin{definition}
Let $g: \, \bbS \longrightarrow 2^{\mathbb{D}}$ be the function that maps rankings $\pi \in \mathbb{D}$ to all transactions $\pi_i$ with $\pi \subset \pi_i$. Furthermore, let $f: \, 2^{\mathbb{D}} \longrightarrow 2^{\bbS}$ be the function that maps a set of transactions to their maximal intersection.
We then call
$$
h: \, \bbS  \longrightarrow 2^{\bbS}, \, \pi \mapsto  (f \circ g)(\pi)
$$
the h-closure, and $h(\pi)$ the h-closure of the ranking $\pi$.
\label{def:HClosure}   
\end{definition}
Based on this definition, the following result is rather straightforward.

\begin{lemma}
A ranking $\pi$ is closed in the sense of (\ref{eq:closed}) iff $\pi \in h(\pi)$.
\end{lemma}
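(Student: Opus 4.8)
The plan is to unfold the definition of $h(\pi) = (f\circ g)(\pi)$ and recognize that $\pi \in h(\pi)$ says precisely that $\pi$ is a \emph{maximal} common subranking of all transactions that contain it. Writing $g(\pi) = \{\pi_i \in \mathbb{D} : \pi \subset \pi_i\}$, the set $f(g(\pi))$ is the maximal intersection of the transactions in $g(\pi)$, i.e.\ the $\subset$-maximal elements among all rankings that are a subranking of every $\pi_i \in g(\pi)$. By the very definition of $g(\pi)$, the ranking $\pi$ is itself such a common subranking; hence the only content of the statement $\pi \in h(\pi)$ is that no common subranking of $g(\pi)$ properly extends $\pi$. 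Since $\pi\in h(\pi)$ does not by itself encode the threshold condition $\support(\pi)\ge\delta$, I read the lemma for a $\pi$ that is already frequent (equivalently, I split off the trivial frequency conjunct of \kref{eq:closed} and concentrate on the maximality condition $\forall \pi': (\pi\subsetneq\pi')\Rightarrow(\support(\pi')<\support(\pi))$).

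\textbf{Key bridge.} The one tool I need is that supports and $g$-closures translate into one another along the subranking order. Whenever $\pi\subset\pi'$, every transaction containing $\pi'$ also contains $\pi$, so $g(\pi')\subseteq g(\pi)$; this is just the monotonicity \kref{eq:mono} restated for $g$. Because $\support(\cdot)=\tfrac1N\#g(\cdot)$ and $g(\pi')\subseteq g(\pi)$, it follows that $\support(\pi')=\support(\pi)$ \emph{iff} $g(\pi')=g(\pi)$. This equivalence is what lets me pass between the support language of \kref{eq:closed} and the ``subranking of every transaction'' language of the maximal intersection $f$.

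\textbf{Both directions by contraposition.} For ``closed $\Rightarrow$ $\pi\in h(\pi)$'' I argue contrapositively: if $\pi\notin h(\pi)$, then $\pi$ is a common subranking of $g(\pi)$ that fails to be maximal, so there is a proper superranking $\pi'\supsetneq\pi$ that is still a subranking of every $\pi_i\in g(\pi)$; this means $g(\pi)\subseteq g(\pi')$, and together with the automatic inclusion $g(\pi')\subseteq g(\pi)$ gives $g(\pi')=g(\pi)$, hence $\support(\pi')=\support(\pi)$, contradicting closedness. Conversely, if $\pi\in h(\pi)$ and $\pi'\supsetneq\pi$ is any proper superranking with $\support(\pi')=\support(\pi)$, the bridge forces $g(\pi')=g(\pi)$, so $\pi'$ is a common subranking of all of $g(\pi)$ strictly extending $\pi$, contradicting the maximality expressed by $\pi\in h(\pi)$; thus every proper superranking has strictly smaller support and $\pi$ is closed.

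\textbf{Main obstacle.} The delicate point is not the support bookkeeping but the precise reading of ``maximal intersection'': $f(g(\pi))$ may contain several pairwise incomparable maximal rankings, so I must be sure that ``$\pi$ is not maximal'' really produces a \emph{comparable} strict superranking $\pi'\supsetneq\pi$ common to all transactions, rather than merely some larger incomparable element. This is exactly what the definition of $\cap^*$ via $\nexists\pi':\pi\subsetneq\pi'$ guarantees, so once that definition is invoked the two directions close without further calculation. The only other thing to keep honest is the $\delta$-threshold caveat noted above.
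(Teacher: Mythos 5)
Your proof is correct. The paper in fact offers no proof of this lemma at all---it is dismissed as ``rather straightforward'' immediately after Definition \ref{def:HClosure}---so there is nothing to compare against; your argument, reading $\pi \in h(\pi)$ as maximality of $\pi$ among the common subrankings of the transactions in $g(\pi)$, and passing between support equality and $g$-closure equality via the inclusion $g(\pi') \subseteq g(\pi)$ for $\pi \subset \pi'$, is exactly the intended routine verification, and it correctly uses the $\nexists$-clause in the definition of $\cap^*$ to extract a \emph{comparable} strict superranking from non-maximality. Your caveat that $\pi \in h(\pi)$ cannot encode the threshold condition $\support(\pi) \geq \delta$, so the stated ``iff'' is literally valid only for frequent rankings, identifies a genuine (if minor) imprecision in the paper's statement rather than a gap in your proof.
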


\begin{figure}
	\centering
		\includegraphics[width = 0.7\textwidth]{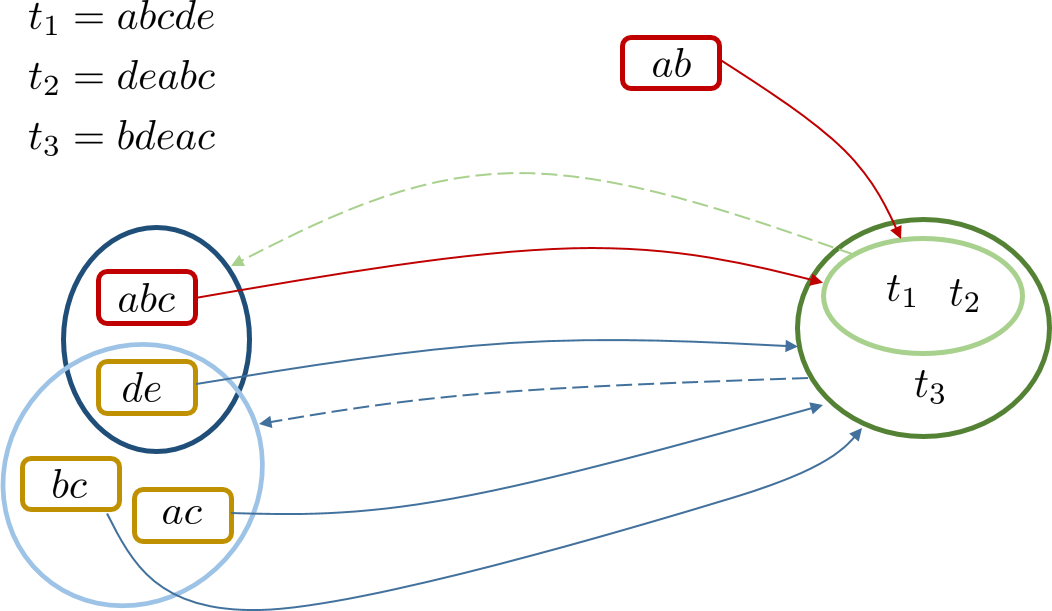}
		\caption{Here the g-closure and h-closure are applied several times starting with $(a,b)$. We have $h((a,b)) = \{(a,b,c),((d,e))\}$ with $h((a,b,c))\neq h((d,e))$ but $g((a,b,c)) \subset g((d,e))$.}
		\label{fig:Closure1}
\end{figure}
\begin{figure}
	\centering
		\includegraphics[width = 0.5\textwidth]{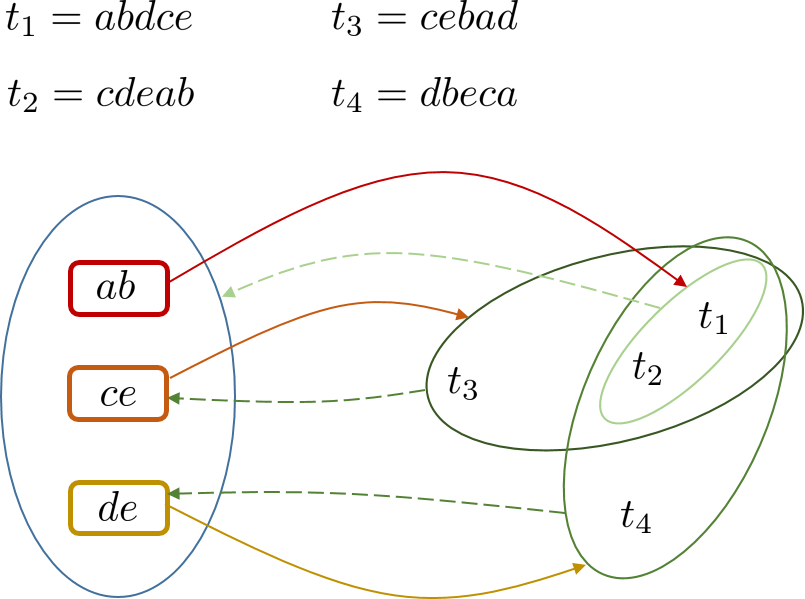}
		\caption{Here we have a situation where the g-closures are not nested: $h((a,b)) = \{(a,b),(c,e),(d,e)\}$ but $g((c,e)) \not\subset g((d,e))$ and $g((d,e)) \not\subset g((c,e))$.}
		\label{fig:ClosureH2}
\end{figure}

In contrast to itemset mining, where the closure of an itemset is again an itemset, the h-closure of a ranking $\pi$ is a set of rankings, and in general, not every ranking in this set contains $\pi$; on the contrary, $h(\pi)$ may even contain rankings $\pi'$ that do not share any item with $\pi$. Moreover, for $\pi' \in h(\pi)$, it is thoroughly possible that $h(\pi) \neq h(\pi')$; see Figures \ref{fig:Closure1} and \ref{fig:ClosureH2} for an illustration. Therefore, it is useful to introduce another notion of closure as follows. 
\begin{definition}
For a ranking $\pi \in \bbS$, we call 
\[
c(\pi) \, = \, \big\{  \pi'  \in h(\pi)  \,\vert\, \pi \subset \pi'   \big\}
\]
the c-closure of $\pi$. A ranking $\pi$ is called a generator of a closed ranking $\pi^*$, if $\pi^* \in c(\pi)$.
\end{definition}

\begin{proposition}
Consider any ranking $\pi \in \bbS$. All rankings $\pi' \in h(\pi)$ in the h-closure of $\pi$ are closed (including those with $h(\pi') \neq h(\pi)$).
\label{prop:2}
\end{proposition}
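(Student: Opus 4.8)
The plan is to reduce the statement to the characterization of closedness supplied by the preceding lemma, which asserts that a ranking $\sigma$ is closed in the sense of (\ref{eq:closed}) if and only if $\sigma \in h(\sigma)$. Fixing an arbitrary $\pi' \in h(\pi)$, it therefore suffices to prove that $\pi' \in h(\pi')$, i.e., that $\pi'$ lies in its own h-closure; the lemma then immediately delivers closedness of $\pi'$. Note that this reduction is exactly what makes the parenthetical remark ``including those with $h(\pi') \neq h(\pi)$'' unproblematic: we never need $h(\pi') = h(\pi)$, only membership of $\pi'$ in $h(\pi')$.

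The first step I would carry out is to record a monotonicity of $g$ relating $\pi$ and $\pi'$. By Definition \ref{def:HClosure}, $h(\pi) = f(g(\pi))$ is the maximal intersection of the transactions in $g(\pi)$, so every $\pi' \in h(\pi)$ is in particular a common subranking of all of them, that is, $\pi' \subset \pi_i$ for every $\pi_i \in g(\pi)$. Unfolding the definition of $g$, this says precisely that every transaction containing $\pi$ also contains $\pi'$, whence $g(\pi) \subseteq g(\pi')$. (The inclusion may well be proper, since $\pi'$ need not share any item with $\pi$, but only the containment is needed.)

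Next I would verify the two conditions that place $\pi'$ in $h(\pi') = f(g(\pi'))$, namely that $\pi'$ is a common subranking of the transactions in $g(\pi')$ and that it is maximal among such common subrankings. The first is immediate from the definition of $g(\pi')$: every transaction in $g(\pi')$ contains $\pi'$ by construction, so $\pi'$ belongs to the intersection of $g(\pi')$. For the maximality I would argue by contradiction: suppose some $\pi''$ with $\pi' \subsetneq \pi''$ were also a common subranking of all transactions in $g(\pi')$. Since $g(\pi) \subseteq g(\pi')$, such a $\pi''$ would then be a common subranking of all transactions in $g(\pi)$ as well, contradicting the maximality of $\pi'$ in the intersection of $g(\pi)$ that is guaranteed by $\pi' \in f(g(\pi)) = h(\pi)$. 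Hence $\pi'$ is maximal, so $\pi' \in h(\pi')$, and closedness follows.

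The argument is short and is essentially bookkeeping once the inclusion $g(\pi) \subseteq g(\pi')$ is in place; the one step deserving care is the transfer of maximality. The main obstacle is to make precise that maximality of $\pi'$ with respect to the smaller family $g(\pi)$ forces maximality with respect to the larger family $g(\pi')$: this works exactly because enlarging the family of transactions can only shrink the set of their common subrankings, so a hypothetical larger common subranking of $g(\pi')$ is \emph{a fortiori} a larger common subranking of $g(\pi)$. Finally, I would briefly dispatch the degenerate case $g(\pi) = \emptyset$, where $h(\pi)$ is empty and the claim is vacuous, so that the displayed reasoning applies to every nontrivial $\pi' \in h(\pi)$.
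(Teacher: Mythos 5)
Your proposal is correct and follows essentially the same route as the paper's proof: both hinge on the inclusion $g(\pi) \subseteq g(\pi')$ for $\pi' \in h(\pi)$, and both transfer maximality by observing that a hypothetical proper extension of $\pi'$ common to all transactions in $g(\pi')$ would \emph{a fortiori} be common to all transactions in $g(\pi)$, contradicting $\pi' \in h(\pi)$. If anything, your write-up is tidier than the paper's: you derive the inclusion $g(\pi) \subseteq g(\pi')$ directly from $\pi'$ being a common subranking of $g(\pi)$ (rather than via the somewhat murky case distinction on $h(\pi') \neq h(\pi)$), and you make the reduction through the lemma ($\sigma$ closed iff $\sigma \in h(\sigma)$) explicit from the start.
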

\emph{Proof.} To prove this proposition, we first show a property for those $\pi' \in h(\pi)$ with $h(\pi') \neq h(\pi)$. For such rankings, the following holds. First, $g(\pi) \subset g(\pi')$, because if $h(\pi') \neq h(\pi)$, then $\pi'$ needs to be contained in additional transactions. Furthermore, $g(\pi) \subset g(\pi'')$ for all $\pi'' \in h(\pi')$. Second, for all $\pi'' \in h(\pi')$, there exists some $\pi^* \in h(\pi)$ such that $\pi'' \subset \pi^*$. This holds because rankings contained in a maximal intersection of transactions can only grow if the set of transactions is reduced.  

Now, consider $\pi' \in h(\pi)$ and assume that $\pi'$ is not closed, which means that $\pi' \notin h(\pi')$. Thus, $\pi'$ is not maximal in $h(\pi')$, and hence not maximal in $f(D)$ for any $D \subset g(\pi')$; according to what we just showed, this especially applies to $D = g(\pi) \subset g(\pi')$, and  $\pi'$ is not maximal in $h(\pi)$. However, from the definition of $h$, it immediately follows that $\pi'$ is maximal in $h(\pi)$ for all $\pi' \in h(\pi)$, which is a contradiction. \hfill $\Box$

In general, a closed pattern may have more than one generator. This is why, for example, the AClose algorithm only stores generators that are smallest in the sense of a predefined lexicographic order on items. Since lexicographic orders cannot be used in the case of rankings, our approach is based on \emph{prefix generators}, where a prefix of a ranking $\rho = (o_{i_1}, o_{i_2}, \ldots , o_{i_k})$ is a subranking of the form  $(o_{i_1}, o_{i_2}, \ldots , o_{i_p})$ for some $p \leq k$ (likewise, a suffix of $\rho$ is of the form $(o_{i_j}, \ldots , o_{i_k})$ for some $j \geq 1$).  Prefix generators are prefixes of c-closures that are also generators of that c-closure. In this regard, the following observation is important.

\begin{lemma}
Suppose that $\rho' \in c(\rho_p)$, and that $\rho_p$ is a prefix of $\rho'$. Then $\rho_p$ is a prefix for all $\rho \in c(\rho_p)$.
\label{lem:algo2}
\end{lemma}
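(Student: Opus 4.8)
The plan is to work directly with the set $D = g(\rho_p)$ of transactions containing $\rho_p$ and to exploit the fact that, by the definitions of the h-closure and the c-closure, every $\rho \in c(\rho_p)$ is a \emph{maximal} common suborder of $D$ (an element of $f(g(\rho_p))$) that in addition contains $\rho_p$. First I would record the clarifying reformulation that a suborder is common to all transactions in $D$ if and only if its item set is totally ordered in the \emph{same} way by every $\pi_i \in D$; equivalently, common suborders of $D$ are exactly the chains of the partial order obtained by intersecting the (restricted) total orders of the $\pi_i$, and the members of $f(g(\rho_p))$ are precisely the maximal such chains. Under this lens $\rho_p$ is itself such a chain (since $\rho_p \subset \pi_i$ for all $\pi_i \in D$), and $\rho_p$ being a \emph{prefix} of a maximal chain $\rho$ means exactly that every item of $O(\rho)\setminus O(\rho_p)$ is ranked strictly below $o_l := last(\rho_p)$ in every transaction of $D$.

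Next I would argue by contradiction. Suppose the conclusion fails, so there is some $\rho \in c(\rho_p)$ of which $\rho_p$ is not a prefix. Since $\rho_p \subset \rho$, the relative order of the items of $\rho_p$ inside $\rho$ agrees with $\rho_p$, so the failure must be witnessed by an item $x \in O(\rho) \setminus O(\rho_p)$ with $x \succ_\rho o_l$ (that is, $x$ precedes $o_l$ in $\rho$). The goal is to convert this witness into a strict extension of the \emph{other} order $\rho'$, contradicting its maximality.

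The heart of the argument is then to show that $x$ can be inserted into $\rho'$ while staying a common suborder of $D$. Here I would collect three facts, each following from $\sigma \subset \pi_i$ for the relevant common suborder $\sigma$ and all $\pi_i \in D$: (i) because $x$ and the items of $\rho_p$ all lie in the single order $\rho$, their mutual order is fixed and shared by every $\pi_i \in D$, which pins down a well-defined slot for $x$ among $o_{i_1},\ldots,o_{i_p}$, necessarily before $o_l$; (ii) because $\rho_p$ \emph{is} a prefix of $\rho'$, every item $w \in O(\rho')\setminus O(\rho_p)$ satisfies $o_l \succ_{\rho'} w$, hence $o_l \succ_{\pi_i} w$ for all $\pi_i \in D$; and (iii) combining $x \succ_{\pi_i} o_l$ with (ii) and the transitivity of each total order $\pi_i$ gives $x \succ_{\pi_i} w$ for every such $w$ and every $\pi_i \in D$. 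Facts (i) and (iii) show that placing $x$ at its forced slot (before $o_l$, hence before all of the $w$'s) yields an order $\rho''$ all of whose pairwise relations already hold in every transaction of $D$; thus $\rho'' \subset \pi_i$ for all $\pi_i \in D$, while $\rho'' \supsetneq \rho'$ since $x \notin O(\rho')$ --- for if $x$ occurred in $\rho'$ it would, as in every transaction, precede $o_l$ and hence belong to the prefix $O(\rho_p)$, contradicting $x \notin O(\rho_p)$. This contradicts the maximality of $\rho' \in f(g(\rho_p))$, completing the proof.

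The step I expect to be the main obstacle is precisely (iii) together with the bookkeeping that makes the insertion consistent: one must verify that $x$ is orderable against \emph{every} item of $\rho'$ --- not only the items of $\rho_p$ (handled by (i)), but the whole tail of $\rho'$ as well --- and this is exactly where the hypothesis that $\rho_p$ is a genuine prefix of $\rho'$ is used, since it forces the tail to sit below $o_l$ so that transitivity through $o_l$ applies. It is also worth being careful, when setting up the reduction, to justify that maximality in $f(g(\rho_p))$ is maximality with respect to adding items (no superorder on the same item set can be a proper superranking), so that exhibiting the strict extension $\rho''$ genuinely contradicts maximality.
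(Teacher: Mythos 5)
Your proposal is correct and takes essentially the same route as the paper's proof: both argue by contradiction via a witness item $x \in O(\rho)\setminus O(\rho_p)$ that precedes the last item of $\rho_p$ in $\rho$, and hence in every transaction of $g(\rho_p)$. The only difference is level of detail: the paper's final step (that this forces $\rho' \notin c(\rho_p)$) is left implicit, whereas you spell it out by showing $x$ can be consistently inserted into $\rho'$, contradicting its maximality in $f(g(\rho_p))$.
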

\emph{Proof.} 
Suppose the claim is not correct, i.e., suppose there is a closed ranking $\rho \in c(\rho_p)$ such that $\rho_p$ is not a prefix of $\rho$. Then, it follows that all transactions $g(\rho_p)$ contain an object $o' \in O(\rho) \setminus O(\rho_p)$ with $o' \succ o_{i_p}$. This, however, means that $\rho'$ can not be in $c(\rho_p)$, which is a contradiction. \hfill $\Box$

Finally, we define the postfix of a c-closure, which is used to speed up prefix-tests in the algorithm.
\begin{definition}
Given a generator and prefix $\rho \concat o$ of $c(\rho \concat o)$, the subtree with $o$ as root is called postfix of $c(\rho \concat o)$ under $\rho \concat o$.
\end{definition}
An example of h-closure, c-closure, prefix and postfix is given in Figure \ref{fig:closureTypes}.

\begin{figure}
	\centering
		\includegraphics[width = 0.9\textwidth]{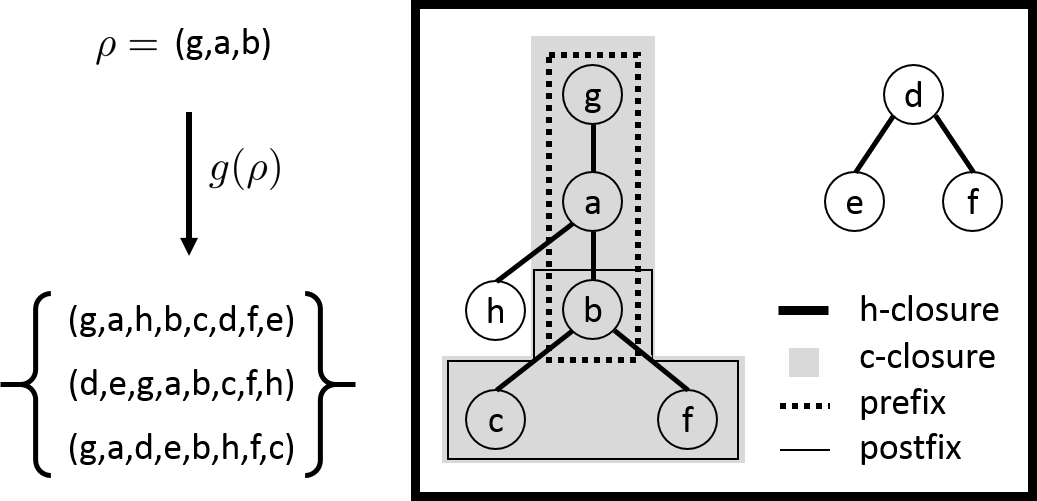}
		\caption{Illustration of the h-closure, c-closure, prefix and postfix of order $(g,a,b)$.}
		\label{fig:closureTypes}
\end{figure}

\subsection{GPMiner}
\label{sec:closedAlgo}

Our algorithm \emph{Generating Prefix Miner} (GPMiner) for mining closed rankings is based on TESMA and exploits Lemma  \ref{lem:algo2} to prune subrankings that do not constitute a prefix of their c-closure. Thus, for a new subranking built by tail extension, GPMiner not only checks the frequency but also whether this ranking is a prefix of its c-closure---compared to TESMA, this is the main difference of the algorithm.
GPMiner traverses the prefix search space in a depth first manner.
Given a candidate subranking $\rho|o$ (generated by tail extension) the algorithm follows three steps:

\begin{enumerate}
\item \textbf{Frequency check:} Like in TESMA, $\rho|o$ is frequent if $g(\rho) \cap g(last(\rho)|o)$ is frequent.
\item \textbf{Computing h-closure:} If $\rho|o$ is frequent, the h-closure $h(\rho|o)$ is computed.
\item \textbf{Prefix test:} At least $\rho|o$ has to be a prefix of its c-closure.
Due to implementation issues, the test is conducted on the h-closure (see Section \ref{sec:prefixtest} for details).
If $\rho|o$ is not a prefix, the subtree with $\rho|o$ as a root is pruned.
\end{enumerate}

To understand the main procedure (Algorithm \ref{algo:deepGPMiner}), we present an example based on the database in Table \ref{tab:exDataSet}; the example is also illustrated in Figure \ref{fig:GPMinerBeispiel}. We assume an absolute frequency threshold of $2$. Suppose we already know that $\rho|o_l = (a,b)$ is frequent. We select the first possible extension and generate the candidate $\rho|o_l|o = (a,b,c)$ (line \ref{alg:ln:a5l1}).
We find that $(a,b,c)$ is frequent (line \ref{alg:ln:a5l3}), since $g(\rho | o_l) \cap g(o_l|o) = g((a,b)) \cap g((b,c)) = \{t_1,t_2\} \cap \{t_1,t_2,t_4\} = \{t_1,t_2\}$ and $\#\{t_1,t_2\} = 2$.
Now we have to determine if $g((a,b)) = g((a,b,c))$ (line \ref{alg:ln:a5l4}).
If so, then $(a,b)$ is not closed and has to be deleted from the result list.
Here, however, $g((a,b)) = \{t_1,t_2,t_3\} \neq \{t_1,t_2\} = g((a,b,c))$, so we continue with computing the h-closure $h((a,b,c))$ (lines \ref{alg:ln:a5l12} and \ref{alg:ln:a5l13}), which is $\{(a,b,c),(a,b,e),(a,d)\}$.
We perform the prefix test on $(a,b,c)$ (line \ref{alg:ln:a5l14}).
Thanks to Lemma \ref{lem:algo2}, we know that if $(a,b,c)$ is a prefix of one element in $c((a,b,c))$, then $(a,b,c)$ is a prefix of all elements in $c((a,b,c))$.
Since in general $c(\rho) \subseteq h(\rho)$, we have to find one ranking in $h((a,b,c)) = \{(a,b,c),(a,b,e),(a,d)\}$ for which $(a,b,c)$ is a prefix. In this case, we find $(a,b,c)$, thus $(a,b,c)$ is a generating prefix and the algorithm continues with the new candidate $(a,b,c,d)$.
However, $(a,b,c,d)$ does not pass the frequency test and its subtree is pruned.
Likewise, $(a,b,c,e)$ is not frequent and therefore pruned.
The next candidate $(a,b,d)$ is frequent but does not pass the prefix test.
Hence we continue with $(a,b,e)$.
It passes the frequency test and also the prefix test.
In addition, $\#g((a,b,e)) = \#g((a,b))$.
Hence $(a,b)$ can not be closed and is deleted from the result list after all possible extensions (i.e., paths to child nodes) have been explored (line 24).
The candidate $(a,b,e,c)$ does not pass the frequency test either, whereas $(a,b,e,d)$ is frequent and a prefix of its c-clousure. The last candidate $(a,b,e,d,c)$ for the $(a,b)$-tree is again not frequent.
Thus, the mining process results in the closed rankings $\{(a,b,c),(a,b,e),(a,b,e,d)\}$ for the $(a,b)$-tree.
To complete the mining procedure, the trees for all other frequent pairs have to be traversed, too.

\begin{figure}
	\centering
		\includegraphics[width = 0.9\textwidth]{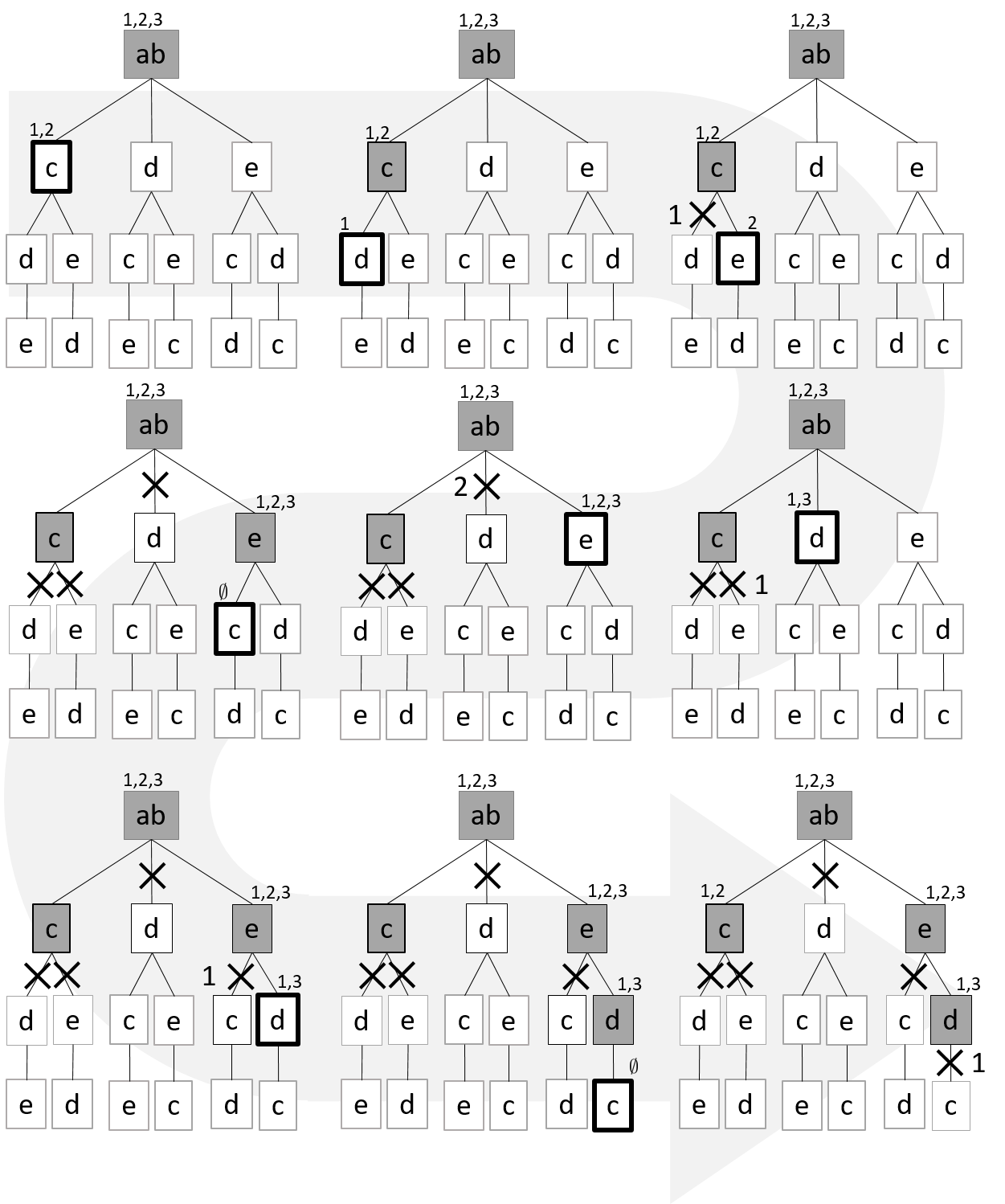}
		\caption{Mining for closed subrankings in the $(a,b)$-tree based on the data in Table \ref{tab:exDataSet}. The small numbers above the nodes indicate the transactions the subranking is contained in. The number 1 next to a cross indicates a pruning step because of infrequency, whereas 2 indicates pruning because of not being a generating prefix. The mining process of the $(a,b)$-tree results in the closed rankings $\{(a,b,c),(a,b,e),(a,b,e,d)\}$.}
		\label{fig:GPMinerBeispiel}
\end{figure}

The Algorithm \ref{algo:GPMiner} is responsible for initializing the required data structures and variables and performs the first mining step. The following list gives an overview of the data structures and variables:
\begin{itemize}
\item[$OOT$] Data structure to hold the frequent pairs together with their g-closures (Algorithm \ref{algo:HHM}).
\item[$s1p$] support-1-preference matrix required to compute the h-closure of a ranking
(Algorithm \ref{algo:s1p}).
\item[$lp_k$] A set of 2-rankings used to update the $s1p$ matrix during depth search (Algorithm \ref{algo:s1p}).
\item[$\ct$] A bit vector representing a set of transactions (rankings).
\item[$\ct_{\rho}$] A bit vector representing the g-closure $g(\rho)$.
\item[$l_k$] Set to store all closed subrankings of length $k$.
\item[$L$] Global set that stores the sets $l_k$.
\end{itemize}


\begin{algorithm}
\SetAlgoNoLine
		\KwIn{data}
		\KwOut{A List $L$ of lists, whereas each list stores rankings of the same size.}
			$OOT \leftarrow HHM(data)$\tcp*[r]{Global variable}
			$L \leftarrow \emptyset$\tcp*[r]{List (Global variable)}
			\For{$i \rightarrow |O|-1$}{
				$l_i \leftarrow \emptyset$\;
				$L(i) \leftarrow \{l_{i}\}$\;
			}
			\For{$(o_i,OT_{o_i}) \in OOT$}{
				\For{$(o,t_{o_i\concat o}) \in OT_{o_i}$}{
					$lp_1 \leftarrow \emptyset$\tcp*[r]{List of pairs in k-th depth level}
					$lp_1 \leftarrow lp_1 \cup \{o_i \concat o\}$\;
				}
			}
			\For{$(o_i,OT_{o_i}) \in OOT$}{
				\For{$(o,t_{o_i\concat o}) \in OT_{o_i}$}{
					$[s1p, lp_2]\leftarrow  S1P\_MATRIX(t_{o_i \concat o},lp_1)$\tcp*[r]{$s1p$: Matrix representing the h-closure}
					$hClosure \leftarrow HCLOSURE(t_{o_i\concat o},s1p)$\;
					$postfix \leftarrow R\_PREFIX\_TEST(hClosure,o_i \concat o)$\;
					\If{$postfix \neq \emptyset$}{
						$l_1 \leftarrow l_{1} \cup \{(o_i \concat o)\}$\;
						$DeepGPMiner(o_i \concat o,t_{o_i \concat o},postfix,s1p,lp_2,2)$\;
					}
				}
			}
	\caption{GPMiner}\label{algo:GPMiner}
\end{algorithm}

						%
						%

\begin{algorithm}
\SetAlgoNoLine
		\KwIn{$\rho \concat o_l,\ct_{\rho\concat o_l},postfix, s1p, lp_k, k$}						
			\For{$(o,\ct_{o_l \concat o}) \in OT_{o_l}$\label{alg:ln:a5l1}}{
				$\ct_{\rho \concat o_l \concat o} \leftarrow \ct_{\rho\concat o_l}\wedge \ct_{o_l \concat o}$\;
				\If{$|\ct_{\rho \concat o_l \concat o}| \geq \delta$\label{alg:ln:a5l3}}{
				\tcp{$\delta$ is the absolute frequency threshold}
					\eIf{$\ct_{\rho \concat o_l \concat o} = \ct_{\rho \concat o_l}$\label{alg:ln:a5l4}}{
						$currPostfix \leftarrow R\_PREFIX\_TEST(\rho\concat o_l\concat o, postfix, |O(\rho\concat o_l)|)$\;
						
						\If{$currPostfix \neq \emptyset$}{
							$l_k \leftarrow l_k \cup \{\rho\concat o_l \concat o\}$\label{alg:ln:a5l7}\;
							$longerFound \leftarrow true$\;
							$DeepGPMiner(\rho\concat o_l\concat o,\ct_{\rho\concat o_l\concat o} ,postfix, s1p, lp_{k+1}, k+1)$\;
						}}{
						$[s1pSucc, lp_{k+1}] \leftarrow S1P\_MATRIX(\ct_{\rho\concat o_l \concat o},lp_k,s1p)$\label{alg:ln:a5l12}\;
						$hClosure \leftarrow HCLOSURE(t_{\rho\concat o_l \concat o},s1p)$\label{alg:ln:a5l13}\;
						$currPostfix \leftarrow PREFIX\_TEST(hClosure,\rho\concat o_l \concat o)$\label{alg:ln:a5l14}\;
						\If{$currPostfix \neq \emptyset$}{
							$l_k \leftarrow l_k \cup \{\rho\concat o_l \concat o\}$\;
							$DeepGPMiner(\rho\concat o_l \concat o, \ct_{\rho \concat o_l \concat o},postfix, s1pSucc, lp_{k+1}, k+1)$\;
						}
					}
				}
			}
			\If{$longerFound$}{
				$l_{(k-1)} \leftarrow l_{(k-1)} \cap \rho\concat o_l$\label{alg:ln:a5l23}\;
			}
	\caption{DeepGPMiner}\label{algo:deepGPMiner}
\end{algorithm}

\subsubsection{Computing the h-closure}

As already mentioned, the main difference between TESMA and GPMiner is a pruning step, in which all subrankings that are not prefix of their c-closure are pruned. In practice, the prefix test is not done on the c-closure but on the h-closure. This is possible, since if there is a $\rho' \in h(\rho)$ such that $\rho$ is prefix of $\rho'$, then $\rho$ is prefix of $c(\rho)$ according to Lemma \ref{lem:algo2}. The h-closure $h(\rho)$ is in principle nothing else than the maximal intersection $\bigcap_{\rho' \in g(\rho)}^* \rho'$. But computing the intersection of two rankings is not as trivial as it is for sets. One way is to divide the operation into two steps. The first is to apply an ordinary set intersection between the sets of 2-rankings representing the transactions (Algorithm \ref{algo:s1p}). For instance, $\{(a,b),(a,c)\}$ is the set intersection between $a \succ b \succ c$ represented as $\{(a,b),(a,c),(b,c)\}$ and $a \succ c \succ b$ represented as $\{(a,b),(a,c),(c,b)\}$. As a result, we obtain the h-closure represented by a set of 2-rankings. This set representation is not suitable for performing the prefix test, however. Therefore, in a second step, the set representation is transformed into a tree representation (Algorithm \ref{algo:hclosure}).

The first step to compute the h-closure $h(\rho)$ for some candidate $\rho$ is to compute the intersection of all transactions (rankings $\rho_i$) contained in the g-closure $g(\rho)$. This is done by the algorithm S1P\_MATRIX (Algorithm \ref{algo:s1p}). To do so efficiently, it takes advantage of the information stored in $OOT$. Given a 2-ranking $(i,j)$ and its transaction vector $\ct_{i,j}$, $(i,j)$ is shared by all transactions in $\ct_{\rho}$ if $\ct_{i,j}$ AND $\ct_{\rho} = \ct_{\rho}$. To compute the intersection, this test is applied to each of the $|O|^2$ 2-rankings. One advantage is that $|O|$, the number of objects, is normally quite small compared to the size of the g-closure. 

Another advantage becomes obvious from the following considerations: If the ranking $\rho$ is frequent and a prefix of its c-closure, $\rho$ will be extended in the next level of recursion (Algorithm \ref{algo:deepGPMiner}). Let $\rho \concat o$ be one of the possible extensions. We know that $g(\rho \concat o) \subset g(\rho)$ and hence $h(\rho) \subset h(\rho \concat o)$. This means that all 2-rankings in $h(\rho)$ are also in $h(\rho \concat o)$. Consequently, in a deeper recursion level, only the 2-rankings $(i,j) \notin h(\rho)$ have to be tested. These 2-rankings are stored in the set $lp_k$, where $k$ indicates the current recursion depth. The result of an intersection is stored in an $|O| \times |O|$ matrix called \textit{support-1-preference matrix} ($s1p$-matrix). An entry $e_{j,i}$ in this matrix is $1$ if $(i,j) \subset h(\rho)$ and $0$ otherwise. 


\begin{algorithm}
\SetAlgoNoLine
		\KwIn{$\ct$,$lp$,$s1p$}
		\KwOut{$s1p$, $lp_r$}
			$lp_r \leftarrow lp$\;
			\For{$(i,j) \in lp$}{
				\If{$\ct_{i|j} \supset \ct$}{
					$s1p[j][i] \leftarrow 1$\tcp*[r]{$s1p[rows][columns]$}
					$lp_r \cap \{(i,j)\}$\;
				}
			}
	\caption{S1P\_MATRIX}\label{algo:s1p}
\end{algorithm}

\subsubsection{Construction of trees}

After the $s1p$-matrix is computed, it needs to be converted into a set of trees, for which the HCLOSURE procedure is responsible (Algorithm \ref{algo:hclosure}). In principal, the $s1p$-matrix can be seen as a representation of a directed graph $G_{s1p} = (V,E)$. Since a path in this graph defines a ranking, we will subsequently denote them by $\rho$. The task is to find all longest paths $\rho$, i.e., those paths for which no other path $\rho'$ exists such that  $\rho \subset \rho'$. The basic procedure to find longest paths is as follows (see Algorithm \ref{algo:hclosure} for details):
\begin{enumerate}  
	\item Identify all sources (i.e., nodes with only outgoing edges).
	\item For each source, span a tree and traverse this tree in a depth-first manner.
	\begin{enumerate}
\item Given a node $v \in V$, let $\mathcal{A}(v)$ be the set of nodes on the path from the root (which is included) to the parent node of $v$ (which is excluded). If $v$ is the current node and $y(v)$ its value, delete all nodes $w$ with $w = child(u)$, $u \in \mathcal{A}(v)$ and $y(v) = y(w)$.
	\end{enumerate}
\end{enumerate}
An example is given in Figure \ref{fig:Intersec}. If $\rho$ is the current candidate, a ranking $\rho_1$ is first selected from the set of transactions $ \ct_{\rho} = g(\rho)$ (Algorithm \ref{algo:hclosure}, line \ref{alg:ln:a7l1}).
Ideally, $\rho$ is the smallest ranking in $\ct_{\rho}$, but since assurance of this property requires sorting of all rankings, we simply pick the first ranking in the original order. Now, $\rho$ is used to permute the rows and columns of the $s1p$-matrix such that all 1-entries are in the lower triangular matrix.
Keep in mind that the diagonal of $s1p$ is filled with zeros at the beginning of the procedure.
If a node is visited during the depth first search, the entry for this node on the diagonal will be changed to $1$ (Algorithm \ref{algo:ncr}, line \ref{alg:ln:a8l13}). This excludes the node from the set of possible sources (the roots). 
In lines \ref{alg:ln:a7l4}--\ref{alg:ln:a7l6} (alg. \ref{algo:hclosure}), every object $o \in O(\rho_1)$ is wrapped into a node, which are collected in a list.
All nodes are possible roots for a tree and so the algorithm iterates through all nodes performing the tree construction process.
The first node associated with the first column and therefore with the object ranked first in $\rho$ must be a root of one of the final trees, since a longest path always starts with a source node, i.e., a node with indegree 0.
If the root candidate has not already been visited (algo. \ref{algo:hclosure} line \ref{alg:ln:a7l8}), it is added as a root to the set $hClosure$, which at the end contains the whole forest.
In line \ref{alg:ln:a7l10}, $\mathcal{A}[1]$ is assigned with the index of the corresponding object.
This is needed to determine the rows the entries of which should later be changed from 1 to 0.
Thus, changing an entry from 1 to 0 in the $s1p$-matrix is equal to deleting an edge in the $G_{s1p}$ graph (see Figure \ref{fig:Intersec}).
Finally, the recursive tree construction process is started (line \ref{alg:ln:a7l11}).


\begin{algorithm}
\SetAlgoNoLine
		\KwIn{$\ct$,$s1p$}
		\KwOut{$hClosure$}
			select $\rho \in \ct$\label{alg:ln:a7l1}\;
			$hClosure \leftarrow \emptyset$\tcp*[r]{list of treeNodes}
			$N_{List} \leftarrow empty$\;
			\For{$i \in |O(\rho)|$\label{alg:ln:a7l4}}{
				$N_{List} \leftarrow N_{List} \cup Node(\rho(i))$\;
			\label{alg:ln:a7l6}}
			\For{$i \in |O(\rho)|$}{
				\If{$s1p[\rho(i)][\rho(i)] = 0$\label{alg:ln:a7l8}}{
					$hClosure.add(N_{List}(\rho(i)))$\;
					$\mathcal{A}[1] \leftarrow \rho(i)$\label{alg:ln:a7l10}\;
					RECURSIVE\_CONNECT($i,A,s1p,N_{List},\rho$)\label{alg:ln:a7l11}\;
				}
			}
	\caption{HCLOSURE}\label{algo:hclosure}
\end{algorithm}

The procedure \textit{RECURSIVE\_CONNECT} iterates through the rows (one nodes children), starting with row $\rho(p+1)$ in column $\rho(p)$ (line \ref{alg:ln:a8l1}).
If a cell with entry $1$ is hit (line \ref{alg:ln:a8l2}), it is checked whether the corresponding node has already been visited (line \ref{alg:ln:a8l3}).
If not, it is added as a child to node $\rho(p)$ (line \ref{alg:ln:a8l4}).
Now, all entries in the current row of the ancestors are set to $0$ (lines \ref{alg:ln:a8l6}--\ref{alg:ln:a8l8}).
Let $\rho(j)$ be the current node (row).
Then, changing the entries for the ancestors from 1 to 0 is equivalent to deleting all of the ancestor's child nodes with the same value. 
Afterward, the index $i$ is added to the ancestors (line \ref{alg:ln:a8l9}).
In line \ref{alg:ln:a8l10}, the procedure calls itsself with updated parameters.
Finally, the considered object is marked as visited.


\begin{algorithm}
\SetAlgoNoLine
		\KwIn{$p,\mathcal{A},s1p,N_{List},\rho$}
			\For{$i = p+1 \to |O(\rho)|$\label{alg:ln:a8l1}}{
				\If{$s1p[\rho(i)][\rho(p)] = 1$\label{alg:ln:a8l2}}{
					\If{$s1p[\rho(i)][\rho(i)] = 0$\label{alg:ln:a8l3}}{
						$N_{List}(\rho(p)).addAsChild(N_{List}(\rho(i)))$\label{alg:ln:a8l4}\;
					}
					\For{$j=1 \to |\mathcal{A}|$\label{alg:ln:a8l6}}{
						$s1p[\rho(i)][\rho(\mathcal{A}[j])] \leftarrow 0$\;
					}\label{alg:ln:a8l8}
					$\mathcal{A}[|\mathcal{A}|] = \rho(i)$\label{alg:ln:a8l9}\;
					RECURSIVE\_CONNECT($i,\mathcal{A},s1p,N_{List},\rho$)\label{alg:ln:a8l10}\;
				}
			}
			$s1p[\rho(p)][\rho(p)] \leftarrow 1$\label{alg:ln:a8l13}\;
	\caption{RECURSIVE\_CONNECT}\label{algo:ncr}
\end{algorithm}

\begin{figure}
	\centering
		\includegraphics[width = 1.0\textwidth]{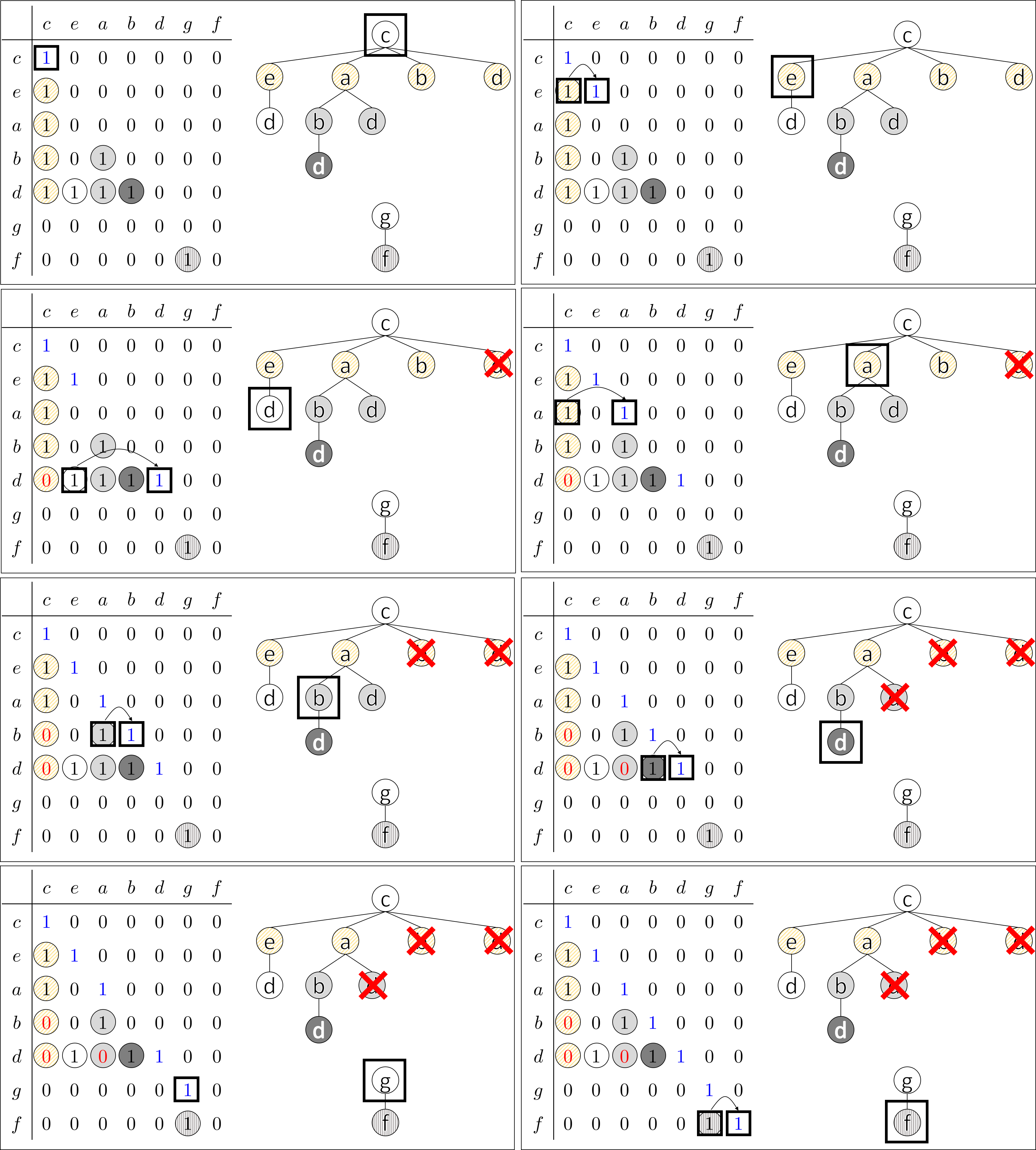}
		\caption{Illustration of the HCLOSURE algorithm. On the left side of every subfigure the s1p matrix is illustrated, on the right side the corresponding tree. The node framed in black is the node currently considered during the depth-first search.}
		\label{fig:Intersec}
\end{figure}

\subsubsection{Prefix test}
\label{sec:prefixtest}

Testing if a subranking $\rho \concat o$ is a prefix of its c-closure can be done in a depth-first manner. Two cases need to be considered. First, in the case where $g(\rho\concat o) = g(\rho)$, only algorithm R\_PREFIX\_TEST($\rho \concat o$, $last(\rho)$, $|O(\rho)|$) has to be invoked (see Algorithm \ref{algo:recPostfixTest}). This is a recursive algorithm with termination condition $index > |O(\rho \concat o)|$ (line \ref{alg:ln:a9l1}). Unless the termination condition is met, every child of $root$ is tested for equality with $o$ (lines \ref{alg:ln:a9l4}--\ref{alg:ln:a9l5}), and the return value by success is the result of R\_PREFIX\_TEST($\rho \concat o$, $child$, $|O(\rho)| + 1$).  For the case $g(\rho \concat o) = g(\rho)$, the termination condition will already be fulfilled in the second call. The overall result is $o$ as new postfix root, if $\rho \concat o$ is prefix and $\rho \concat o$ will be stored together with its c-closure postfix, else the result is null and $\rho \concat o$ will be pruned.


\begin{algorithm}
\SetAlgoNoLine
		\KwIn{$\rho, root, index$}
			\If{$index$ $> |O(\rho)|$\label{alg:ln:a9l1}}{
				return $root$\;
			}
			\For{$child$ $\in$ $root$\label{alg:ln:a9l4}}{
				\If{$child$ $= \rho(index)$\label{alg:ln:a9l5}}{
					return R\_PREFIX\_TEST($\pi$,$child$,$index+1$)\;
				}
			}
			return null\;
	\caption{R\_PREFIX\_TEST}\label{algo:recPostfixTest}
\end{algorithm}

In the second case, where $g(\rho|o) \neq g(\rho)$, the c-closure or rather the h-closure has to be computed. The prefix test on the whole h-closure is in principle also done with R\_PREFIX\_TEST. But prior to this, PREFIX\_TEST (Algorithm \ref{algo:postfixTest}) joins the individual trees in hClosure into one single tree (lines \ref{alg:ln:a10l2}--\ref{alg:ln:a10l3}) and calls R\_PREFIX\_TEST with $index = 0$ (line \ref{alg:ln:a10l5}).


\begin{algorithm}
\SetAlgoNoLine
		\KwIn{$hClosure, \rho$}
			$root \leftarrow \emptyset$\;
			\For{$tree$ $\in hClosure$\label{alg:ln:a10l2}}{
				$root.addChild(tree)$\label{alg:ln:a10l3}\;
			}
			return R\_PREFIX\_TEST($\rho$,$root$,1)\label{alg:ln:a10l5}\;
	\caption{PREFIX\_TEST}\label{algo:postfixTest}
\end{algorithm}

\subsection{Analysis of the algorithm}

To show the correctness of GPMiner, we need to show that it finds all closed rankings and that all rankings found by the algorithm are closed. 

We prove the first property by contradiction. Thus, suppose there is a closed ranking $\rho$ that the algorithm was not able to find. This can only happen if one of the prefixes, say, $\rho_p$ was either pruned or never found at all. A prefix is pruned if it is not frequent or not a prefix of its c-closure. If $\rho$ is frequent, then $\rho_p$ is obviously frequent, too, since $\rho_p \subset \rho$. Hence, $\rho_p$ can not be pruned because of frequency. In the case of $\rho \in c(\rho_p)$, $\rho_p$ is a prefix of its c-closure. But it can also happen that $\rho \notin c(\rho_p)$. If in this case $\rho_p$ is not a prefix of $c(\rho_p)$, $\rho_p$ can not be a prefix of $\rho$ either, since $\rho' \subset \rho$ for at least one $\rho'\in c(\rho_p)$. Thus, pruning $\rho_p$ because not being a prefix of its c-closure is again not possible. 

Now, consider the second reason for why $\rho$ might not be found, namely that $\rho_p$ has already not been found.
Obviously, the 2-prefix of $\rho$ is found and stored by GPMiner, even if it is not a generator of $\rho$ (since it is a generator and prefix of some other closed ranking). Consequently, there is a smallest $k>2$ for which the $k$-prefix of $\rho$ was not found. In other words, while the $(k-1)$-prefix has still been found by the algorithm, the $k$-prefix was not. However, this is not possible due to the correct extension of the algorithm from $k-1$ to $k$: GPMiner traverses the prefix-tree in a depth-first manner. Thus, if the $(k-1)$-prefix of $\rho$ is discovered and not pruned, the algorithm will extend it to the $k$-prefix of $\rho$ (Algorithm \ref{algo:deepGPMiner}, line \ref{alg:ln:a5l7}). Since $\rho$ is a frequent pattern, its $k$-prefix is frequent, too, and a prefix by definition. Hence, the $k$-prefix is definitely not pruned, which contradicts our assumption.

As for the second property, stating that only closed rankings are found, note that if $\rho$ is a non-closed ranking that is still considered by the algorithm, then it must be frequent and a prefix of a closed ranking---otherwise it would have been pruned. Now, since $\rho$ is a prefix, it will be extended. Moreover, since $\rho$ is not closed, one if its extensions will have the same frequency as $\rho$, hence it will be deleted after all possible extensions have been explored (Algorithm \ref{algo:deepGPMiner}, line \ref{alg:ln:a5l23}).

In order to analyze the time complexity of GPMiner, recall that we denote by $K$ be the number of objects and by $N$ the number of transactions. Moreover, let $L$ be the number of frequent rankings contained in the database. In principal, the only difference between TESMA and GPMiner is the pruning of all frequent subrankings that are not a prefix and generator at the same time. The effort to decide if a subranking should be pruned or not is the sum of the costs of the three subroutines S1P\_MATRIX, HCLOUSURE, and PREFIX\_TEST. Obviously, the cost of S1P\_MATRIX is $K^2$ in the worst case.
The cost for HCLOSURE is less than $K^2$,  in the $s1p$ matrix.
For estimating the complexity of HClOSURE we estimate how often an entry in the $S1P$-MATRIX is visited during the algorithm.
We distinguish between the entries on the diagonal and the entries below the diagonal.
An entry on the diagonal is visited at most $K-1$ times, which is the maximal number of preceding objects (nodes).
Since we have $K$ nodes, there are less than $K^2$ accesses to the diagonal entries.
Furthermore, the algorithm processes every entry below the diagonal only once  (through the children of the corresponding node) to determine if the entry is $0$ (no child) or $1$ (child).
In total, these are at most $K-1$ accesses (a node can at most have $K-1$ children). For $K$ nodes, we again obtain less than $K^2$ accesses altogether.
In addition, it is possible that an entry of the $S1P$-matrix is changed from $1$ to $0$.
This happens only once, however, and since there are at most $K(K-1)/2$ entries 1, we have again less than $K^2$ accesses.
In total, HCLOSURE requires less than $3K^2$ steps in the worst case.
Likewise, the cost of PREFIX\_TEST is upper-bounded by $K^2$: while traversing the ``hClosure-tree'', the PREFIX\_TEST procedure performs at most $K$ steps on each level and passes through at most $K$ levels. Summarizing, the pruning costs are of the order $\mathcal{K}^2$. In the worst case, every frequent ranking is also a closed ranking. Then, the running time for GPMiner is $\mathcal{O}(LK^3 + K^2 N)$.

\section{Experiments}
\label{sec:results}

In this section, we present a number of experiments with both synthetic and real data. These experiments are meant to analyze the behavior of TESMA and GPMiner with regard to properties such as dataset size ($N$), ranking size ($K$), and number of frequent patterns. Moreover, we like to demonstrate the advantages of our methods for mining rank data in comparison to methods for sequence mining. As already explained, such methods can in principle also be used, although they are not specialized to rank data.

To substantiate the usefulness of GPMiner for mining closed rankings, we also implemented another baseline called Post\_TESMA. This algorithm first calls TESMA to generate all frequent rankings, and then extracts all closed ones in a (fast) postprocessing step. 
There are two basic strategies to extract all closed patterns from the set of all frequent patterns: bottom-up and top-down.
The frequent patterns mined by TESMA are stored in a list of lists of orderings. The $i^{th}$ list of orderings stores all rankings of length $i+1$ (remember the smallest subranking is of length $2$). Due to the operating mode of TESMA, the lists of orderings are sorted lexicographically.
In the bottom-up strategy, an $(i+1)$-ranking $\pi$ of list $i$ is taken and compared with the $(i+2)$-rankings stored in list $i+1$ until an $(i+2)$-ranking $\pi' \supset \pi$ with the same frequency is found or the end of list $i+1$ is reached.
In the top-down strategy, an $(i+1)$-ranking $\pi'$ is compared with all $(i-1)$-rankings.
If an $(i-1)$-ranking $\pi \subset \pi'$ has the same frequency, it is deleted.
Practically, the bottom-up strategy was found to be significantly faster. This is mainly due to the fact that this strategy effectively exploits the data structure for storing the frequent patterns, especially the lexicographic order of the rankings.

All experiments in this section were done with $11$ repetitions. The first three runs where taken as warm up runs to acclimatize the Java JIT. The results were then obtained by averaging over the rest.

\subsection{Synthetic data}


An important advantage of synthetic data generation is that it allows for controlling certain properties and characteristics of the data.  
In a first experiment, we iteratively produced ranking datasets of constant size but with an increasing number of frequent (and closed) rankings. The generation process is as follows:
\begin{enumerate}
\item Generate a basic dataset of size $N$.
	\begin{enumerate}
		\item[a.] Generate $n$ rankings of size $K$ at random. These rankings constitute the core of the basic dataset.
		\item[b.] Replicate every core ranking $N/n$ times.
		\item[c.] Randomize the dataset by swapping every neighbored pair with a probability of $p$.
	\end{enumerate}
\item Mine all frequent rankings of the basic dataset with a frequency threshold of $\delta$, and store them in a list of lists $Fr\_LL$. The list $Fr\_LL(i)$ stores all rankings of size $i-1$.
\item Extract iteratively datasets from the mining result.
	\begin{enumerate}
		\item[a.] Set $i=1$, $j=1$, and $D_0 = \emptyset$.
		\item[b.] Chose a ranking $\pi$ of list $Fr\_LL(i)$ at random, remove it, and add $\pi$ to the dataset $D_{j-1}$ to produce the dataset $D_j$.
		\item[c.] Delete all subrankings of $\pi$ in $D_j$.
		\item[d.] Replicate $D_j$ until it has a size of $N$.
		\item[e.] If $Fr\_LL(i)$ is empty, set $i = i+1$.
		\item[f.] Set $j= j+1$ and continue with step a.
	\end{enumerate}
\end{enumerate}

The output produced by the algorithm is a set of datasets $\{D_1,\ldots,D_F\}$, where $F$ is the amount of frequent rankings mined from the basic dataset.
The dataset $D_i$ contains $i$ frequent rankings when mined with a frequency threshold of $1/(N+1)$.
The following settings were used for the experiment: $N = 100000$, $K=14$, $n = 4$, $p=0.1$, $\delta=0.01$.

\begin{figure}
\captionsetup[subfigure]{labelformat=empty}
	\centering
	\subfloat{
		\includegraphics[width=0.44\textwidth]{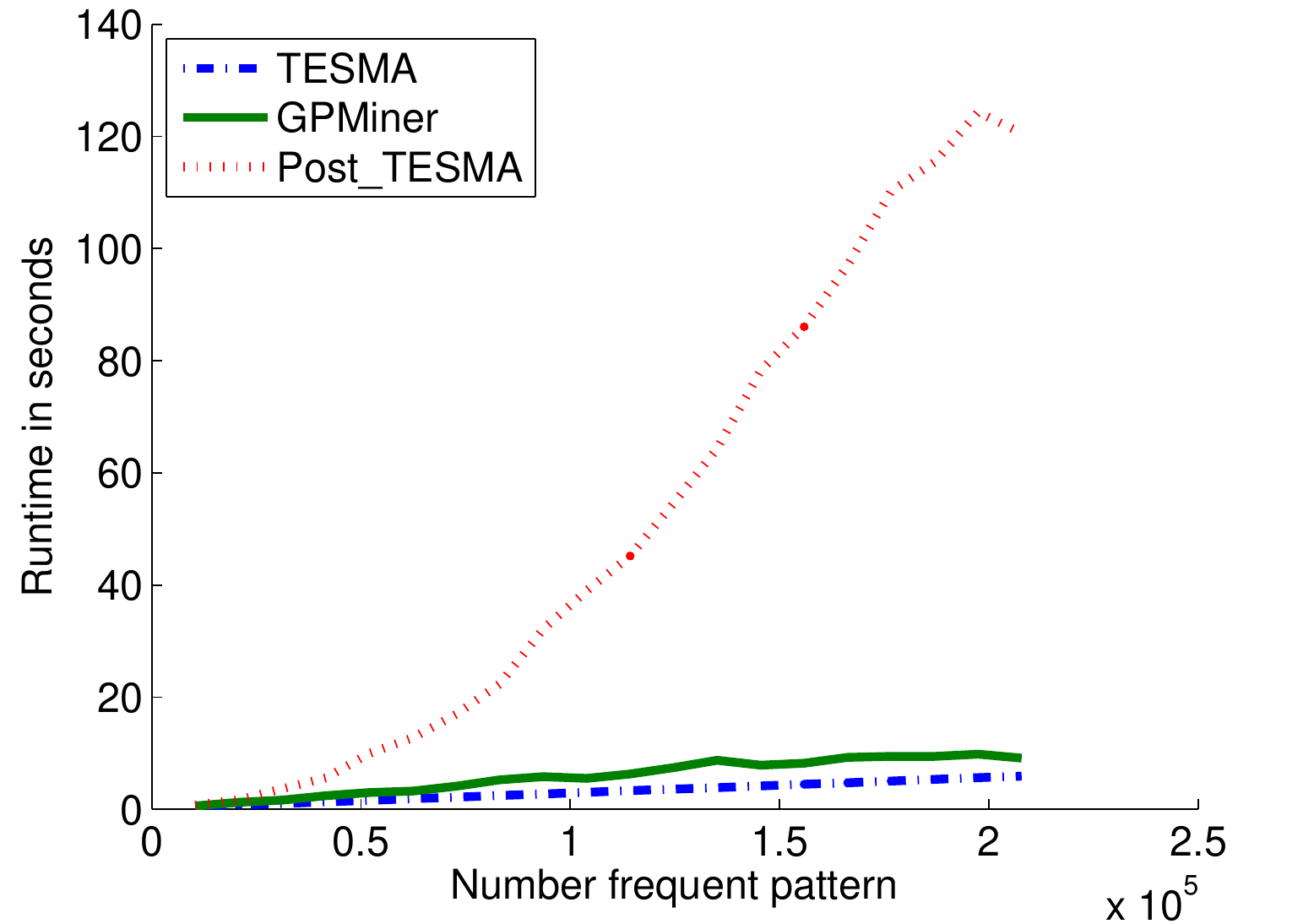}
		}
		\subfloat{
		\includegraphics[width=0.44\textwidth]{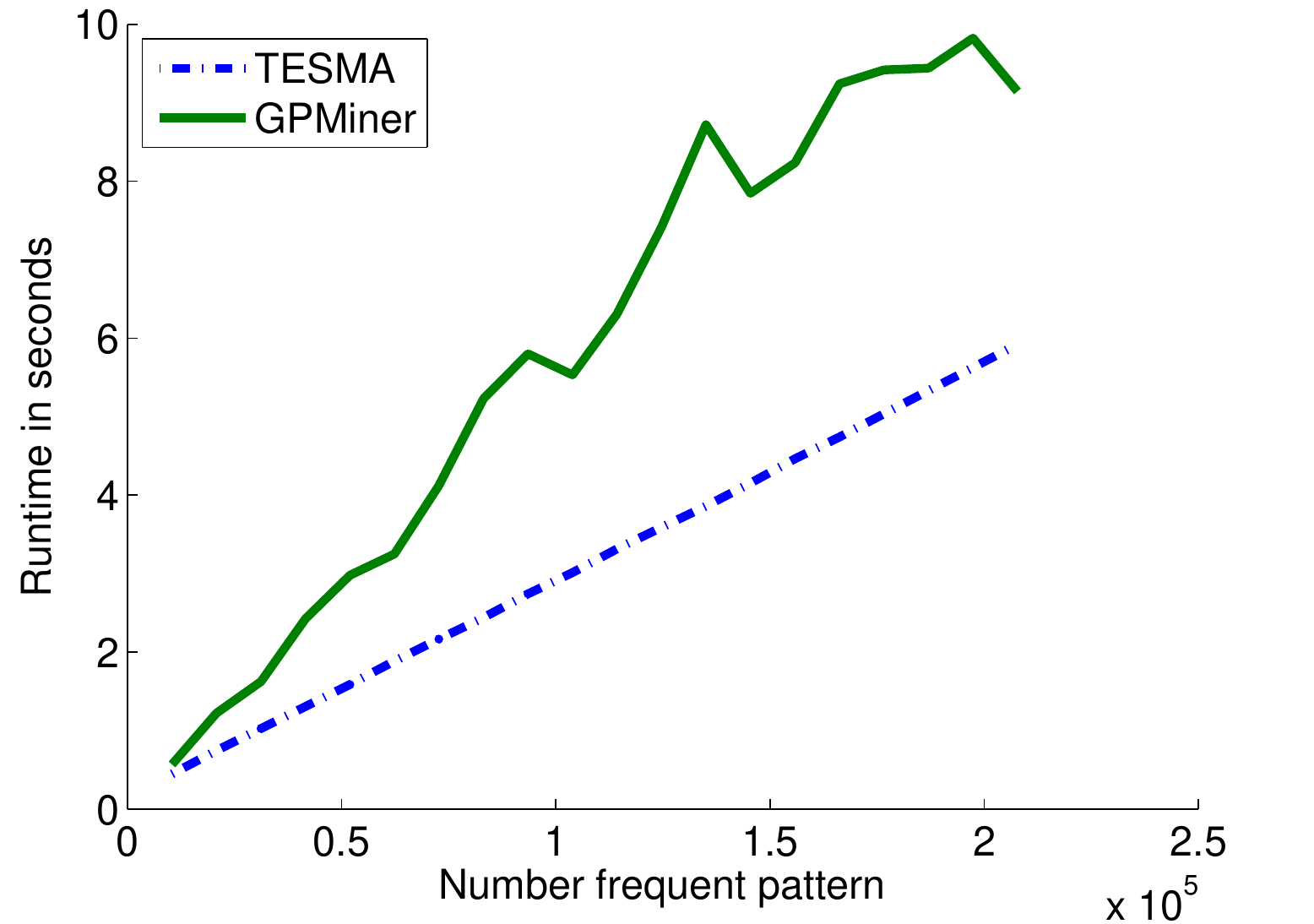}
		}\qquad
			\subfloat{
		\includegraphics[width=0.44\textwidth]{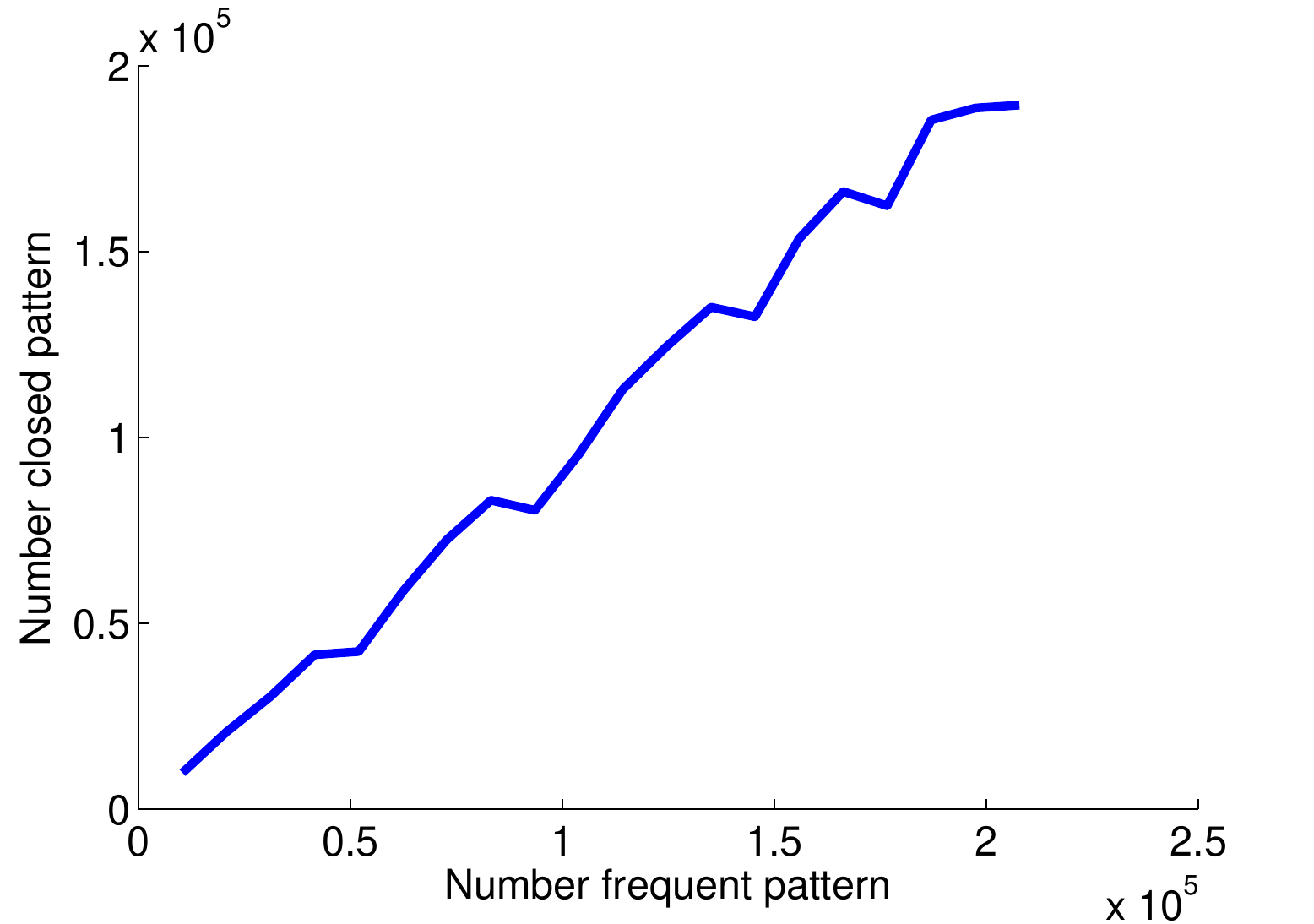}
	}
		\subfloat{
		\includegraphics[width=0.44\textwidth]{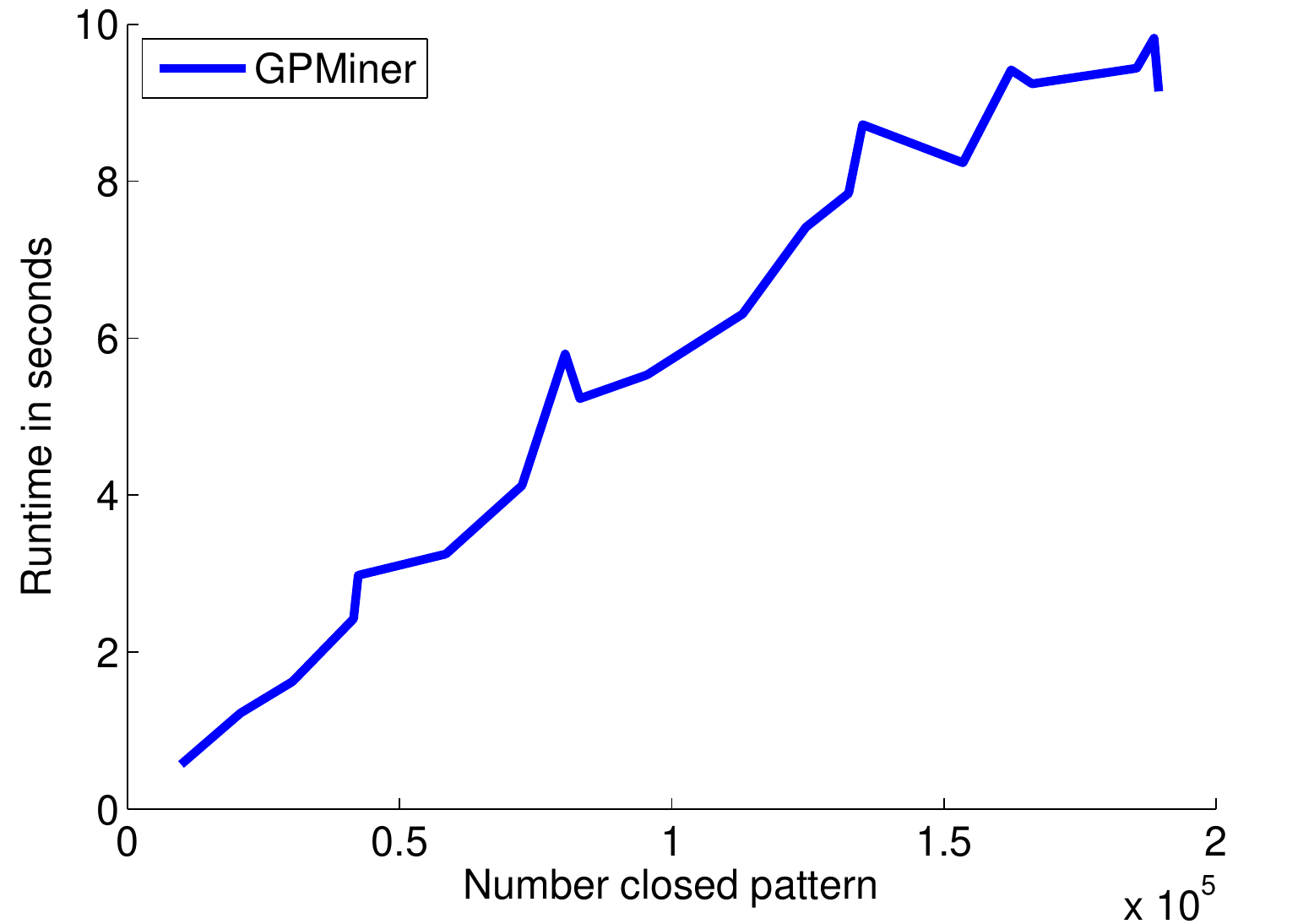}
		}
		\caption{Top: Runtime depending of the number of frequent patterns (on the right without Post\_TESMA). Bottom: Number of closed rankings against the number of frequent rankings (left), and the dependency between the runtime of GPMiner and the number of closed rankings (right).}
		\label{fig:incFr}
\end{figure}

Figure \ref{fig:incFr} shows the results of the first experiment. Since there are almost as many closed patterns as there are frequent ones, it is hardly surprising that TESMA shows the best performance. Strikingly, the runtime and the number of frequent rankings show an almost perfect linear dependency for TESMA.
Nevertheless, GPMiner is still several magnitudes faster than Post\_TESMA. 
Figure \ref{fig:incFr}(d) plots the runtime of GPMiner against the number of closed rankings. Again, a linear dependency can be recognized, although the curve is not very smooth.

\subsection{Semi-synthetic data I}
\label{sec:synDataII}

Our second experiment is conducted on data that is semi-synthetic in the sense of being a modification of real-world data. It is based on the well-known SUSHI dataset, which contains $5000$ rankings of $10$ different types of sushi. 
In a first series of experiments, we artificially increased the size of this real-world dataset by a factor of $v \in \{2,4,\ldots,20\}$.
To this end, the dataset is first copied $v$ times. Then, in each ranking of each copy, the neighbored pairs of items are switched with a probability of $0.01$.

\begin{figure}
	\centering
		\subfloat{
		\includegraphics[width=0.32\textwidth]{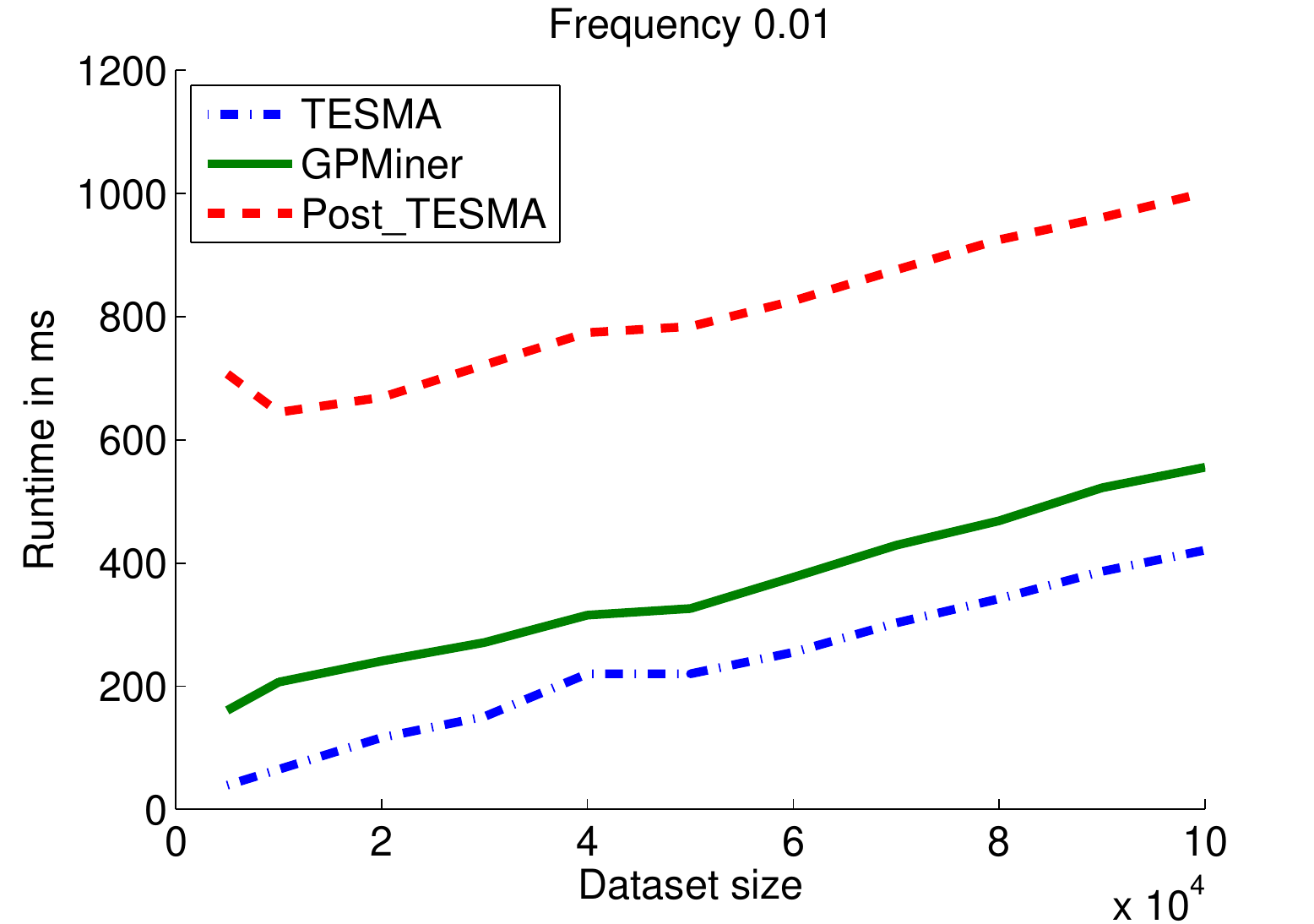}
		}
	\subfloat{
		\includegraphics[width=0.32\textwidth]{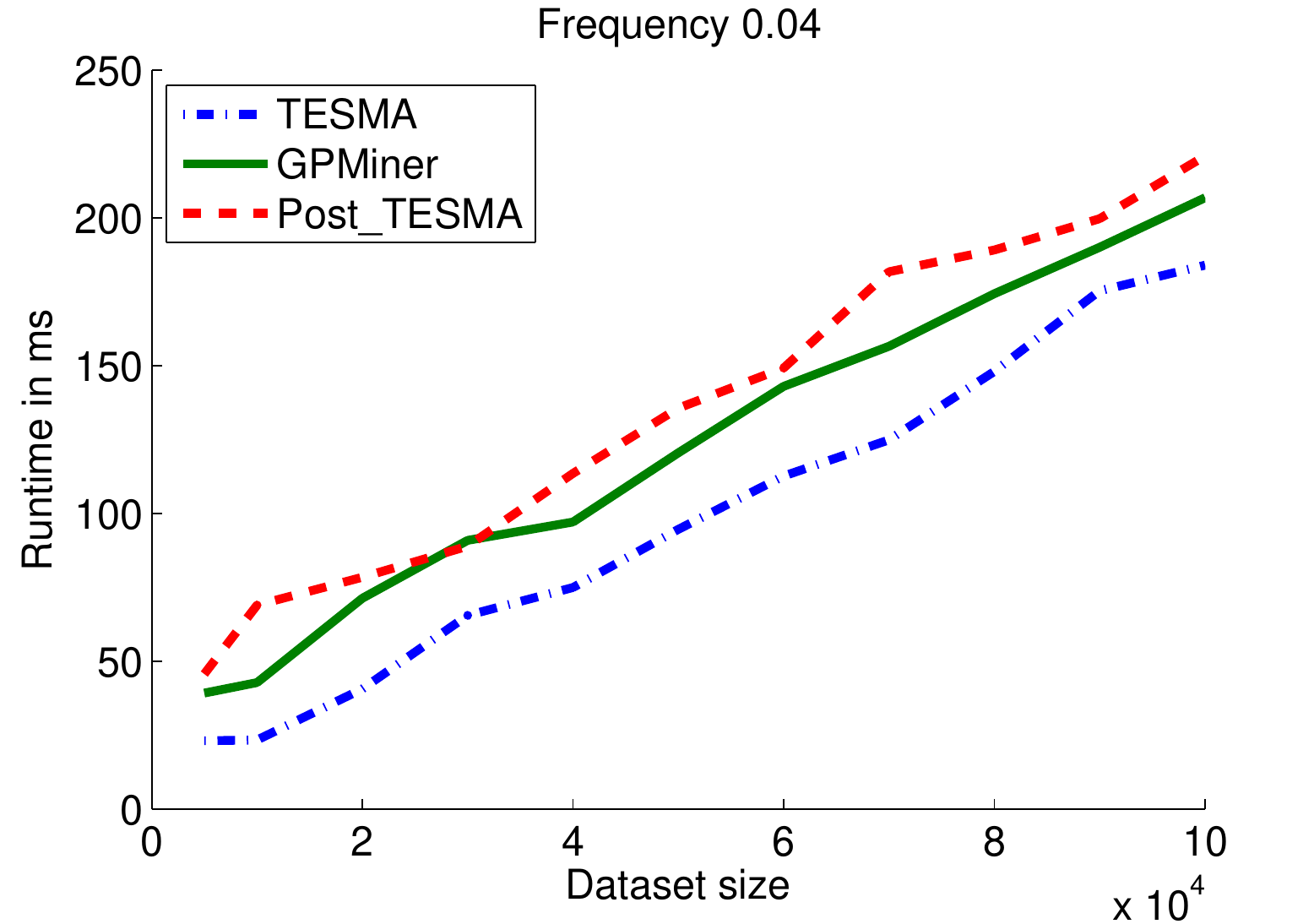}
	}
	\subfloat{
		\includegraphics[width=0.32\textwidth]{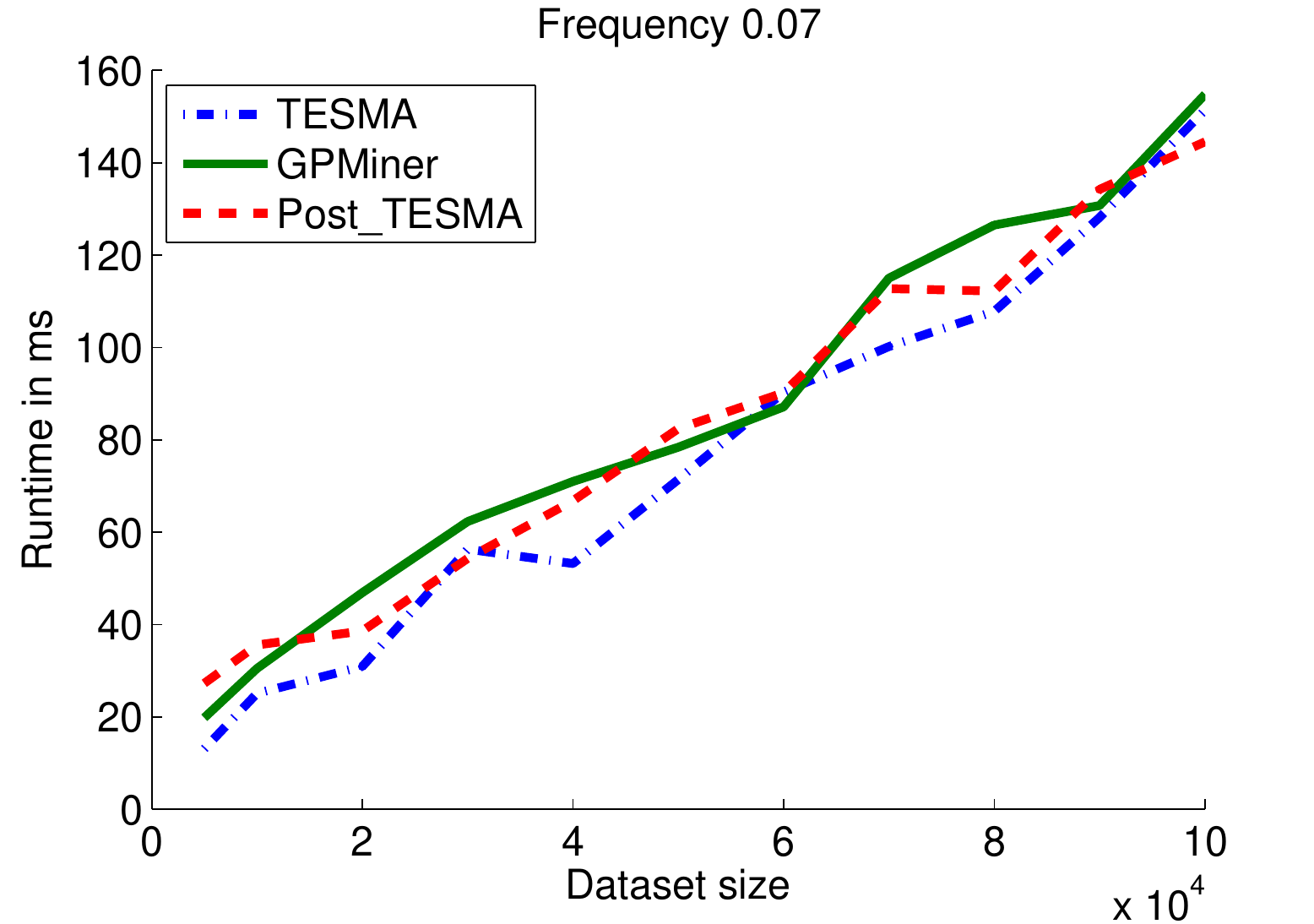}
	}
		\caption{Runtime depending on dataset size.}
		\label{fig:TvsGincN}
\end{figure}

The results are shown in Figure \ref{fig:TvsGincN}. As can be seen, the runtime again depends linearly on the dataset size for all three algorithms. Since the number of frequent (and hence closed frequent) rankings is decreasing with an increasing frequency threshold, the runtimes of all three algorithms are converging with increasing frequency.

\subsection{Semi-synthetic data II}
\label{sec:synDataIII}

In another series of experiments, we used the suhsi data of size $100000$ and increased the length of the rankings.
To this end, instead of inserting new objects purely at random (and hence producing a lot of noise), a 2-pattern $(i,j)$ is identified and extended in a systematic way. More concretely, if $\pi$ is a ranking of length $K$ and $\pi(j) = r$, the new object is randomly set on a position $r_{new}$ with $r < r_{new} \leq K+1$ (and all old objects with positions greater or equal $r_{new}$ are moved up by one position).
For instance, if the 2-pattern is $(a,d)$, then the possible extensions of the ranking $(a,c,d,e,b)$ are $(a,c,d,f,e,b)$, $(a,c,d,e,f,b)$, and $(a,c,d,e,b,f)$.
If a ranking does not contain the chosen 2-pattern, the new object is randomly set on a position $r_{new} \in \{ 1,  \ldots , K+1 \}$.

\begin{figure}
	\centering
		\subfloat{
		\includegraphics[width=0.32\textwidth]{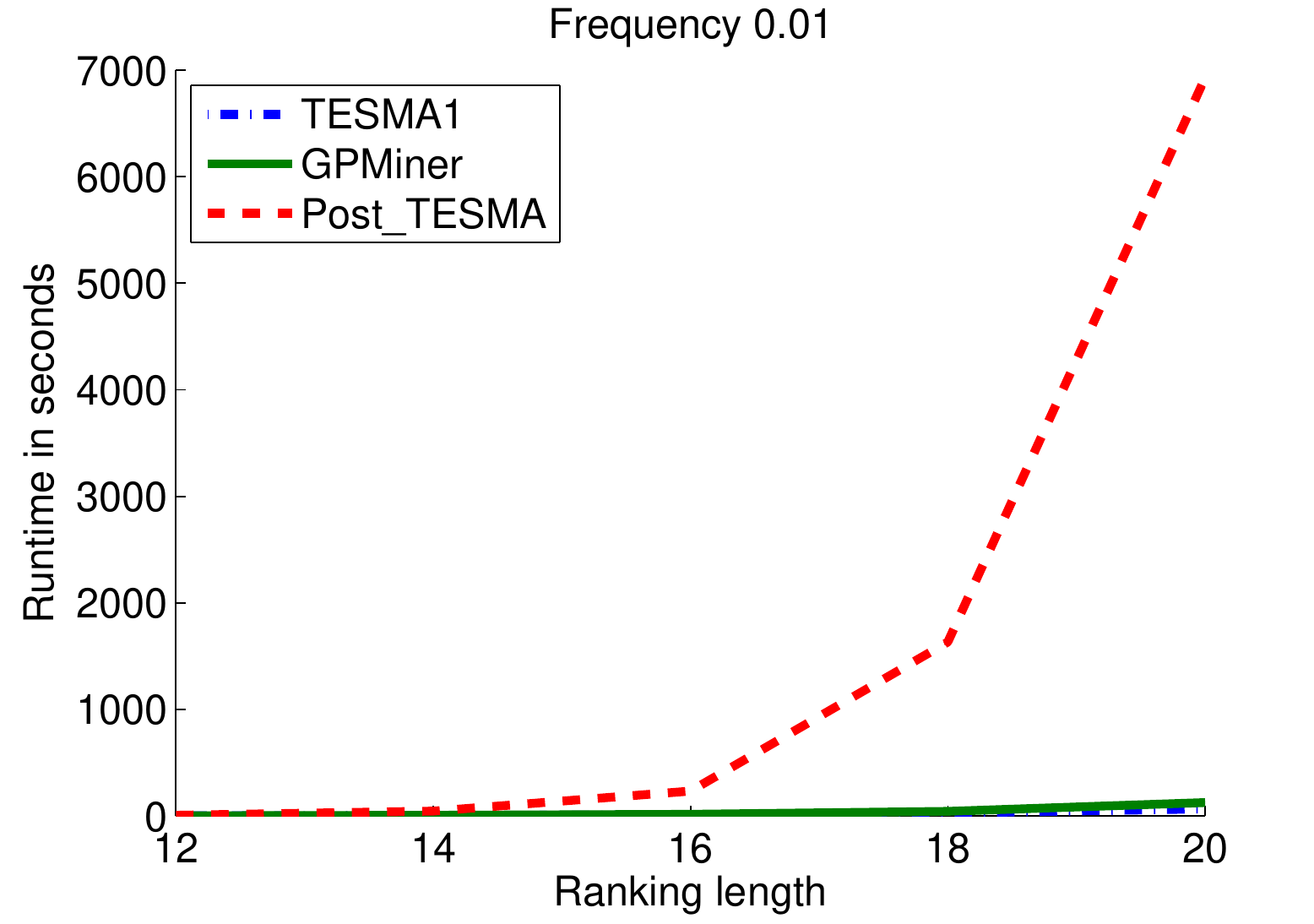}
		}
	\subfloat{
		\includegraphics[width=0.32\textwidth]{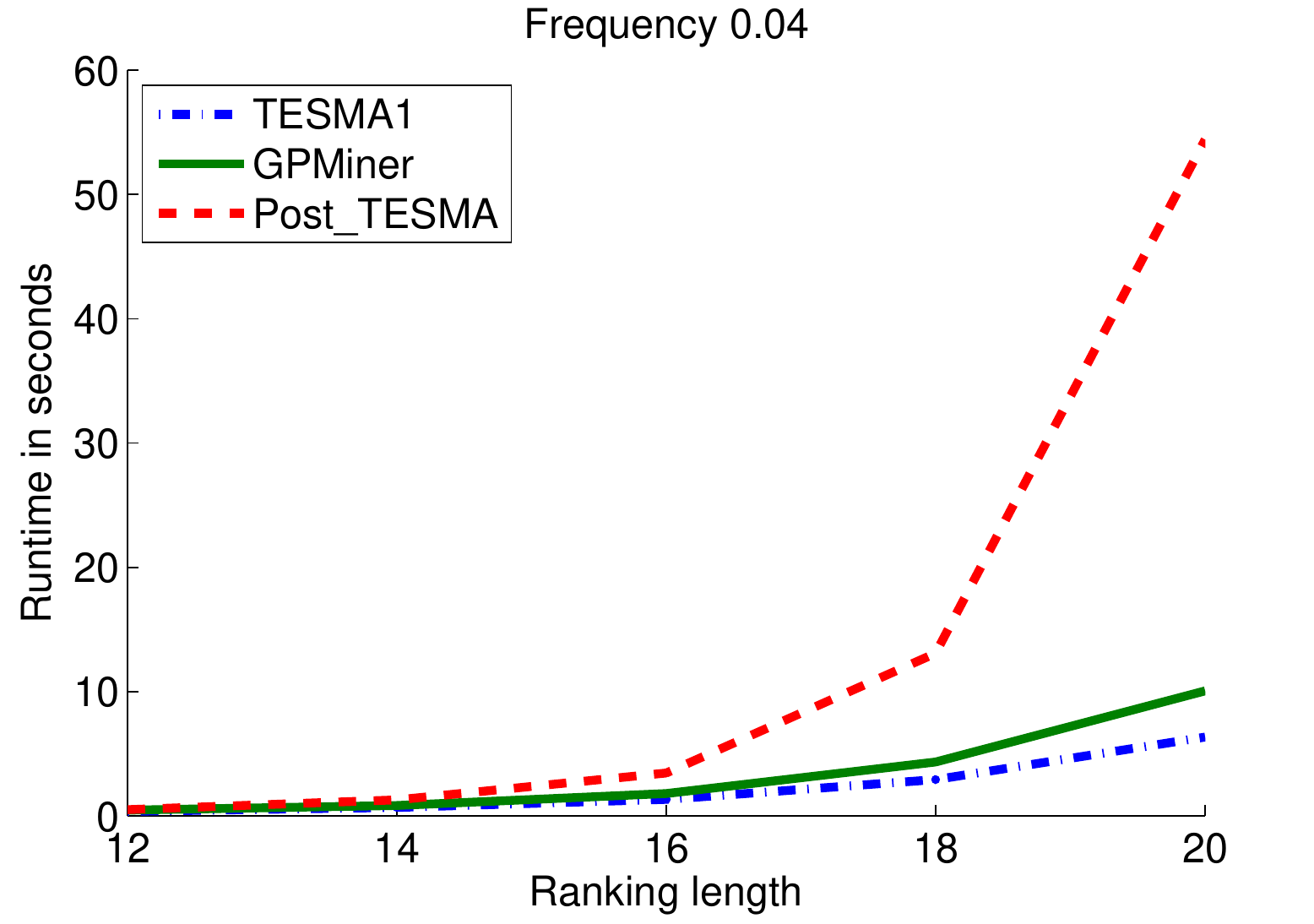}
	}
	\subfloat{
		\includegraphics[width=0.32\textwidth]{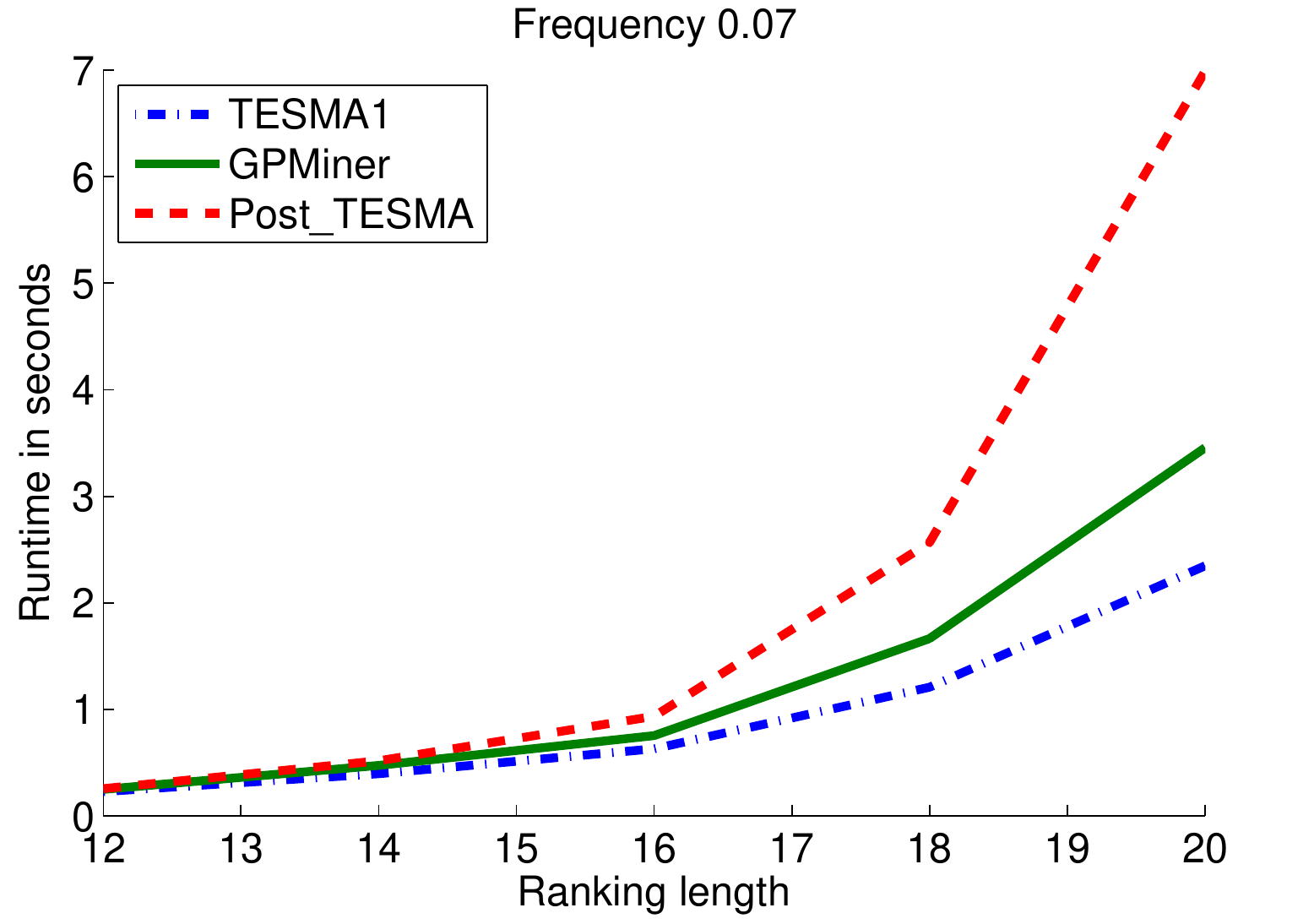}
	}
		\caption{Runtime dependency on dataset size.}
		\label{fig:TvsGincR}
\end{figure}

As can be seen in Figure \ref{fig:TvsGincR}, TESMA again performs best in this setting, suggesting a proportion between frequent and closed patterns that is in disfavor of GPMiner. Yet, GPMiner is again significantly faster than Post\_TESMA. Thus, when being interested in closed patterns, GPMiner would be the best choice.

\subsection{Comparison with sequence mining}

In Section \ref{sec:IandSvsR}, we already explained that mining rank data shares several commonalities with the problem of sequence mining. We also pointed out, however, that sequence mining is more general, and speculated that our methods are advantageous due to exploiting specific properties of rankings. The goal of this section is to validate this conjecture experimentally. 
To this end, we compared TESMA with the common sequence mining algorithms discussed in Section \ref{sec:IandSvsR}, using the implementation of the SPMF Open-Source Data Mining Library \cite{spmf}; in addition, we included the recent algorithms CM-SPAM and CM-SPADE, which have been shown to be very fast \cite{FournierViger2014}.


A first series of experiments was done using the semi-synthetic data from Sections \ref{sec:synDataII} and \ref{sec:synDataIII}. The frequency threshold was set to $0.01$. The results of the first experiment, in which the runtime\footnote{GSP was excluded from this experiment, as it simply too slow.} and memory consumption was analyzed when increasing the size of the data from $N=5000$ to $N=100000$, are shown in Figures \ref{fig:TESMAvsSPMFmem}(a) and and \ref{fig:TESMAvsSPMFrunN}. The results of a second experiment, in which the length of the rankings ($K$) was increased, 
can be seen in Figures \ref{fig:TESMAvsSPMFmem}(b) and \ref{fig:TESMAvsSPMFrunr}.

\begin{figure}
	\centering
		\subfloat[]{
		\includegraphics[width=0.45\textwidth]{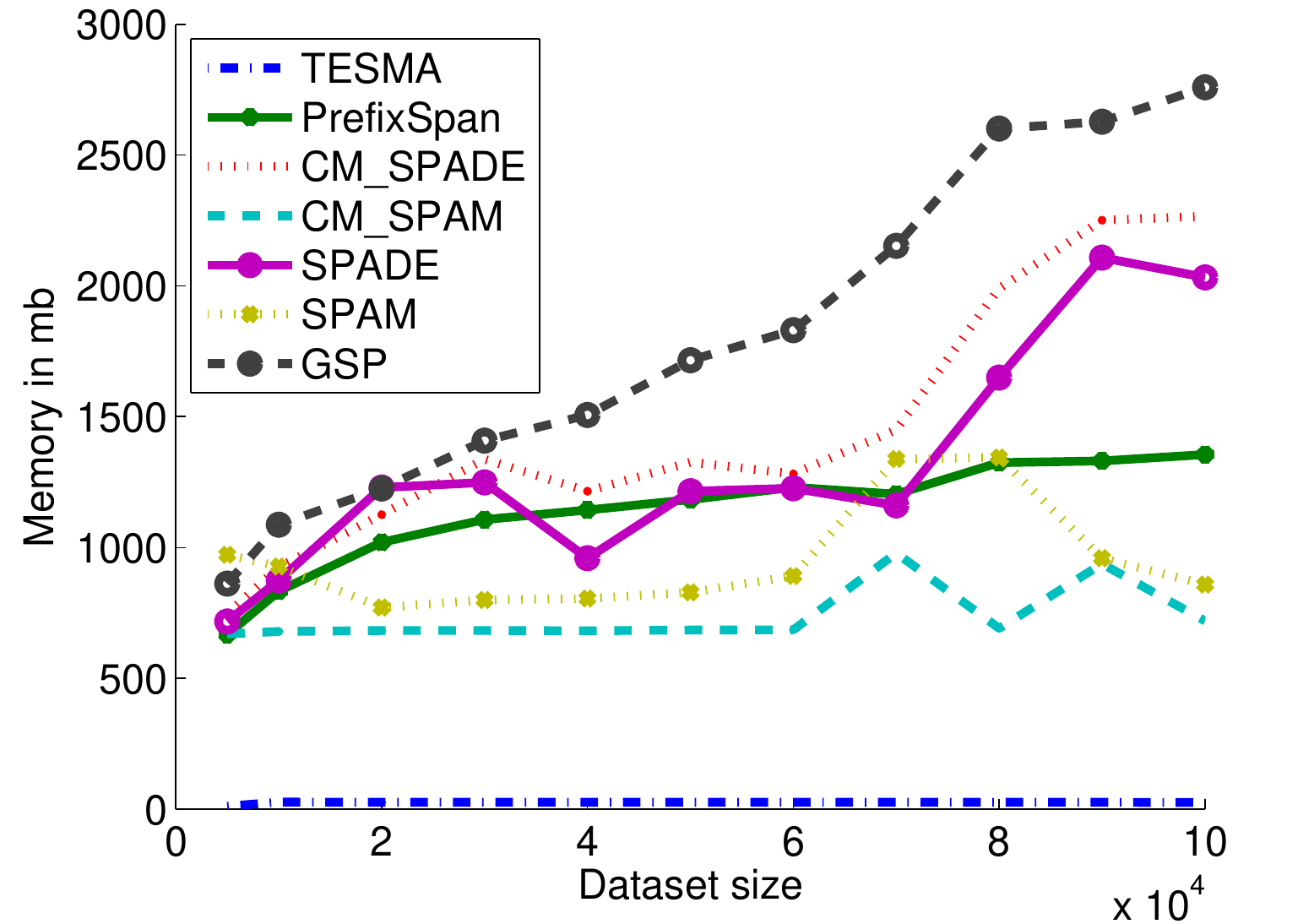}
		}
	\subfloat[]{
		\includegraphics[width=0.45\textwidth]{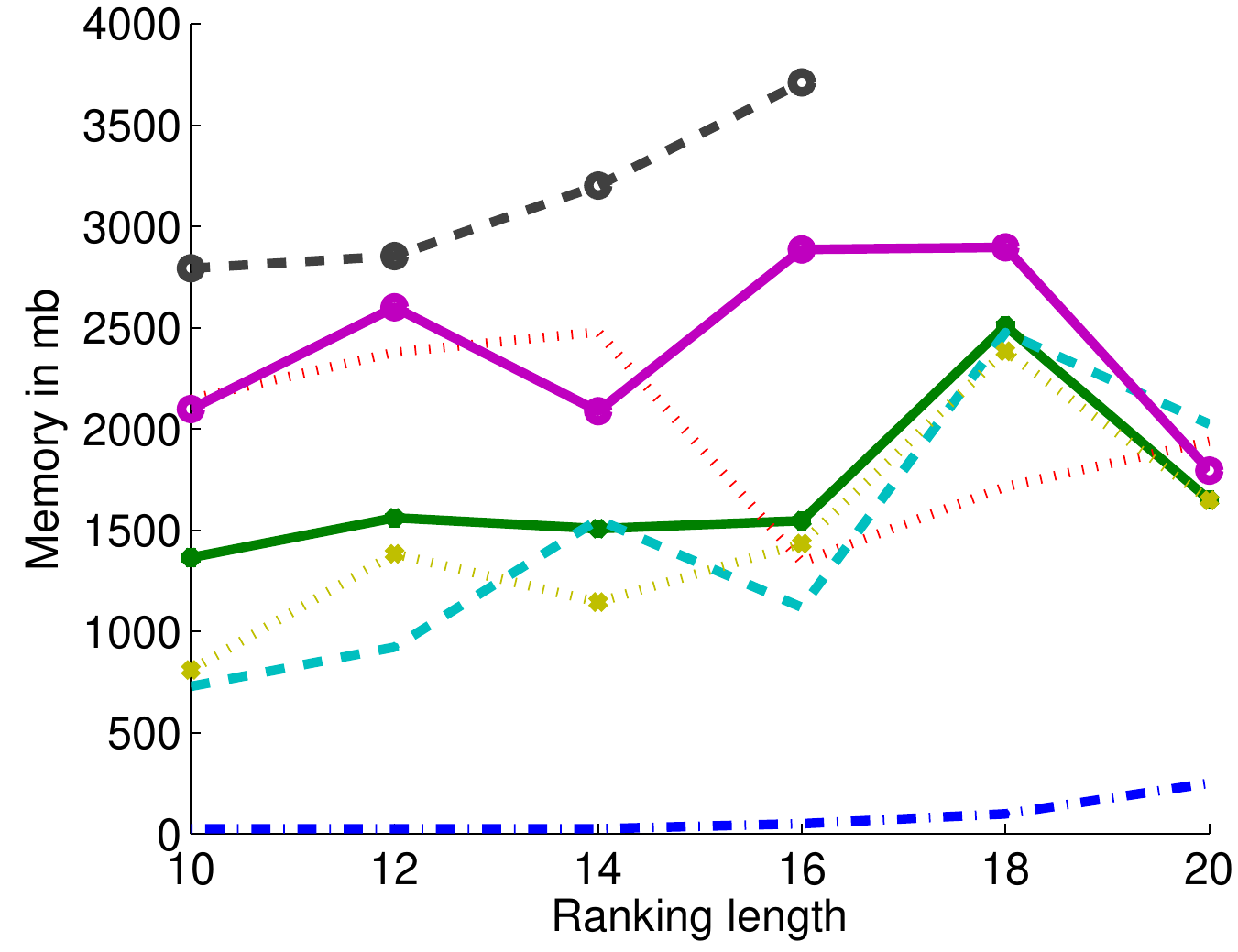}
	}
		\caption{Memory consumption of TESMA and several sequence miners as a function of the dataset size (a) and length of rankings (b).}
		\label{fig:TESMAvsSPMFmem}
\end{figure}

\begin{figure}
	\centering
		\subfloat[]{
		\includegraphics[width=0.45\textwidth]{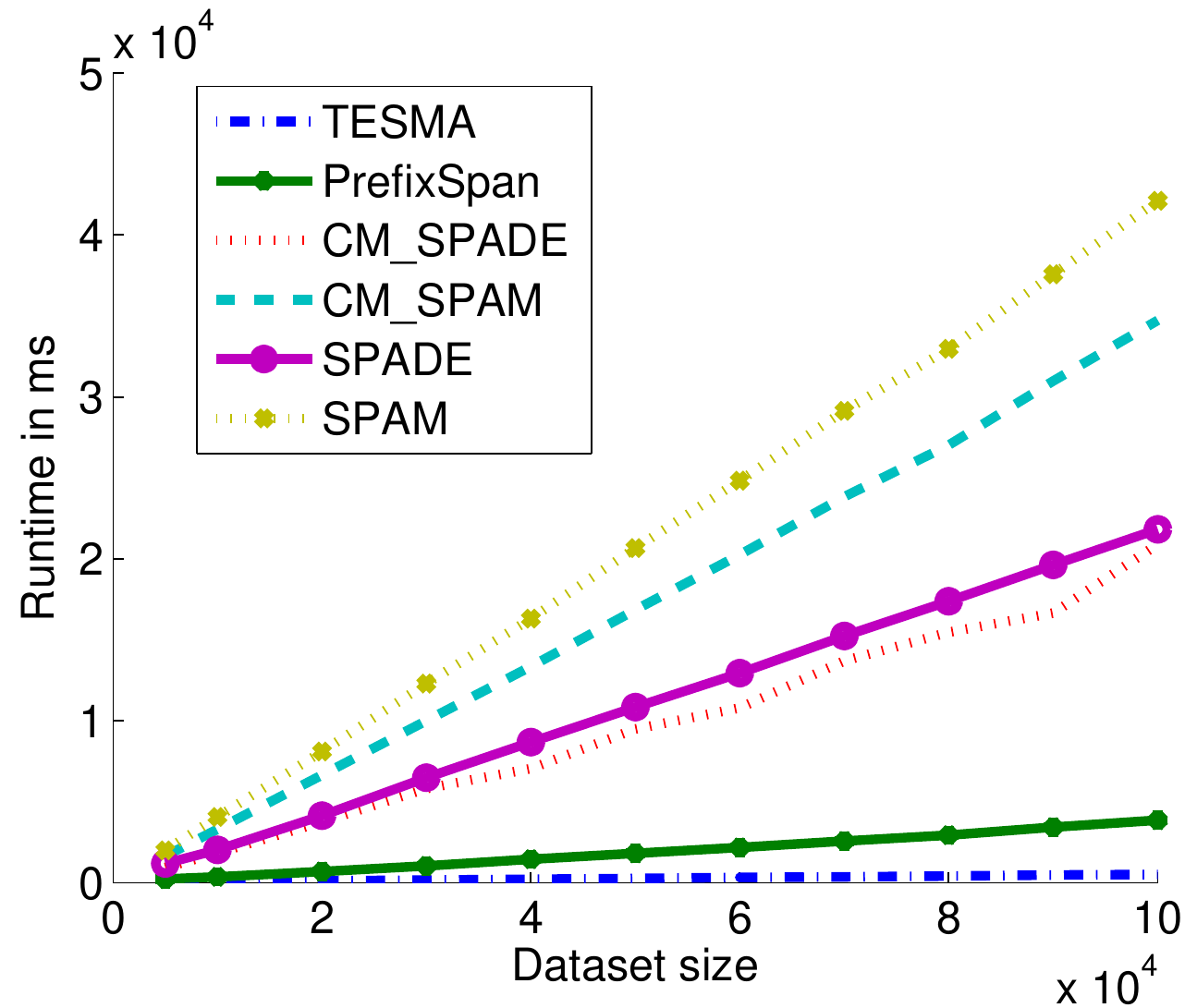}
		}
	\subfloat[]{
		\includegraphics[width=0.45\textwidth]{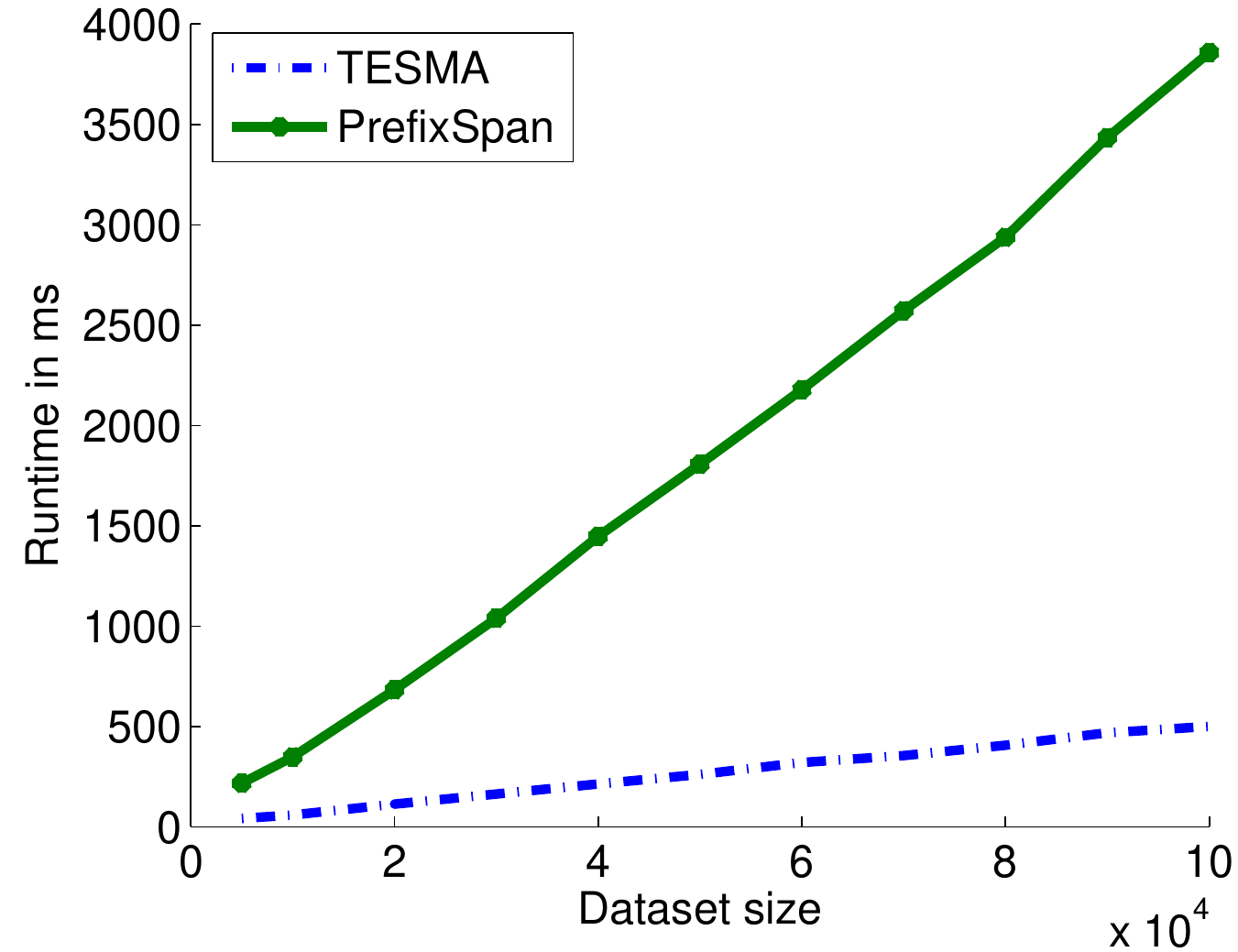}
	}
		\caption{Runtime of TESMA and several sequence miners as a function of dataset size. On the right, TESMA is compared to the fastest sequence miner found during the experiments.}
		\label{fig:TESMAvsSPMFrunN}
\end{figure}

\begin{figure}
	\centering
		\subfloat[]{
		\includegraphics[width=0.45\textwidth]{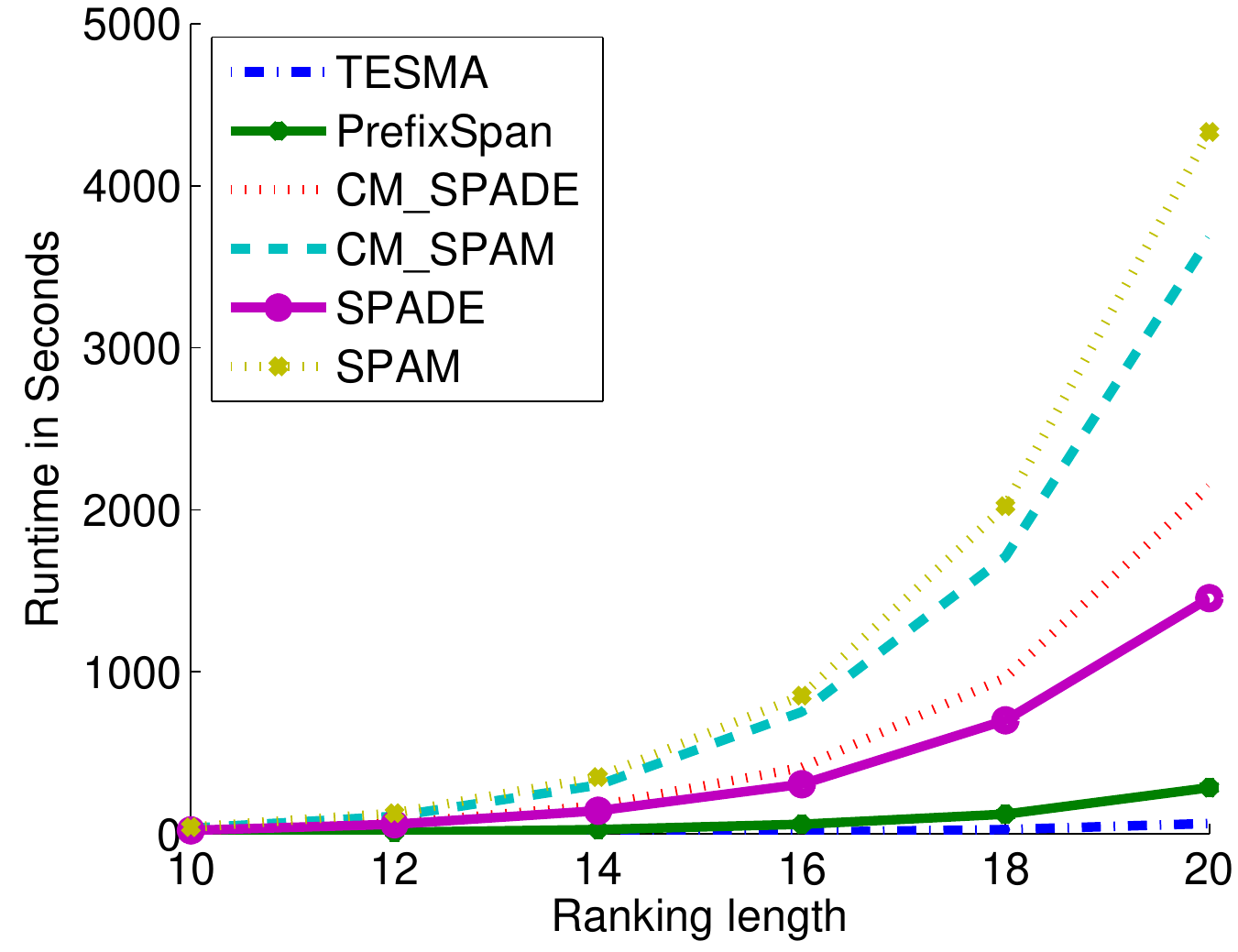}
		}
	\subfloat[]{
		\includegraphics[width=0.45\textwidth]{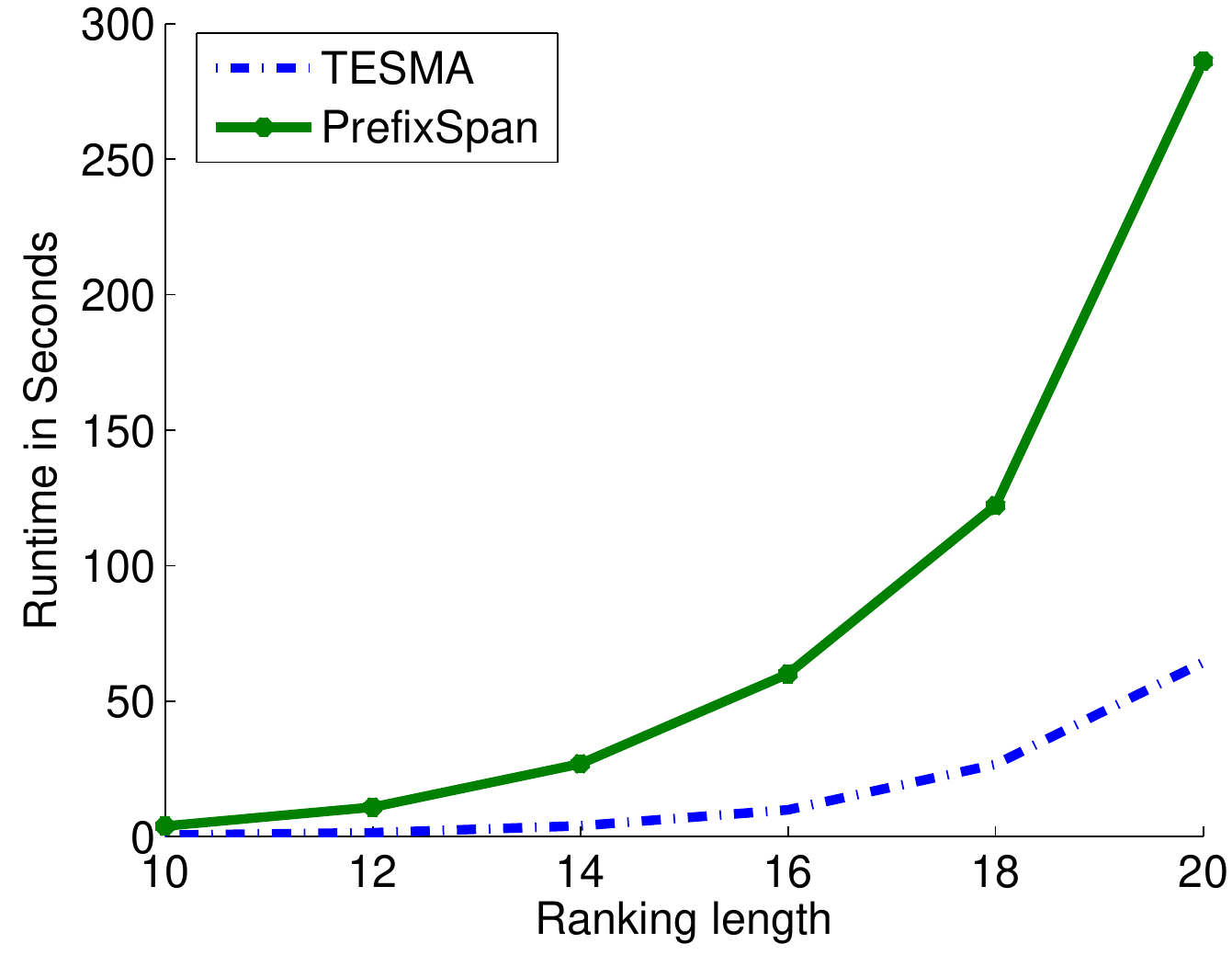}
	}
		\caption{Runtime of TESMA and several sequence miners as a function of the ranking length. On the right, TESMA is compared to the fastest sequence miner found during the experiments.}
		\label{fig:TESMAvsSPMFrunr}
\end{figure}


TESMA is clearly outperforming the other algorithms, both in terms of runtime and memory consumption. 
As expected, the runtime grows linearly with $N$ for all algorithms. The dependence of runtime on $K$ is more difficult to estimate, but definitely much worse---this is unsurprising in light of (\ref{eq:tnor}), showing that the number of potential patterns grows extremely fast with the number of items. 

The memory consumption of TESMA appears to be linear in $N$ and quadratic in $K$. 
For the other algorithms, the picture is less clear, also because the curves are sometimes quite erratic. This might be caused by implementation issues with the SPMF algorithms. For example, if too many objects are produced, the reference of which is lost during execution, a non-deterministic call of the garbage collector may strongly influence the results.


%
%


Similar experiments were conducted to compare GPMiner with closed sequence mining algorithms (see Section \ref{sec:IandSvsR}), again using the SPMF Open-Source Data Mining Library \cite{spmf}. In addition, we included a quite recent and efficient algorithm called CM-ClaSP \cite{FournierViger2014}.
Again, we ran experiments on our semi-synthetic data to compare runtime and memory consumption. This time, however, the dataset size was limited to $N=40000$ (instead of 100000), because many sequence miners ran out of memory for larger $N$. For the same reason, rankings of different length $K$ were produced for a basic dataset of size $N=5000$ (instead of $100000$).

\begin{figure}
	\centering
	\subfloat[]{
		\includegraphics[width=0.45\textwidth]{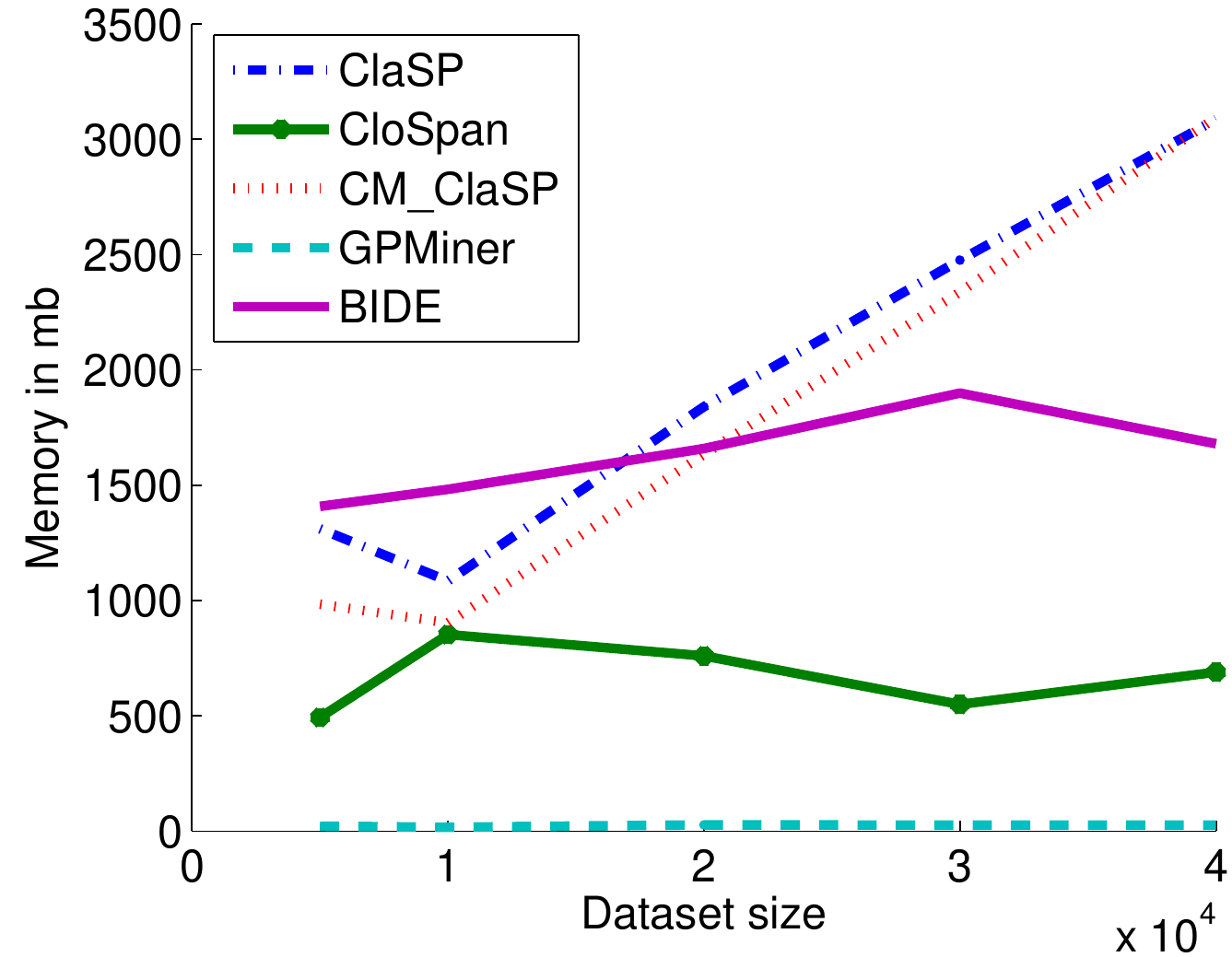}
		}
		\subfloat[]{
		\includegraphics[width=0.45\textwidth]{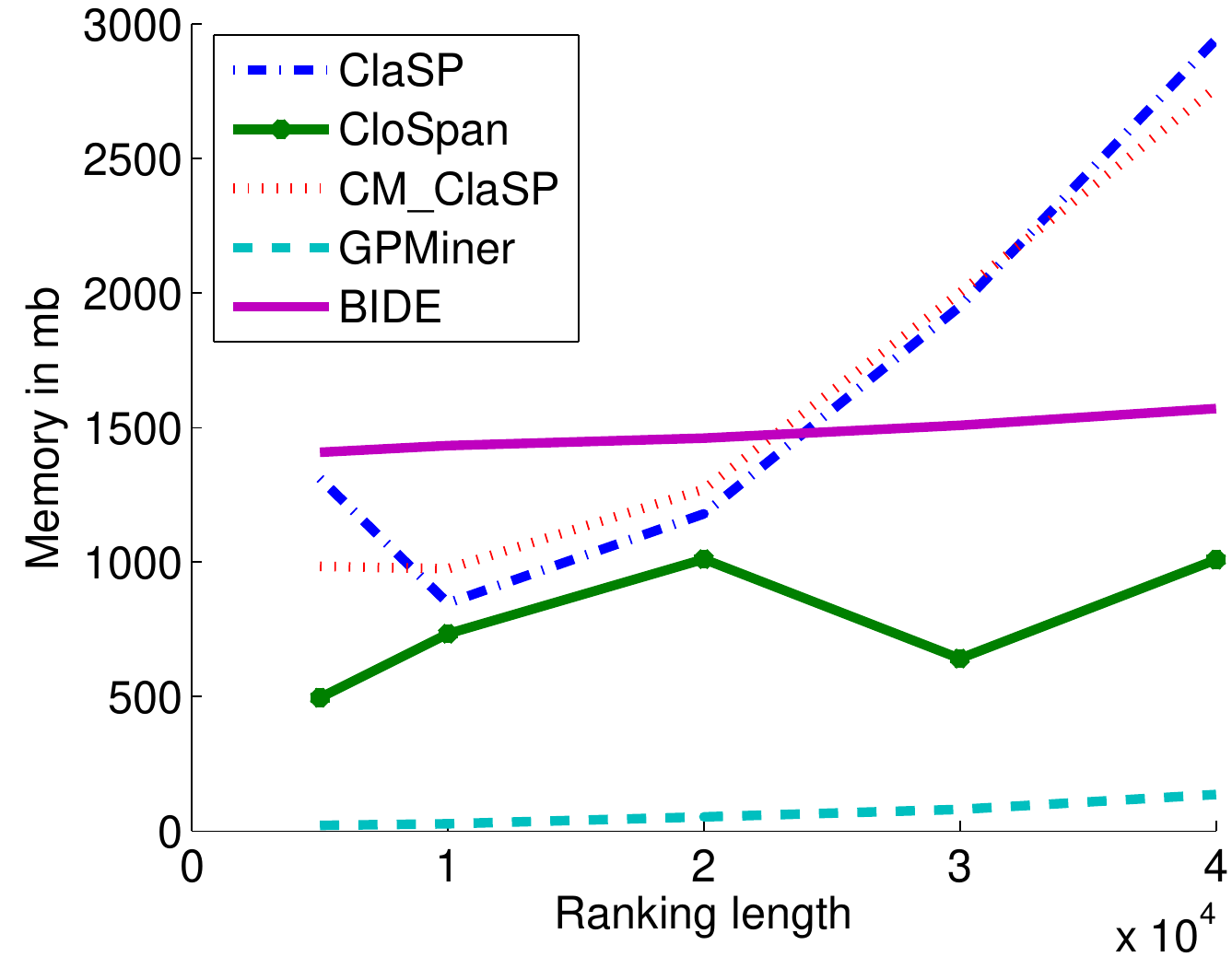}
		}
		\caption{Memory consumption of sequence miners and GPMiner as a function of dataset size (left) and ranking length (right).}
		\label{fig:GPMinerVsSPMFmemIncNr}
\end{figure}

Figure \ref{fig:GPMinerVsSPMFmemIncNr} depicts the memory consumption as a function of $N$. As in the previous experiment, the curves for the (closed) frequent sequence miners do not show any clear trend, probably again due to memoryleaks in the SPMF implementations. In terms of runtime, GPMiner is outperforming all closed sequence miners (Figure \ref{fig:GPMinerVsSPMFrunIncNr}). The algorithm BIDE is excluded from the plot, because it performed orders of magnitude worse than all other algorithms.

\begin{figure}
	\centering
	\subfloat[]{
		\includegraphics[width=0.45\textwidth]{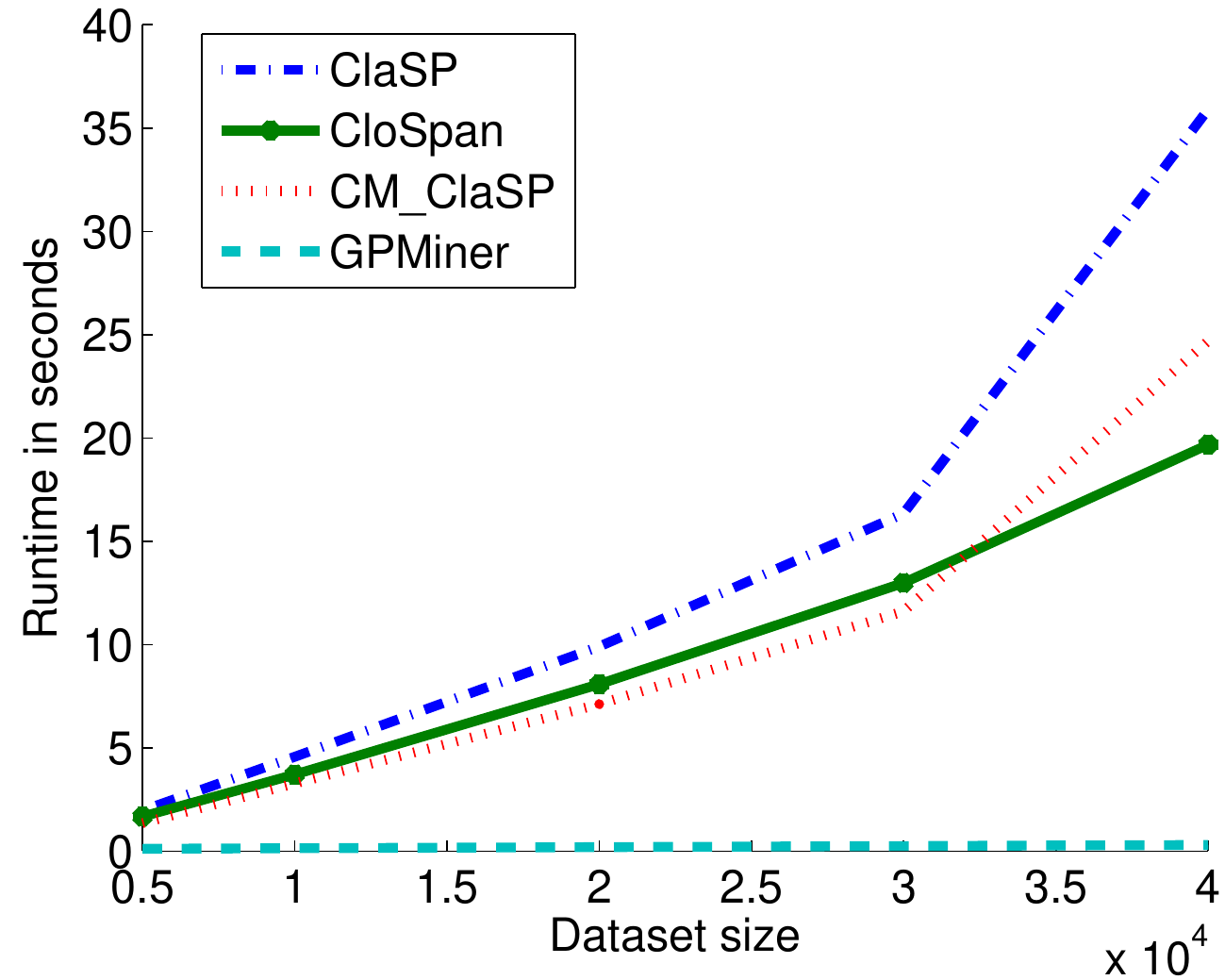}
		}
		\subfloat[]{
		\includegraphics[width=0.45\textwidth]{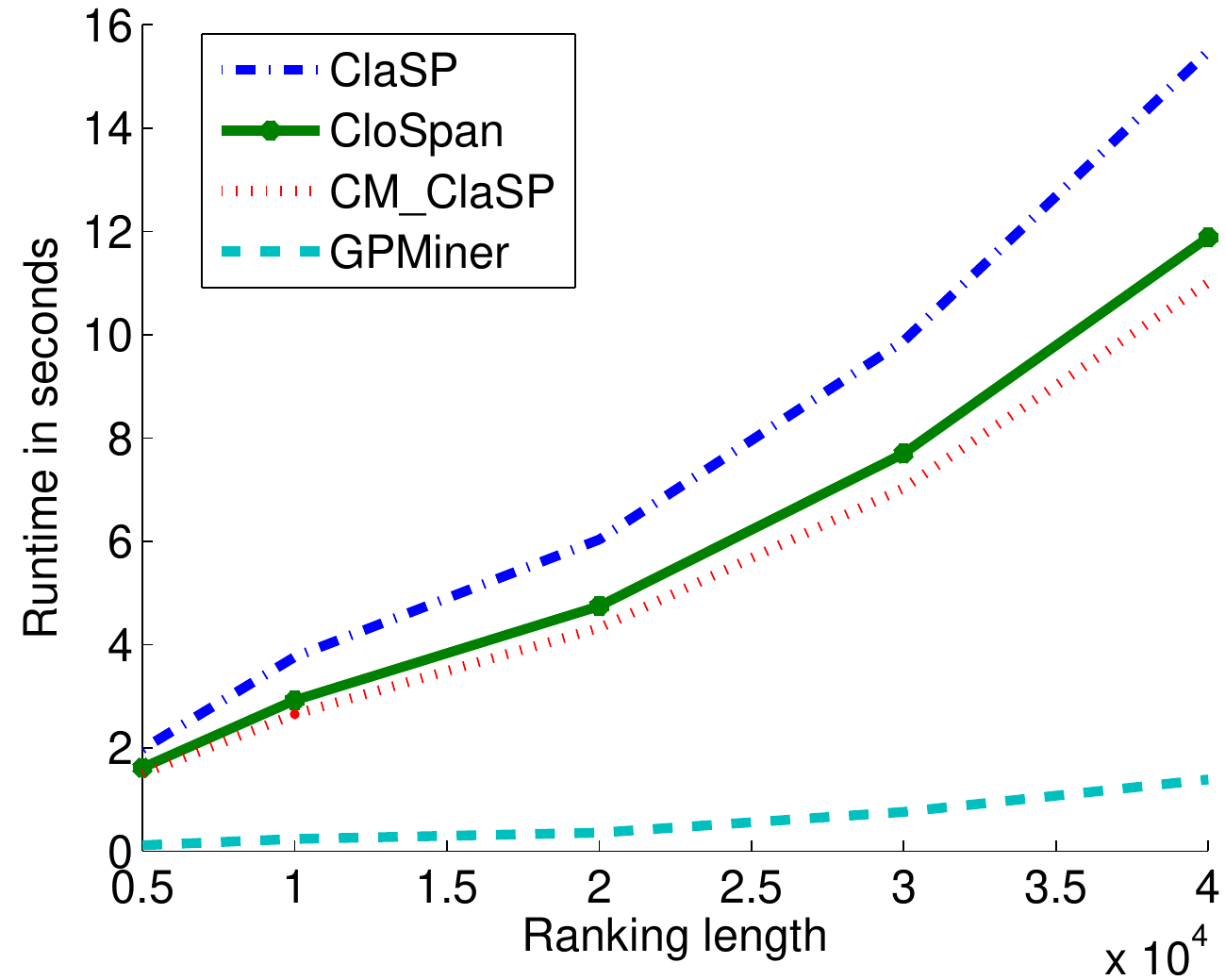}
		}
		\caption{Runtime of sequence miners and GPMiner as a function of dataset size (left) and ranking length (right).}
		\label{fig:GPMinerVsSPMFrunIncNr}
\end{figure}


%
%

\subsection{Association rules for the SUSHI data}

Our last experiment is meant to illustrate the notion of association rules in the context of rank data. To this end, we produced a number of strong associations for the SUSHI dataset that was already mentioned in the introduction. More specifically, association rules were mined with a support threshold of $0.1$ and $0.2$, respectively, and the rules were sorted in decreasing order of their interest (difference between confidence of the rule and confidence of the default rule, i.e., support of the rule consequent).

As can be seen from the top-rules summarized in Table \ref{tab:sushiInterest}, there are indeed several strong associations between preferences, with a confidence larger than 0.9 or an interest of up to $0.59$. To some extent, this is remarkable, because the SUSHI data is known to be highly irregular and in a sense very noisy; for example, predicting complete rankings of individual customers is extremely difficult, and truly accurate predictions are apparently not possible \cite{mpub262}. Preference patterns discovered by our mining algorithms could be useful to make predictions that are at least locally valid. While interesting, elaborating on applications of that kind is certainly beyond the scope of this paper.

It might be noticeable that, in all rules shown in Table \ref{tab:sushiInterest}, the antecedent and consequent part are sharing at least one item. As a possible explanation, note that the more items are shared, the stronger the dependence tends to be (if a transaction covers an antecedent, it is likely to cover the consequent, too), and hence the higher the measure of interest.

\begin{table}
	\centering
		\caption{Best association rules on the SUSHI data in terms of interest (the first 5 rules for a support threshold of $0.1$, the second 5 for a threshold of $0.2$).}
	\label{tab:sushiInterest}
		\begin{tabular}{|c|c|c|}\hline
		rule 										& confidence 	& interest \\\hline
$(8,4,10,5) \entails (8,7,5)$ & (0.871) & (0.597)\\
$(8,3,10,5) \entails (8,9,5)$ & (0.929) & (0.594)\\
$(8,1,10,5) \entails (8,7,5)$ & (0.863) & (0.59)\\
$(8,3,7,6) \entails (8,9,6)$ & (0.878) & (0.589)\\
$(3,1,10,5) \entails (3,7,5)$ & (0.84) & (0.582)\\\hline\hline

$(3,10,5) \entails (9,5)$ & (0.982) & (0.511)\\
$(1,10,5) \entails (7,5)$ & (0.922) & (0.51)\\
$(3,10,5) \entails (7,5)$ & (0.921) & (0.509)\\
$(9,10,5) \entails (7,5)$ & (0.918) & (0.507)\\
$(5,8,9) \entails (5,3)$ & (0.942) & (0.504)\\\hline
		\end{tabular}
\end{table}

\section{Conclusion}
\label{sec:conclusion}

In this paper, we give a motivation for mining rank data and provide a formalization of this problem. Moreover, we position the problem in the context of related work on frequent pattern mining, such as itemset and sequence mining.   

The main contributions of the paper are two algorithms for mining rank data, called TESMA and GPMiner, respectively. While TESMA finds frequent (sub-)rankings, using the standard concept of support, GPMiner seeks to discover rankings that are not only frequent but also closed. Finding closed rankings turns out to be more difficult than finding frequent ones, both theoretically and practically, especially because the commonly used Galois operator can not be applied in its standard form any more. 

To show the efficiency of our algorithms and to get an idea of how they perform in practice, we conducted a number of experimental studies. The use of appropriate synthetic and semi-synthetic data allowed us to control certain properties of the data, such as the ratio between closed and frequent patterns, which is important for comparing TESMA and GPMiner. We also compared our algorithms with existing algorithms for sequence mining, which solve a problem that is in a sense more general than ours. Consequently, they cannot compete with our algorithms, which exploit specific properties of rank data; thanks to this, TESMA and GPMiner achieve substantial speedups in comparison to sequence miners. 

Needless to say, the current paper is only a first step toward a complete methodology for mining rank data. Correspondingly, it can be extended in various directions. What we are concretely planning as next steps is to examine algorithms for mining maximal rankings and data containing rankings with ties. Besides, as already suggested at the end of Section 6, the use of preference mining for supporting (supervised) preference learning is another interesting direction to be explored in future work. 


\end{document}